\DeclareSymbolFont{rsfs}{U}{rsfs}{m}{n}
\DeclareSymbolFontAlphabet{\mathscrsfs}{rsfs}
\def\sm{{\hspace{-0.01cm}-\hspace{-0.01cm}}}
\def\sp{{\hspace{-0.01cm}+\hspace{-0.01cm}}}
\def\sless{{\hspace{-0.008cm}<\hspace{-0.008cm}}}
\def\smore{{\hspace{-0.008cm}>\hspace{-0.008cm}}}
\def\omegaRRR{{\omega_{1}}}
\def\omegaRR{{\omega_{2}}}
\def\beps{{\boldsymbol \varepsilon}}
\def\bX{{\boldsymbol X}}
\def\bx{{\boldsymbol x}}
\def\ba{{\boldsymbol a}}
\def\btheta{{\boldsymbol \theta}}
\def\bTheta{{\boldsymbol \Theta}}
\def\bbeta{{\boldsymbol \beta}}
\def\imagunit{{\mathrm i}}
\def\cuE{\mathscrsfs{E}}
\def\cuB{\mathscrsfs{B}}
\def\cuV{\mathscrsfs{V}}
\def\cuD{\mathscrsfs{D}}
\def\reals{{\mathbb R}}
\def\ratio{{\zeta}}
\def\ob{\mu}
\def\Unif{{\rm Unif}}
\def\normf{F}
\def\lsamp{\mbox{\tiny\rm lsamp}}
\def\wide{\mbox{\tiny\rm wide}}
\def\rless{\mbox{\tiny\rm rless}}
\def\sNL{\mbox{\tiny\rm NL}}
\def\olambda{\overline{\lambda}}
\def\cuE{\mathscrsfs{E}}
\def\cuB{\mathscrsfs{B}}
\def\cuV{\mathscrsfs{V}}
\DeclareMathOperator{\EX}{\mathbb{E}}% expected value
\begin{document}

\title{Optimal Activation Functions for the \\ Random Features Regression Model}

% The \author macro works with any number of authors. There are two commands
% used to separate the names and addresses of multiple authors: \And and \AND.
%
% Using \And between authors leaves it to LaTeX to determine where to break the
% lines. Using \AND forces a line break at that point. So, if LaTeX puts 3 of 4
% authors names on the first line, and the last on the second line, try using
% \AND instead of \And before the third author name.

\author{%
  Jianxin Wang $^*$\\
  Department of Electrical and Computer Engineering \\
  Rice University\\
  \url{jw162@rice.edu}
  % examples of more authors
  \And
  José Bento \\
  Department of Computer Science\\
  Boston College\\
  \url{bentoayr@bc.edu}
  
  % \texttt{email} \\
  % \AND
  % Coauthor \\
  % Affiliation \\
  % Address \\
  % \texttt{email} \\
  % \And
  % Coauthor \\
  % Affiliation \\
  % Address \\
  % \texttt{email} \\
  % \And
  % Coauthor \\
  % Affiliation \\
  % Address \\
  % \texttt{email} \\
}

\maketitle
\renewcommand{\thefootnote}{\fnsymbol{footnote}}
\footnotetext[1]{Work done during undergrad at Boston College.}
\renewcommand{\thefootnote}{\arabic{footnote}}

\begin{abstract}
  The asymptotic mean squared test error and sensitivity of the Random Features Regression model (RFR) have been recently studied. We build on this work and identify in closed-form the family of Activation Functions (AFs) that minimize a combination of the test error and sensitivity of the RFR under different notions of functional parsimony. We find scenarios under which the optimal AFs are linear, saturated linear functions, or expressible in terms of Hermite polynomials. Finally, we show how using optimal AFs impacts well established properties of the RFR model, such as its double descent curve, and the dependency of its optimal regularization parameter on the observation noise level.
  \end{abstract}

%\vspace{-0.7cm}
\section{Introduction}\label{sec:intro}

\vspace{-0.2cm}
For many neural network (NN) architectures,  the test error does not monotonically increase as a model's complexity increases but can go down with the training error both at low and high complexity levels.
This phenomenon, the \emph{double descent curve}, defies intuition and has motivated new frameworks to explain it.
Explanations have been advanced involving linear regression with random covariates \citep{belkin2020two,hastie2022surprises}, kernel regression  \citep{belkin2019does,liang2020just}, the \emph{neural tangent kernel} model \citep{jacot2018neural}, and the \emph{Random Features Regression} (RFR) model \citep{andrea1}.
These frameworks allow queries beyond the generalization power of NNs. For example, they have been used to study networks' robustness properties \citep{hassani2022curse,tripuraneni2021overparameterization}.

One aspect within reach and unstudied to this day is finding
optimal Activation Functions (AFs) for these models. It is known that AFs  affect a network's approximation accuracy and efforts to optimize AFs have been undertaken.
Previous work has justified the choice of AFs empirically, e.g., \cite{ramachandran2017searching}, or provided numerical procedures to learn AF parameters, sometimes jointly with  models' parameters, e.g. \cite{unser2019representer}. See  \cite{rasamoelina2020review} for commonly used AFs and Appendix \ref{sec:related_work} for how AFs have been previously derived.

We derive for the first time closed-form optimal AFs such that an explicit objective function involving the asymptotic test error and sensitivity of a model is minimized. Setting aside empirical and principled but numerical methods, all past principled and analytical approaches to design AFs focus on non accuracy related considerations, e.g. \cite{milletari2019mean}.
We focus on AFs for the RFR model and expand its  understanding. We preview a few surprising conclusions extracted from our main results:
\vspace{-0.2cm}
\begin{enumerate}[leftmargin=*]
 \setlength{\itemsep}{1pt}
  \setlength{\parskip}{0pt}
  \setlength{\parsep}{0pt}
    \item The optimal AF can be linear, in which case the RFR model is a linear model. For example, if no regularization is used for training, and for low complexity models, a linear AF is often preferred if we want to minimize test error. For high complexity models a non-linear AF is often better;
    \item A linear optimal AF can destroy the double descent curve behaviour and achieve small test error with much fewer samples than e.g. a ReLU;
    \item When, apart from the test error, the sensitivity of a model becomes important, optimal AFs that without sensitivity considerations were linear can become non-linear, and vice-versa;
    \item Using an optimal AF with an arbitrary regularization during training can lead to the same, or better, test error as using a non-optimal AF, e.g. ReLU, and optimal
    regularization.
\end{enumerate}

%\vspace{-0.1cm}
\subsection{Problem set up}
\vspace{-0.2cm}
We consider the effect of AFs on finding an approximation $f$ to a square-integrable function $f_d$ on the
$d$-dimensional sphere $\mathbb{S}^{d-1}(\sqrt{d})$, the  function $f_d$ having been randomly generated.
The approximation $f$ is to be learnt from training data 
$\mathcal{D} = \{{\bx}_i,y_i\}^n_{i=1}$ where ${\bx}_i \in \mathbb{S}^{d-1}(\sqrt{d})$, the variables $\{\bx_i\}^n_{i=1}$ are i.i.d. uniformly sampled from $\mathbb{S}^{d-1}(\sqrt{d})$,
and $y_i = f_d(\bx_i) + \epsilon_i$, where the noise variables $\{\epsilon_i\}^n_{i=1}$ are i.i.d. with $\EX( \epsilon_i ) = 0, \EX( \epsilon^2_i ) = \tau^2$, and $\EX( \epsilon^4_i ) < \infty$. 

The approximation $f$ is defined according to the RFR model.
The RFR model can be viewed as a two-layer NN with random first-layer weights encoded by a matrix $\bTheta \in \mathbb{R}^{N  \times d}$ with $i$th row
$\btheta_{i} \in \mathbb{R}^d$ satisfying $\|\btheta_i\|=\sqrt{d}$, with $\{\btheta_i\}$ i.i.d. uniform on $\mathbb{S}^{d-1}(\sqrt{d})$, and with to-be-learnt second-layer weights encoded by a vector $\ba = [a_i]^N_{i=1} = \mathbb{R}^N$. Unless specified otherwise, the norm $\|\cdot\|$ denotes the Euclidean norm. 
The RFR model defines  $f_{\ba,\bTheta}:\mathbb{S}^{d-1}(\sqrt{d})\mapsto \mathbb{R}$ such that
\begin{align}\label{eq:RFR_model_1}
f_{\ba,\bTheta}(\bx) = \sum_{i=1}^N a_i \sigma(\langle \btheta_i, \bx\rangle/\sqrt{d}).
\end{align}
where $\sigma(\cdot)$ is the AF that is the target of our study and $\langle x,y\rangle$ denotes the inner product between vectors $x$ and $y$.
When clear from the context, we write $f_{\ba,\bTheta}$ as $f$, omitting the model's parameters.
The optimal weights $\ba^\star$ are learnt using ridge regression with regularization parameter $\lambda \geq 0$, namely,
\begin{align} \label{eq:RFR_model_2}
\ba^\star = \ba^\star(\lambda,\mathcal{D}) =
\arg\min_{\ba\in\reals^N} \left\{\frac{1}{n}\sum_{j=1}^n \Big( y_j- \sum_{i=1}^N a_i \sigma(\langle \btheta_i, \bx_j\rangle / \sqrt d) \Big)^2  +  \frac{N\lambda}{d}\, \| \ba \|^2\right\}. \end{align}

\vspace{-0.2cm}
{\bf We will tackle this question}: 
{\it What is the simplest $\sigma$ that leads to the best approximation of $f_d$?}
%\end{center}

We quantify the simplicity of an AF $\sigma$ with its norm in different functional spaces. Namely, either
\noindent\begin{minipage}{0.5\linewidth}
\begin{equation}\label{eq:norm_def_1}
  \|\sigma\|_1  \triangleq \EX(| \sigma'(Z))| ),  \quad \quad \text{ or }
\end{equation}
\end{minipage}%
\begin{minipage}{0.5\linewidth}
\begin{equation}
  \|\sigma\|_2  \triangleq \sqrt{\EX(( \sigma'(Z))^2 )},\label{eq:norm_def_2}
\end{equation}
\end{minipage}\par\vspace{\belowdisplayskip}
\vspace{-0.2cm}
where $\sigma'$ is the derivative of $\sigma$ and the expectations are with respected to a normal random variable $Z$ with zero mean and unit variance, i.e. $Z$ $\sim$ $\mathcal{N}(0, 1)$. For a comment on these choices please read Appendix \ref{app:comment_on_metric_choice}.
We quantify the quality with which $f=f_{\ba^\star,\Theta}$ approximates $f_d$ via $L$, a linear combination  of the mean squared error and the sensitivity of $f$ to perturbations in its input. For $\alpha \in [0,1]$, $\bx$ uniform on $\mathbb{S}^{d-1}(\sqrt{d})$, we define 
\vspace{-0.3cm}

\noindent\begin{minipage}{0.25\linewidth}
\begin{equation}\label{eq:general_definition_of_objective}
  L \triangleq (1-\alpha)\mathcal{E} + \alpha \mathcal{S},
\end{equation}
\end{minipage}%
\begin{minipage}{0.4\linewidth}
\begin{equation}\label{eq:test_error}
  \text{ where } \mathcal{E} \triangleq \EX((f(\bx) - f_d(\bx))^2),
\end{equation}
\end{minipage}
\begin{minipage}{0.35\linewidth}
\begin{equation}
  \text{ and } \mathcal{S} \triangleq  \|\EX \nabla_{\bx} f(\bx)\|^2.\label{eq:sensitivity}
\end{equation}
\end{minipage}\par\vspace{\belowdisplayskip}
\vspace{-0.2cm}
See Appendix \ref{app:comment_on_sensitivity} for a comment on our choice for sensitivity.

%Our sensitivity $\mathcal{S}$ is an lower bound for the perhaps more natural $\EX( \|\nabla_x f(\bx)\|^2)$, whose use would make our analysis much more complicated. 

%\vspace{-0.2cm}
Like in \cite{andrea1,andrea2}, we operate in the \emph{asymptotic proportional regime} where $n,d,N \to\infty$, and have constant ratios between them, namely, $N/d \to \psi_1$ and $n/d \to \psi_2$. 
In this asymptotic setting, it does not matter if in defining \eqref{eq:test_error} and \eqref{eq:sensitivity}, in addition to taking the expectation with respect to the test data $\bx$, independently of $\mathcal{D}$, we also take expectations over $\mathcal{D}$ and the random features in RFR. This is because when $n,d,N \to\infty$ with the ratios defined above, $\mathcal{E}$ and $\mathcal{S}$ will concentrate around their means \citep{andrea1,andrea2}.

Mathematically, denoting by $\|\sigma\|$ either \eqref{eq:norm_def_2} or \eqref{eq:norm_def_1}, our goal is to study the solutions of the problem
\begin{equation} \label{eq:bilevel_formulation_of_our_optimization_problem}
\min_{\sigma^\star} \|\sigma^\star\| \text{ subject to } \sigma^\star \in \arg \min_{\sigma} L(\sigma).
\end{equation}
Notice that the outer optimization only affects the selection of optimal AF in so far as the inner optimization does not uniquely define $\sigma^\star$, which, as we will later see, it does not.

To the best of our knowledge, no prior theoretical work exists on how optimal AFs affect performance guarantees. We review literature review on non-theoretical works on the design of AFs, and a work studying the RFR model for purposes other than the design of AFs in Appendix \ref{sec:related_work}.

\vspace{-0.3cm}
\section{Background on the asymptotic properties of the RFR model} \label{sec:background_on_RFR_model}
\vspace{-0.3cm}

Here we will review recently derived closed-form expressions for the asymptotic mean squared error and sensitivity of the RFR model, which are the starting point of our work. First, however, we explain the use-inspired reasons for our setup. Our assumptions are the same as, or very similar to, those of  published theoretical papers, e.g. \cite{jacot2018neural,yang2021exact,ghorbani2021linearized,mel2022anisotropic}.
%
%Although theoretical progress cannot be made otherwise yet, this not diminish our contributions, which are novel and impactful.
%
\vspace{-0.1cm}
\begin{enumerate}[leftmargin=*]
 \setlength{\itemsep}{1pt}
  \setlength{\parskip}{0pt}
  \setlength{\parsep}{0pt}
    \item {\bf Data on a sphere}: Normalization of input data is a best practice when learning with NNs \citep{huang2020normalization}. Assuming that input data lives on a sphere is one type of normalization.
    \item {\bf Random features}: The seminal work of \cite{NIPS2007_013a006f} showed the success of using random features on real datasets. For a recent review on their use see \cite{cao2018review}.
    \item {\bf Asymptotic setting}: \cite{andrea1} empirically showed that the convergence to the asymptotic regime is relatively fast, even with just a few hundreds of dimensions. Most real world applications involve larger dimensions $d$, lots of data $n$, and lots of neurons $N$.
    \item {\bf Shallow architecture}: For a finite input dimension $d$, the RFR model can learn arbitrary functions as the number of features $N$ grows large \citep{bach2017equivalence,rahimi2007random,ghorbani2021linearized}. Existing proof techniques make it very hard yet to extend our type of analysis to more than two layers or complex architectures. A few papers consider models with depth > 2 but do not tackle our problem and have other heavy restrictions on the model, e.g. \cite{pennington2018emergence}. 
    \item {\bf Regularization}: Using regularization during training to control the weights' magnitude is common. It can help convergence speed and generalization error \citep{goodfellow2016deep}. For a review on different types of regularization for learning with NNs see \cite{kukavcka2017regularization}.
\end{enumerate}
\vspace{-0.1cm}
We make the following assumptions, which we assume hold in the theorems in this section.

\begin{assumption}\label{ass:1}
We assume that the AF $\sigma$ is weakly differentiable with weak derivative $\sigma'$, it satisfies $|\sigma(u)|,|\sigma'(u)| \leq c_0 e^{c_1 |u|} \forall u \in \mathbb{R}$ for some constants $0<c_0,c_1 < \infty$, and that it also satisfies
\begin{equation} \label{eq:def_of_mu_0_mu_1_mu_2_mu_star_zeta}
\mu_{0} = \EX\{\sigma(Z)\}, \hspace{0.4cm} \mu_{1} = \EX\{Z\sigma(Z)\}, \hspace{0.4 cm} \mu_{2} = \EX\{\sigma(Z)^2\}, \hspace{0.4 cm} \mu_{\star}^2 = \mu_{2} - \mu_{0}^2 - \mu_{1}^2, \hspace{0.4 cm} \zeta = {\mu_{1}}/{\mu_{\star}},
\end{equation}
for some $\mu_0,\mu_1,\mu_2 \in\mathbb{R}$, where the expectations are with respect to $Z$ $\sim$ $\mathcal{N}(0, 1)$. 
\end{assumption}

\begin{assumption}
We assume that $N = N(d)$ and $n = n(d)$ such that the following limits exist in $(0,\infty)$: $\lim_{d\to\infty} N(d)/d = \psi_1$ and $\lim_{d\to\infty} n(d)/d = \psi_2$.
\end{assumption}

\begin{assumption}\label{ass:3}
We assume that $y_i=f_d(\bx_i)+\epsilon_i$, where $\{\epsilon_i\}_{i\le n} \sim_{i.i.d.} \mathbb{P}_\epsilon$ are independent of $\{\bx_i\}_{i\le n}$,
 with $\E(\epsilon_1) = 0$, $\E_{}(\epsilon_1^2) = \tau^2$, $\E_{}(\epsilon_1^4) < \infty$, expectations with respect to $\{\epsilon_i\}$. Furthermore,
\begin{align}
 f_d(\bx) =&~ \beta_{d, 0} +\<\bbeta_{d, 1},\bx\> + f_d^{\sNL}(\bx)\, , 
\end{align}
where $\beta_{d, 0}\in\reals$, $\bbeta_{d, 1} \in\reals^d$ are deterministic with  $\lim_{d\to\infty}\beta_{d, 0}^2 =F_0^2$, $\lim_{d\to\infty}\|\bbeta_{d, 1}\|_2^2=F_1^2 > 0$.
The non-linear $f_d^{\sNL}$ is a centered Gaussian process indexed by $\bx \in \S^{d-1}(\sqrt d)$, with covariance 
\begin{align}
\E_{f_d^{\sNL}}\{f_d^{\sNL}(\bx_1) f_d^{\sNL}(\bx_2)\} = \Sigma_d(\<\bx_1,\bx_2\> / d) ,
\end{align}
where $\Sigma_d(\cdot)$ 
satisfies $\E_{\bx \sim \Unif(\S^{d-1}(\sqrt d))} \{\Sigma_d( x_1 / \sqrt d)\}= 0$, $\E_{\bx \sim \Unif(\S^{d-1}(\sqrt d))}\{\Sigma_d( x_1 / \sqrt d) x_1 \}=0$, where $x_1$ is the $1$st component of $\bx$.
We define the Signal to Noise Ratio (SNR) $\rho$ by
\begin{align} \label{eq:def_of_signal_to_noise_ratio_and_F*}
\rho= \normf_1^2/(\normf_\star^2 + \tau^2) , \text{ where } \normf_\star^2 \triangleq \lim_{d \to \infty} \Sigma_d(1). 
\end{align}
\end{assumption}

\vspace{-0.5cm}
Informally, $\mu_\star$ quantifies how non-linear the AF is (cf. Lemma \ref{th:conditions_for_linearity}), $\psi_1$ quantifies the complexity of the RFR model relative of the dimension $d$, $\psi_2$ quantifies the amount of data used for training relative to $d$, $\tau^2$ is the variance of the observation noise, $F_1$ is the magnitude of the linear component of our target function $f_d$, which is controlled by $\beta_{d,1}$, $F_\star$ is the magnitude of the non-linear component $f_d^{\sNL}$ in the target function, and $\rho$ is the ratio between the magnitude of the linear component and the magnitude of all of the sources of randomness in the noisy function $f_d + \epsilon$. Recall that all of our results will be derived in the asymptotic regime when $d\to\infty$.

Our contributions are divided into two parts, Section \ref{sec:optimal_activation_functions} and Section \ref{sec:optimal_mu0_mu1_mu2}.
The theorems' statements in Section \ref{sec:optimal_mu0_mu1_mu2} quickly get prohibitively complex as they are stated more generally, with lots of special cases having to be discussed. Hence, in Section \ref{sec:optimal_mu0_mu1_mu2} we display our analysis on the following three different important regimes: $R_1$: \emph{Ridgeless limit} regime,  when $\lambda \rightarrow 0^+$; $R_2$: \emph{Highly overparameterized limit}, when $\psi_1 \rightarrow \infty$; $R_3$: \emph{Large sample limit}, when $\psi_2 \rightarrow \infty$.
Section \ref{sec:optimal_activation_functions}'s results are general and not restricted to these regimes.
In the context of the RFR model, these regimes were introduced and discussed in \cite{andrea1}.
For what follows we define $\olambda \triangleq \lambda/\mu^2_\star$. Any ``$\lim_{d\to\infty} X = Y$'' should be interpreted as $X$ converging to $Y$ in probability with respect to the  training data $\mathcal{D}$, the random features $\bTheta$, and the random target $f_d$ as $d \to \infty$.

\vspace{-0.3cm}
\subsection{Asymptotic mean squared test error of the RFR model} \label{sec:background_mse}
\vspace{-0.2cm}
The following theorems are a specialization of a more general theorem, Theorem \ref{th:main_theorem_test_error}  \cite{andrea1}, which we include in the Appendix \ref{app:general_theorems_error_and_sensitivity} for completeness.

\begin{theorem}[Theorem 3 \cite{andrea1}] \label{th:error_special_case_1}
 The asymptotic test error \eqref{eq:test_error} for regime $R_1$ equals
\begin{equation} \label{eq:E_formula_for_special_case_1}
    \mathcal{E}^{\infty}_{R_1} \equiv \lim_{\lambda \to 0^+} \lim_{d \to \infty} \mathcal{E} = \normf_1^2 \cuB_{\rless}(\ratio, \psi_1, \psi_2)+ (\tau^2 + \normf_\star^2) \cuV_{\rless}(\ratio, \psi_1, \psi_2)
+ \normf_\star^2,
\end{equation}
where
$
\cuB_{\rless}(\ratio, \psi_1, \psi_2)\equiv~  \cuE_{1, \rless} / \cuE_{0, \rless}$\, ,$
\cuV_{\rless}(\ratio, \psi_1, \psi_2)\equiv~  \cuE_{2, \rless} / \cuE_{0, \rless}$, and the functions $\cuE_{0, \rless}, \cuE_{1, \rless}$ and $\cuE_{2, \rless}$ are polynomials that are functions of $\zeta^2$, $\psi_1, \psi_2$ and $\chi$, where $\chi$ is a function of $\psi \equiv \min\{ \psi_1, \psi_2\}$ and $\zeta^2$. See Appendix \ref{app:details_for_theorem_1_background} for details.
\end{theorem}
\begin{remark}\label{remark:double_descent}
As a function of $\psi_1$,
$\mathcal{E}^{\infty}_{R_1}$ has a discontinuity at $\psi_1 = \psi_2$ called the \emph{interpolation threshold}. 
For $\psi_2$ high enough, and for $\psi_1 < \psi_2$,
$\mathcal{E}^{\infty}_{R_1}$ decreases, reaches a minimum and then explodes approaching $\psi_2$. However, past $\psi_2$, $\mathcal{E}^{\infty}_{R_1}$ decreases again with $\psi_1$. This \emph{double descent} behavior has been observed/studied in many settings, including \cite{andrea1} and references therein.
\end{remark}

\begin{theorem}[Theorem 4 \cite{andrea1}] \label{th:error_special_case_R2}
The asymptotic test error \eqref{eq:test_error} for regime $R_2$ equals
\begin{equation}
    \mathcal{E}^{\infty}_{R_2}\equiv \lim_{\psi_1 \to \infty} \lim_{d \to \infty} \mathcal{E} = \normf_1^2 \cuB_{\wide}(\ratio,  \psi_2, \olambda)+ (\tau^2 + \normf_\star^2) \cuV_{\wide}(\ratio,  \psi_2, \olambda)
+ \normf_\star^2,
\end{equation}
\vspace{0.4cm}
where $\cuB_{\wide}$ and $\cuV_{\wide}$ are defined in Appendix \ref{app:details_for_theorem_2_background}
\end{theorem}

\vspace{-0.5cm}
\begin{theorem}[Theorem 5 \cite{andrea1}] \label{th:error_special_case_R3}
The asymptotic test error \eqref{eq:test_error} for regime $R_3$ equals
\begin{align}
 \mathcal{E}^{\infty}_{R_3} \equiv
 \lim_{\psi_2 \to \infty} \lim_{d \to \infty}  \mathcal{E} = \normf_1^2 \cuB_{\lsamp}(\ratio, \psi_1,\lambda/\ob_\star^2) + \normf_\star^2\,  \label{eq:LSampleStatement}
\end{align}
\vspace{0.4cm}
where $\cuB_{\lsamp}(\ratio, \psi_1,\lambda/\ob_\star^2)$ is defined in Appendix \ref{app:details_for_theorem_3_background}
\end{theorem}

\begin{comment}
$(\sigma'(Wx))_i = \sigma'(\sum_j W_{ij} x_j) \approx \mathbb{E}_x(\sigma'(\sum_j W_{ij} x_j))$
\end{comment}

\vspace{-0.7cm}
\subsection{Asymptotic sensitivity of the RFR model} \label{sec:sensitivity_properties}
\vspace{-0.2cm}
We derive a sensitivity formula for regimes $R_1,R_2,R_3$.
Our theorems are a specialization (proofs in Appendix \ref{app:proofs}) of  the more general Theorem \ref{th:general_sensitivity} that we include in the Appendix \ref{app:general_theorems_error_and_sensitivity} for completeness.

\begin{theorem}\label{th:sensitivity_special_case_1}
 The sensitivity \eqref{eq:sensitivity} for  regime $R_1$ equals
\begin{equation}\label{eq:S_formula_for_special_case_1}
      \mathcal{S}^{\infty}_{R_1} \equiv \lim_{\lambda \rightarrow 0^+} \lim_{d \to \infty} \mathcal{S} = %\frac{\mu^2_1 }{\mu_\star^2 }  
      \zeta^2 \left(  \frac{F^2_1 \cuD_{1,{\rless}}(\ratio,\psi_1,\psi_2)}{(\chi\ratio^2-1)\cuD_{0, {\rless}}(\ratio,\psi_1,\psi_2)}+ \frac{(F^2_\star+\tau^2)\cuD_{2, {\rless}}(\ratio,\psi_1,\psi_2)}{
    \cuD_{0, {\rless}}(\ratio,\psi_1,\psi_2)}\right),
\end{equation}
where $\cuD_{0, \rless}(\ratio,\psi_1,\psi_2)$, $\cuD_{1, \rless}(\ratio,\psi_1,\psi_2)$, and $\cuD_{2, \rless}(\ratio,\psi_1,\psi_2)$ are polynomials  found in App. \ref{app:details_sensitivity_1}.
\end{theorem}

\begin{theorem}\label{th:sensitivity_special_case_2}
Let $\omegaRR$ equal \eqref{th:th_eps_spe_case_2_w_2}, defined in Appendix \ref{app:details_for_theorem_2_background}. %and $\WRR \equiv \frac{\omegaRR}{\zeta^2}$.
The sensitivity \eqref{eq:sensitivity} for  regime $R_2$ equals
\begin{align}
\mathcal{S}^{\infty}_{R_2} \equiv \lim_{\psi_1 \rightarrow \infty}  \lim_{d \to \infty} \mathcal{S} =  \frac{\omegaRR^2 ((F^2_\star+\tau^2)(-1+\omegaRR)+F^2_1(-1-\psi_2+\omegaRR(-1+\psi_2)))}{(-1 + \omegaRR) (\psi_2 - 2 \omegaRR \psi_2  + 
   \omegaRR^2 (-1 + \psi_2))}. 
\end{align}
\end{theorem}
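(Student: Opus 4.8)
The plan is to obtain $\mathcal{S}^{\infty}_{R_2}$ as the $\psi_1\to\infty$ specialization of the general sensitivity formula of Theorem~\ref{th:general_sensitivity}, exactly paralleling the way Theorem~\ref{th:error_special_case_R2} is extracted from the general error formula. Theorem~\ref{th:general_sensitivity} expresses $\mathcal{S}^{\infty}\equiv\lim_{d\to\infty}\mathcal{S}$ as an explicit rational function of the self-consistent (fixed-point) parameters of the RFR resolvent --- call them $\nu_1,\nu_2$ --- together with $\psi_1,\psi_2,\olambda,\ratio$ and the signal/noise scales $F_1,F_\star,\tau$, carrying an overall $\mu_1^2/\mu_\star^2=\ratio^2$ factor that comes from the first Hermite coefficient of $\sigma'$ (the same $\ratio^2$ prefactor visible in Theorem~\ref{th:sensitivity_special_case_1}). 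So the first step is to write down that formula together with the system of equations pinning down $(\nu_1,\nu_2)$.

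First I would analyze the fixed-point system as $\psi_1\to\infty$. In this limit the feature-side equation degenerates: the number of random features dwarfs both $d$ and $n$, so the parameter controlling the feature direction saturates and the coupled system collapses to a single quadratic in the remaining parameter. I expect this quadratic to be precisely the one whose admissible root is $\omegaRR$ of \eqref{th:th_eps_spe_case_2_w_2} --- the very same reduction already performed for the error in Theorem~\ref{th:error_special_case_R2}, so I would reuse that identification rather than rederive it. Care is needed to select the correct branch of the square root (fixing the sign by matching the $\olambda\to\infty$ and $\ratio\to 0$ asymptotics, where $\mathcal{S}$ must vanish or stay finite) and to confirm that the limiting parameter is the analytic continuation of the finite-$\psi_1$ solution.

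Next I would substitute the limiting values of $(\nu_1,\nu_2)$, now expressed through $\omegaRR$, into the general rational expression, separate the contribution proportional to $F_1^2$ from the one proportional to $F_\star^2+\tau^2$, and simplify. Collecting over a common denominator should produce the factor $(-1+\omegaRR)$ and the quadratic-in-$\omegaRR$ combination $\psi_2-2\omegaRR\psi_2+\omegaRR^2(-1+\psi_2)$ in the denominator, and the numerator $\omegaRR^2[(F_\star^2+\tau^2)(-1+\omegaRR)+F_1^2(-1-\psi_2+\omegaRR(-1+\psi_2))]$ displayed in the statement. Throughout, the $\ratio^2$ prefactor must be tracked and shown to be absorbed once the $\psi_1\to\infty$ limits of the other factors (which carry compensating powers of $\ratio$ through $\omegaRR$) are taken, so that the final answer depends on $\ratio$ only through $\omegaRR$.

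The main obstacle is the asymptotic analysis of the coupled fixed-point system: establishing that the limit $\psi_1\to\infty$ exists, is taken after $d\to\infty$ in the sense of the iterated limit in the statement, and reduces the system to the quadratic defining $\omegaRR$ on the correct branch. Once that reduction is in hand, the remaining work is a bookkeeping-heavy but routine rational-function simplification; the only genuinely delicate algebraic point is verifying that the $\mu_1^2/\mu_\star^2$ scaling combines correctly with the $\psi_1\to\infty$ limits of the other factors so that no residual explicit $\ratio^2$ survives outside $\omegaRR$, matching the displayed closed form.
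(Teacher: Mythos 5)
Your proposal follows exactly the paper's route: the paper's entire proof is the one-line statement that the result ``follows directly from Theorem \ref{th:general_sensitivity} by taking the limit when $\psi_1 \to +\infty$,'' which is precisely the specialization you describe (your additional detail about the fixed-point reduction to $\omegaRR$, branch selection, and tracking the $\mu_1^2/\mu_\star^2$ prefactor is a correct and more explicit account of what that limit entails). No gap; same approach.
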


\begin{theorem}\label{th:sensitivity_special_case_3}
Let $\omegaRRR$ equal \eqref{eq:th_E_spec_case_2_w_1}, defined in Appendix \ref{app:details_for_theorem_3_background}..
The sensitivity \eqref{eq:sensitivity} for  regime $R_3$ equals
\begin{align}
\mathcal{S}^{\infty}_{R_3} \equiv \lim_{\psi_2 \rightarrow \infty}  \lim_{d \to \infty} \mathcal{S} = F_1^2 (1+({2}/({-1 + \omegaRRR})) + ({\psi_1}/({\psi_1 - 2 \omegaRRR \psi_1  + \omegaRRR^2 (-1 + \psi_1)}))).
\end{align}
\end{theorem}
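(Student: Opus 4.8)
The plan is to derive $\mathcal{S}^{\infty}_{R_3}$ as a specialization of the general sensitivity formula of Theorem \ref{th:general_sensitivity}, in exactly the way $\mathcal{S}^{\infty}_{R_1}$ and $\mathcal{S}^{\infty}_{R_2}$ are obtained for the other two regimes. First I would write $\lim_{d\to\infty}\mathcal{S}$ in the form supplied by that theorem: a fixed rational expression in a pair of auxiliary fixed-point quantities (denote them $\nu_1,\nu_2$, the Stieltjes-transform-type solutions of the self-consistent system in Theorem \ref{th:general_sensitivity}), whose coefficients are built from $F_1^2$, $F_\star^2+\tau^2$, $\psi_1$, $\psi_2$, $\zeta$ and $\olambda$. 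The $d\to\infty$ limit is then entirely the content of Theorem \ref{th:general_sensitivity}; all the remaining work lies in taking the subsequent $\psi_2\to\infty$ limit and simplifying.

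Second, I would carry out the asymptotic analysis of the fixed-point system as $\psi_2\to\infty$. The key observation is that the limiting object governing the sensitivity must be the same one that governs the large-sample test error \eqref{eq:LSampleStatement}, namely $\omegaRRR$, whose closed form and defining quadratic are available from \eqref{eq:th_E_spec_case_2_w_1}. Concretely, I would determine the correct scaling in $\psi_2$ of each fixed-point variable, substitute these expansions into the self-consistent equations, and show that to leading order the surviving combination converges to $\omegaRRR$ while the companion variable degenerates at a rate that makes the $(F_\star^2+\tau^2)$-weighted contribution to the sensitivity vanish. This last cancellation is precisely what collapses the two-term structure visible in $\mathcal{S}^{\infty}_{R_2}$ (Theorem \ref{th:sensitivity_special_case_2}) into the single $F_1^2$-term claimed here, and it reflects the fact that with unboundedly many samples the noise and the non-linear component $f_d^{\sNL}$ are averaged away.

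Third, after inserting the limiting values I would clear denominators and collect powers of $\omegaRRR$ and $\zeta$ to reach the stated form $F_1^2\big(1+2/(\omegaRRR-1)+\psi_1/(\psi_1-2\omegaRRR\psi_1+\omegaRRR^2(\psi_1-1))\big)$. Throughout I would use the quadratic relation satisfied by $\omegaRRR$ implied by \eqref{eq:th_E_spec_case_2_w_1} to eliminate the square root and keep every manipulation rational, with a symbolic-algebra check confirming the final identity. I would also note that, although the fixed points $\omegaRR$ and $\omegaRRR$ are related by interchanging $\psi_1$ and $\psi_2$, the resulting sensitivity formulas for $R_2$ and $R_3$ are genuinely different functional forms and are not related by that swap, since $\mathcal{S}$ is an input-gradient rather than an output quantity and the two regimes treat variance differently.

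I expect the main obstacle to be the control of the $\psi_2\to\infty$ limit inside the fixed-point system: fixing the correct relative rates of $\nu_1$ and $\nu_2$, justifying that the limit may be taken termwise in the rational expression, and establishing precisely which contributions survive and which vanish. Once those rates are pinned down, the collapse to $\omegaRRR$ and the closing algebra are routine, paralleling the $R_2$ derivation of Theorem \ref{th:sensitivity_special_case_2}.
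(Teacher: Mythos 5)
Your proposal follows exactly the paper's route: the paper's entire proof of this theorem is that it "follows directly from Theorem \ref{th:general_sensitivity} by taking the limit when $\psi_2 \to +\infty$," which is precisely the specialization-then-limit strategy you describe (your proposal merely spells out the asymptotic bookkeeping the paper leaves implicit). The approach is correct and essentially identical to the paper's.
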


\vspace{-0.3cm}
\subsection{Gaussian equivalent models} \label{sec:Gaussian_equivalence}
\vspace{-0.2cm}

A string of recent work shows that the asymptotic statistics of different models, e.g. their test MSE, is equivalent to that of a Gaussian model. This equivalence is known for the setup in \citep{andrea1}, and also for other setups \cite{HuLu20, ba2022, Loureiroetal2021, MontanariANDSaeed2022}. Setups differ on the loss they consider, the type of regularization, the random feature matrices used, the training procedure, the asymptotic regime studied, or on the model architecture. 

In the Gaussian model equivalent to our setup, the AF constants $\mu_0,\mu_1$, and $\mu_2$ appear as parameters. For example, $\mu^\star$ appears as the magnitude of noise added to the regressor matrix entries, and the non-linear part of the target in the RFR model appears as additive mismatch noise. As such, e.g., tuning $\mu^\star$ is related to an implicit ridge regularization. However, since in the Gaussian equivalent model $\mu^\star$ also appears as an effective model mismatch noise, tuning AFs leads to a richer behaviour than just tuning $\lambda$. Furthermore, tuning the AF requires tuning more than just one parameter, while tuning regularization only one, making our contribution in Sec. \ref{sec:optimal_mu0_mu1_mu2} all the more valuable. In fact, one of our contributions (cf. contribution 4 in Sec. \ref{sec:intro}) is quantifying the limitation of this connection: tuning AFs can lead to strictly better performance than tuning regularization (cf. Section \ref{sec:important_observations}).

Gaussian equivalent models derive a good portion of their importance from their connection to the original models to which their equivalence is proved, and which are typically closer to real-world use of neural networks. By themselves, these Gaussian models are extremely simplistic and lack basic real-world components, such as the concept of AF that we study here. Hypothesizing an equivalence to a Gaussian models greatly facilitates analytical advances and numerous unproven conjectures have been put forth regarding how generally these equivalences can be established \cite{SebastianGoldt21,Loureiroetal2021,Dhifallahandyue20}.

\subsection{Advantages and limitations of studying the RFR model}
\vspace{-0.2cm}

It is known \citep{andrea1} that in the asymptotic proportional regime, the RFR cannot learn the non-linear component of certain families of non-linear functions, and in fact cannot do better than linear regression on the input for these  functions. \cite{ba2022} show that a single not-small gradient step to improve the initially random weights of RFR's first layer allows surpassing linear regression's performance in the asymptotic proportional regime. However, for not-small steps,  no explicit asymptotic MSE or sensitivity formulas are given that one could use to tune AFs parameters.
Also, \cite{ba2022}, and others, e.g. \cite{HuLu20}, work with a slightly different class of functions than \cite{andrea1}, e.g. their AFs are odd functions, making comparisons not apples-to-apples.
It is known that the RFR can learn non-linear functions in other regimes, e.g. $n \sim \text{poly}(d)$, and asymptotic formulas for the RFR in this setting also exist \cite{Misiakiewicz2022}. There is numerical evidence of the real-word usefulness of the RFR \citep{rahimi2007random}.

Linear regression also exhibits a double descent curve in the asymptotic proportional regime \citep{Tibshirani2020}. However,  under e.g. overparameterization this curve exhibits a minimizer at a finite $\psi_2$, while empirical evidence for real networks shows that the error decreases monotonically as $\psi_2 \to \infty$. Therefore, linear regression is not as good as the RFR to explain observed double-descent phenomena. Furthermore, linear regression does not deal with AFs, which is our object of study.
Finally, even in a setting where the RFR cannot learn certain non-linear functions with zero MSE, it remains an important question to study how much tuning AF can help improve the MSE and how this affects properties like the double descent curve. 

\vspace{-0.4cm}
\section{Main results} \label{sec:main_results}
\vspace{-0.4cm}

We will find the simplest AFs that lead to the best trade-off between approximation accuracy and sensitivity for the RFR model. Mathematically, we will solve \eqref{eq:bilevel_formulation_of_our_optimization_problem}. From the theorems in Section \ref{sec:background_on_RFR_model} we know that $\mathcal{E}$ and $\mathcal{S}$, and hence $L=(1-\alpha)\mathcal{E} + \alpha \mathcal{S}$, only depend on the AF via $\mu_0,\mu_1,\mu_2$.
Therefore, we will proceed in two steps.
In Section \ref{sec:optimal_activation_functions}, we will fix $\mu_0,\mu_1,\mu_2$, and find $\sigma$ with associated values $\mu_0,\mu_1,\mu_2$ that has minimal norm, either  \eqref{eq:norm_def_2} or  \eqref{eq:norm_def_1}.
In Section \ref{sec:optimal_mu0_mu1_mu2}, we will find values of $\mu_0,\mu_1,\mu_2$ that minimize 
$L=(1-\alpha)\mathcal{E} + \alpha \mathcal{S}$. Together, these specify optimal AFs for the RFR model.

It is the case that properties of the RFR model other than the test error and sensitivity also only depend on the AF via $\mu_0,\mu_1,\mu_2$. One example is the robustness of the RFR model to disparities between the training and test data distribution \citep{tripuraneni2021overparameterization}. Although we do not focus on these other properties, the results in Section \eqref{sec:optimal_activation_functions} can be used to generate optimal AFs for them as well, as long as, similar to in Section \ref{sec:optimal_mu0_mu1_mu2}, we can obtain $\mu_0,\mu_1,\mu_2$ that optimize these other properties.   

We made the decision to, as often as possible, simplify  expressions by manipulating them to expose the signal to noise ratio $\rho=F^2_1/(\tau^2 + F^2_\star)$, $F_1 >0$, rather than using the variables $F_1,\tau$, and $F_\star$.
The only downside is that conclusions in the regime $\tau=F_\star=0$ require a bit more of effort to be extracted, often been readable in the limit ${\rho \to \infty}$.

The complete proofs of our main results can be found in Appendix \ref{app:proofs} and their main ideas below. The proofs of Section \ref{sec:optimal_mu0_mu1_mu2} are algebraically heavy and we provide a Mathematica file to symbolically check expressions of  theorem statements and proofs in the supplementary material.

\vspace{-0.3cm}
\subsection{Optimal activation functions given fixed $\mu_0,\mu_1$, and $\mu_2$} \label{sec:optimal_activation_functions}
\vspace{-0.3cm}

Since one of our goals is knowing when an optimal AF is linear we start with the following lemma. 

\begin{lemma}\label{th:conditions_for_linearity}
The AF $\sigma$ is linear (almost surely) if and only if
$\mu_{\star}^2 \triangleq \mu_2 - \mu^2_1 - \mu^2_0  = 0$.
\end{lemma}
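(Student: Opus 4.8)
The plan is to diagonalize the three moments $\mu_0,\mu_1,\mu_2$ against the Hermite basis of $L^2(\gamma)$, where $\gamma$ denotes the standard Gaussian measure on $\reals$. First I would observe that Assumption \ref{ass:1} guarantees $\sigma \in L^2(\gamma)$: the bound $|\sigma(u)| \le c_0 e^{c_1|u|}$ makes $\EX\{\sigma(Z)^2\} = \mu_2 < \infty$, so $\sigma$ admits a mean-square convergent expansion $\sigma = \sum_{k\ge 0} c_k \He_k$ in terms of the probabilists' Hermite polynomials $\He_k$, which satisfy $\He_0 = 1$, $\He_1(z) = z$, and the orthogonality relation $\EX\{\He_j(Z)\He_k(Z)\} = k!\,\delta_{jk}$.

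Next I would read off the three moments from this expansion. Since $\EX\{\He_0(Z)\} = 1$ and $\EX\{\He_k(Z)\} = 0$ for $k \ge 1$, we get $\mu_0 = c_0$; because $\He_1(z) = z$, orthogonality gives $\mu_1 = \EX\{Z\sigma(Z)\} = \EX\{\He_1(Z)\sigma(Z)\} = c_1$; and Parseval's identity gives $\mu_2 = \sum_{k \ge 0} c_k^2\, k!$. Substituting into the definition yields the key identity
\[
\mu_\star^2 = \mu_2 - \mu_0^2 - \mu_1^2 = \sum_{k\ge 2} c_k^2\, k! \;\ge\; 0.
\]
From this the equivalence is immediate: $\mu_\star^2 = 0$ forces $c_k = 0$ for every $k \ge 2$, so $\sigma(z) = c_0 + c_1 z$ in $L^2(\gamma)$, i.e. $\sigma$ is affine $\gamma$-almost everywhere (what the lemma calls ``linear almost surely''); conversely, if $\sigma(z) = c_0 + c_1 z$ a.s. then all higher coefficients vanish and the sum is zero.

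The computation itself is routine, so the only genuine care is needed at two points, which I would flag explicitly. The first is the justification that the Hermite series converges and that termwise integration against $1$ and $z$ is legitimate — this is exactly where the growth bound of Assumption \ref{ass:1} (ensuring square-integrability) is used, and it also guarantees $\mu_0,\mu_1$ are finite. The second is interpreting the conclusion correctly: equality in $L^2(\gamma)$ means equality up to a $\gamma$-null set, so linearity must be understood in the almost-sure sense, matching the statement of the lemma; no pointwise statement is available or needed. An equivalent, basis-free phrasing of the same argument — which I might include as a remark — is that $\mu_\star^2 = \mathrm{Var}(\sigma(Z)) - \mathrm{Cov}(Z,\sigma(Z))^2$ is precisely the squared $L^2(\gamma)$-distance from $\sigma$ to the two-dimensional subspace $\mathrm{span}\{1,z\}$, which vanishes if and only if $\sigma$ lies in that subspace.
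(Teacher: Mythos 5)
Your proof is correct. The paper's own argument is essentially the ``basis-free phrasing'' you relegate to a closing remark: it simply expands $0 \le \EX\{(\sigma(Z) - \mu_0 - \mu_1 Z)^2\} = \mu_2 - \mu_0^2 - \mu_1^2 = \mu_\star^2$ and notes that vanishing of this quantity forces $\sigma(Z) = \mu_0 + \mu_1 Z$ almost surely (the converse being a direct calculation). Your main line via the Hermite expansion reaches the same conclusion and is perfectly valid, but it imports more machinery than the statement needs --- in particular you must (as you rightly flag) justify the $L^2(\gamma)$ convergence of the series and the termwise integration against $1$ and $z$, none of which is required for the paper's one-line identity. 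The Hermite route does buy something the direct computation does not: the refined formula $\mu_\star^2 = \sum_{k\ge 2} c_k^2\, k!$ exhibits $\mu_\star^2$ as a graded measure of non-linearity (the total weight of $\sigma$ on Hermite modes of degree at least two), which is a sharper quantitative statement than the lemma asks for and connects naturally to the Hermite polynomials that reappear in Theorem \ref{th:opt_AF_when_norm_is_L2}. Either argument is acceptable; for the lemma as stated, the direct expansion is the more economical choice.
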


We now state results for the norms  \eqref{eq:norm_def_2} and \eqref{eq:norm_def_1}. The problem we will solve under both norms is similar. 
Let $Z$ $\sim$ $\mathcal{N}(0, 1)$.
We consider solving the following functional problem, where $i = 1$ or $2$,
\begin{align} 
\min_{\sigma} \|\sigma\|_i \; \text{ subject to} \; \label{eq:var_problem}
\EX(\sigma(Z)) = \mu_0, \EX(Z\sigma(Z)) = \mu_1, \EX(\sigma(Z)^2) = \mu_2,  \text{ with } Z \sim \mathcal{N}(0,1).
\end{align}
%
%\vspace{-0.4cm}
If $i=2$, we seek solutions
over the Gaussian-weighted Lebesgue space of twice weak-differentiable functions that have $\EX((\sigma(Z))^2)$ and $\EX((\sigma'(Z))^2)$ defined and finite. %
If $i=1$, we seek solutions over the Gaussian-weighted Lebesgue space of  weak-differentiable functions that have $\EX((\sigma(Z))^2)$ and $\EX(|\sigma'(Z)|)$ defined and finite. The derivative $\sigma'$ is to be understood in a weak sense.

Since $\sigma$ is a one-dimensional function, the requirement of existence of weak derivative implies that there exists a function $v$ that is absolute continuous and 
that agrees with $\sigma$ almost everywhere \citep{rudin1976principles}. Therefore, any specific solution we propose should be understood as an equivalent class of functions that agree with $v$ up to a set of measure zero with respect to the Gaussian measure.

\begin{theorem}\label{th:opt_AF_when_norm_is_L2}
The minimizers of \eqref{eq:var_problem} for $i=2$, i.e. $\|\sigma\|^2 = \EX((\sigma'(Z))^2)$, are
\begin{align}\label{eq:for_of_solution_L2_derivative_square}
    \sigma(x) = ax^2 + bx + c, \text{ where } a = \pm {\mu_\star}/{\sqrt{2}}, b = \mu_{1}, \text{ and } c = \mu_0 - a.
\end{align}
\end{theorem}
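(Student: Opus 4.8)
The plan is to work in the orthogonal basis of (probabilists') Hermite polynomials $\He_k$, which form a complete orthogonal system for the Gaussian-weighted space in which \eqref{eq:var_problem} is posed, with $\EX(\He_j(Z)\He_k(Z)) = k!\,\delta_{jk}$. Writing $\sigma = \sum_{k\ge 0} c_k \He_k$ with $c_k = \EX(\sigma(Z)\He_k(Z))/k!$, I would first translate the three constraints into conditions on the coefficients. Since $\He_0 = 1$ and $\He_1(x)=x$, the constraints $\EX(\sigma(Z))=\mu_0$ and $\EX(Z\sigma(Z))=\mu_1$ pin down $c_0 = \mu_0$ and $c_1 = \mu_1$, while Parseval's identity turns $\EX(\sigma(Z)^2)=\mu_2$ into $\sum_{k\ge 0} k!\,c_k^2 = \mu_2$. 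Subtracting the fixed contributions of $c_0,c_1$ gives $\sum_{k\ge 2} k!\,c_k^2 = \mu_2 - \mu_0^2 - \mu_1^2 = \mu_\star^2$, which is where the definition of $\mu_\star$ enters.

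Next I would express the objective in the same coordinates. Using the differentiation identity $\He_k'(x) = k\,\He_{k-1}(x)$ and term-by-term differentiation, $\sigma'(x) = \sum_{k\ge 1} c_k\,k\,\He_{k-1}(x)$, so by orthogonality
\begin{equation}
\|\sigma\|_2^2 = \EX\big((\sigma'(Z))^2\big) = \sum_{k\ge 1} c_k^2\,k^2\,(k-1)! = \sum_{k\ge 1} k\cdot k!\,c_k^2 = \mu_1^2 + \sum_{k\ge 2} k\cdot k!\,c_k^2 .
\end{equation}
Thus, with $c_0,c_1$ already fixed, problem \eqref{eq:var_problem} for $i=2$ reduces to minimizing $\sum_{k\ge 2} k\cdot k!\,c_k^2$ subject to $\sum_{k\ge 2} k!\,c_k^2 = \mu_\star^2$ over nonnegative weights $w_k \triangleq k!\,c_k^2$.

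The optimization step is then a clean weighting argument: for every $k\ge 2$ one has $k\cdot k!\,c_k^2 = k\,w_k \ge 2\,w_k$, with equality precisely when $k=2$ or $c_k=0$. Summing and invoking the constraint gives $\sum_{k\ge 2} k\cdot k!\,c_k^2 \ge 2\sum_{k\ge 2} w_k = 2\mu_\star^2$, so the minimal objective is $\mu_1^2 + 2\mu_\star^2$, attained exactly when $c_k = 0$ for all $k\ge 3$ and $2!\,c_2^2 = \mu_\star^2$, i.e. $c_2 = \pm\mu_\star/\sqrt{2}$. Substituting $\He_0 = 1$, $\He_1(x)=x$, $\He_2(x)=x^2-1$ yields $\sigma(x) = c_2 x^2 + \mu_1 x + (\mu_0 - c_2)$, which is exactly \eqref{eq:for_of_solution_L2_derivative_square} with $a=c_2=\pm\mu_\star/\sqrt2$, $b=\mu_1$, $c=\mu_0-a$.

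The step I expect to be the main obstacle is the functional-analytic justification rather than the algebra: I must confirm that every admissible $\sigma$ (twice weakly differentiable with $\EX(\sigma(Z)^2)$ and $\EX((\sigma'(Z))^2)$ finite) has a Hermite expansion whose term-by-term derivative coincides with the weak derivative in $L^2(\gamma)$, so that the coefficients of $\sigma'$ really are $k\,c_k$ and the identity $\EX((\sigma'(Z))^2)=\sum_{k\ge 1}k\cdot k!\,c_k^2$ holds with no missing boundary terms. This is the statement that the admissible class is exactly the first Gaussian Sobolev space, characterized by $\sum_{k}k\cdot k!\,c_k^2 < \infty$; once this is in place, attainment is immediate because the explicit quadratic candidate is a finite Hermite sum and hence manifestly lies in the class, and the lower bound above certifies its optimality.
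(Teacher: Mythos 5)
Your proof is correct, and it takes a genuinely different route from the paper's. The paper proceeds by calculus of variations: it derives the Euler--Lagrange ODE $-2x\sigma'+2\sigma''+\lambda_1+\lambda_2 x+\lambda_3\sigma=0$, recognizes it (after rescaling) as the Hermite differential equation, writes the general solution in terms of Hermite polynomials and generalized hypergeometric functions, uses the finite-energy condition $\EX((\sigma'(Z))^2)<\infty$ to rule out the second homogeneous solution and to quantize $\lambda_3=4k$, and finally minimizes the resulting energy $\mu_1^2+\tfrac{\lambda_3}{2}\mu_\star^2$ over $k\in\mathbb{Z}^+$ to land on $k=1$ and a quadratic. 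You instead expand $\sigma$ directly in the Hermite basis, use Parseval plus the identity $\He_k'=k\He_{k-1}$ to write both the constraints and the objective in terms of the coefficients $c_k$, and finish with the elementary inequality $k\,k!\,c_k^2\ge 2\,k!\,c_k^2$ for $k\ge 2$. Your approach buys two things: it is considerably shorter (no ODE analysis, no hypergeometric asymptotics), and it is a global lower-bound-plus-attainment argument rather than a classification of critical points, so it sidesteps the question of whether a minimizer exists and whether the Euler--Lagrange conditions are only necessary; it even requires one fewer weak derivative than the paper's stated search space. The paper's route buys the explicit Lagrange-multiplier bookkeeping (Lemmas \ref{th:necessary_cond_for_opt_sigma} and \ref{th:some_simple_relations_lagrange_mult}) and the full family of higher-order Hermite critical points, which your argument also recovers implicitly as the functions supported on a single mode $k\ge 2$. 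The one point you correctly flag --- that the admissible class coincides with the first Gaussian Sobolev space, so that the Hermite coefficients of the weak derivative are exactly $k c_k$ and $\EX((\sigma'(Z))^2)=\sum_{k\ge1}k\,k!\,c_k^2$ --- is standard (it follows from Gaussian integration by parts, $\EX(\sigma'(Z)\He_{k-1}(Z))=\EX(\sigma(Z)\He_k(Z))$, valid under the stated integrability), and once cited your argument is complete.
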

%
%
%\begin{remark}
\vspace{-0.3cm}
In Theorem \ref{th:opt_AF_when_norm_is_L2}, if $\mu_\star = 0$ there is only one minimizer, a linear function. If $\mu_\star > 0$, there are exactly two minimizers, both quadratic functions. Note that both minimizers satisfy the growth constraints of Assumption \ref{ass:1}, and hence can be used within the analysis of the RFR model. We note that quadratic AFs have been empirically studied in the past, e.g. \cite{wuraola2018sqnl}.
%\end{remark}

\begin{theorem}\label{th:optimal_AF_for_L1_norm}
One minimizer of \eqref{eq:var_problem} for $i=1$, i.e. $\|\sigma\| = \EX(|\sigma'(Z)|)$, is
\begin{align}
    \sigma(x) = \mu_0 + b \max\{\min\{x,-s\},s\},
\end{align}
\vspace{-0.8cm}

where $b = \frac{\mu_1}{\text{erf}(s/\sqrt{2})}$, $\text{erf}$ is the Gauss error function, and
$s\in\mathbb{R}$ is the unique solution to the equation
$
 \zeta^2 \triangleq {\mu^2_1}/{\mu^2_\star} = g(s)
$
if $\mu_\star \neq 0$, and $s =+ \infty$ if $\mu_\star = 0$, where $g$ is specified in Appendix \ref{app:details_theorem_8}.
\end{theorem}
%
%\begin{remark}
When $\|\sigma\| = \EX(|\sigma'(Z)|)$, we can characterize the complete solution family to \eqref{eq:var_problem}. These are AFs of the form $\sigma(x) = a + b\max\{s,\min\{t,x\}\}$, where $a,b,s$, and $t$ are chosen such that the constraints in \eqref{eq:var_problem} hold.
It is possible to explicitly write $a$ and $b$ as a function of $\mu_0, \mu_1, s, t$, and express $s, t$ as the solution of $E(s,t) = {\mu^2_1}/{\mu^2_\star}$, where  $E(\cdot,\cdot)$ has explicit form. In this case, for each $\mu_0,\mu_1,\mu_2$ there are an infinite number of optimal AFs since  $E(s,t) = {\mu^2_1}/{\mu^2_\star}$ has an infinite number of solutions.
ReLU's are included in this family as $t \to \infty$. 
The involved lengthy expressions do not bring any new insights, so we state and prove only Thr. \ref{th:optimal_AF_for_L1_norm}, which is a specialization of the general theorem to $s=-t$.
%\end{remark}
%
\vspace{-0.3cm}
\begin{proof}[Proofs' main ideas: ]
We give the main ideas behind the proof of Theorem \ref{th:opt_AF_when_norm_is_L2}. The proof of Theorem \ref{th:optimal_AF_for_L1_norm} follows similar techniques. The first-order optimality conditions imply that $-2x\sigma'(x) + 2\sigma''(x) + \lambda_1 +\lambda_2 x + \lambda_3 \sigma(x) = 0$, where the Lagrange multipliers $\lambda_1,\lambda_2$, and $\lambda_3$ must be later chosen such that $\EX\{\sigma(Z)\} = \mu_0,\EX\{Z\sigma(Z)\} = \mu_1$, and $\EX\{\sigma^2(Z)\} = \mu_2$. Using the change of variable $\sigma(x) = \tilde{\sigma}(x/\sqrt{2}) - \lambda_1/\lambda_3 - x\lambda_2/(\lambda_3 - 1) $ we obtain $-2x\tilde{\sigma}'(x) + \tilde{\sigma}''(x) +  \lambda_3 \tilde{\sigma}(x) = 0$ which is the \emph{Hermite ODE}, which is well studied in physics, e.g. it appears in the study of the quantum harmonic oscillator.
The cases $\lambda_3 \in \{ 0 , 3\}$ require special treatment.
Using a finite energy/norm condition we can prove that $\lambda_3$ is quantized. In particular
$\lambda_3 = 4k, k = 1,2,...$, which implies that $\sigma(x) = -\lambda_1/\lambda_3 - \lambda_2 x/(\lambda_3 - 2) + c H_{2k}(x/\sqrt{2})$, where $H_{i}$ is the $i$th \emph{Hermite polynomial} and $c$ a constant. 
The energy/norm is minimal when $k=1$, which implies a quadratic AF. 
\end{proof}

%
%%%%%%%%%%%%%%%%%%%%%%%%%%%%%%%%%%%%%%%%%%%%%
%
\vspace{-0.4cm}
\subsection{Optimal activation function parameters}\label{sec:optimal_mu0_mu1_mu2}
\vspace{-0.2cm}

We will find AF parameters that minimize a linear combination of sensitivity and test error. We are interested in an asymptotic analytical treatment %of \eqref{eq:problem_to_solve_when_trying_to_find_optimal_mus}
in the three regimes mentioned in Section \ref{sec:background_on_RFR_model}. To be specific, we will compute
\begin{align}\label{eq:problem_to_solve_when_trying_to_find_optimal_mus}
\mathcal{U}_{R_i}(\psi_1,\psi_2,\tau,\alpha,F_1,F_\star,\lambda) \equiv \argmin_{\mu_0, \mu_1, \mu_2}  \;\;\; (1-\alpha) \mathcal{E}^{\infty}_{R_i} + \alpha \mathcal{S}^{\infty}_{R_i}, \text{ where } i = 1,2, \text{ or } 3.
\end{align}

\vspace{-0.4cm}
We are not aware of previous work explicitly studying the trade-off between $\mathcal{E}$ and $\mathcal{S}$ for the RFR model.
For the RFR model, the work of \cite{andrea1} studies only the test error and \cite{andrea2} studies a definition of sensitivity related but different from ours. 
Other papers have studied trade-offs between robustness and error measures related but different than ours and for other models, e.g. \cite{tsipras2018robustness,zhang2019theoretically}.

%When $\alpha = 1$, problem \eqref{eq:problem_to_solve_when_trying_to_find_optimal_mus} reduces to minimizing the sensitivity. This has a trivial solution: the AF, and hence $f$, must be a constant, and $\mathcal{S} = 0$.
%
To simplify our exposition, we do not present results for the edge case $\alpha = 1$, for which problem \eqref{eq:problem_to_solve_when_trying_to_find_optimal_mus} reduces to minimizing the sensitivity. Below we focus on the case when $\alpha \in [0, 1)$.

{\bf Special notation:} In Theorem \ref{th:optimal_parameters_for_special_case_1} we use the following special notation. 
Given two potential choices for AF parameters, say $x$ and $y$,
we define $x \sqcup y$ to mean
that $x$ exists and that $y$ might exist or not, and that $x \sqcup y = y$ if $y$ exists and it leads to a smaller value of $(1-\alpha) \mathcal{E} + \alpha \mathcal{S}$ than using $x$, and otherwise $x \sqcup y = x$. Note that $x\sqcup y$ and  $y\sqcup x$ make different statements about the existence of $x$ and $y$. This notation is important to interpret the results of Table \ref{theorem_1_table} in Theorem \ref{th:optimal_parameters_for_special_case_1}.

\begin{theorem} \label{th:optimal_parameters_for_special_case_1}
Let $\alpha \in [0,1)$, $\psi \equiv  \min\{\psi_1,\psi_2\}$ and $\overline{\psi} \equiv  \max\{\psi_1,\psi_2\}$. We have that
\vspace{-0.1cm}
\begin{align}
\mathcal{U}_{R_1}= \left\{(\mu_0, \mu_1, \mu_2): x\mu_1^2 (-1 + x + \psi)= \mu^2_\star  (\psi +x)\right\},\text{ where $x$ is as in Table \ref{theorem_1_table}.}
\label{eq:theor_spe_cas_1}
\end{align}

\begin{table}[h!]
\vspace{-0.5cm}
\centering
\begin{tabular}{|l||l|l|l|l|}
\hline
  & $\beta_1 \leq \overline{\psi}$ & $ \beta_2 < \overline{\psi} < \beta_1  $                & $ \beta_3 < \overline{\psi} \le \beta_2 $  & $ \overline{\psi} \le \beta_3 $ \\ \hline \hline
$(\alpha < \alpha_L) \land E_1$ & $x_R$ & $x_R \sqcup x_1$ & $x_R$ & $x_R$ \\ \hline
$(\alpha < \alpha_L) \land  E_2 \land (\alpha > \alpha_C) $ & $x_1$ & $x_1 \sqcup x_3$ & $x_1$ & $--$ \\ \hline
$  (\alpha < \alpha_L) \land  E_2 \land (\alpha < \alpha_C) $ & $x_1$ & $x_1 \sqcup x_3$ & $x_1$ & $--$ \\ \hline
$(\alpha > \alpha_L) \land  E_1 \land (\alpha > \alpha_C) $ & $--$ & $x_L$ & $x_L$ & $x_L$ \\ \hline
$(\alpha > \alpha_L) \land  E_1 \land (\alpha < \alpha_C)$ & $--$ & $x_R$ & $x_R$ & $x_R$ \\ \hline
$(\alpha > \alpha_L) \land E_2$ & $x_L$ & $x_L \sqcup x_2$ & $x_L \sqcup x_2$ & $x_L$ \\ \hline
\end{tabular}
\caption{\footnotesize{The optimal AFs \eqref{eq:theor_spe_cas_1} depends on $x$ according to this table. Cells with ``- -'' never happen. The values of $x_1, x_2,x_3$, $\beta_1,\beta_2,\beta_3,\alpha_L,\alpha_C,\alpha_R$, $x_L, x_R$, and the events $E_1$ and $E_2$ are specified below.}}
\vspace{-0.4cm}
\label{theorem_1_table}
\end{table}

In Table \ref{theorem_1_table},  $x_1,x_2$, and $x_3$ are the smallest, second smallest and third smallest roots of a 4th degree polynomial $p(x)$, specified in Appendix \ref{app:details_of_theorem_special_case_1}, in the range $(x_L,x_R) \triangleq (-\psi,\min\{0,1-\psi\})$, if these exists.
The variables $\beta_1,\beta_2,\beta_3,\alpha_L,\alpha_C,\alpha_R$, the polynomial $p(x)$,  and the conditions $E_1$ and $E_2$ are defined in Appendix \ref{app:first_part_theorem_9} when $\psi_1 < \psi_2$, and when $\psi_1 > \psi_2$ these are defined in Appendix \ref{app:second_part_theorem_9}.
%there is a similar but different table in Appendix \ref{app:second_part_theorem_9} explaining how $x$ takes its values. and The case when $\psi_1 < \psi_2$ is described in Appendix \ref{app:first_part_theorem_9}.
\end{theorem}
\begin{remark}\label{remark:condition_for_linearity}
Excluding the $\alpha = 1$ scenario, it follows directly from \eqref{eq:theor_spe_cas_1} that  the optimal AF is linear if and only if $x=x_R$. With this information and Table \ref{theorem_1_table}, we have all the information needed to find exactly when the optimal AF is, or is not, linear. 
For regime $R_1$, changing $\alpha$ alone can change the optimal AF from linear to non-linear and vice-versa (see e.g. 3rd column of Table \ref{theorem_1_table}), which justifies the observation 3 in Section \ref{sec:intro}.
\end{remark}
\begin{remark}
For the cases considered in Table \ref{theorem_1_table}, $x$ is unique. When $\alpha \in \{ \alpha_R, \alpha_L,\alpha_C\}$, or when $(\psi_1 > \psi_2) \land (\psi_1 \in\{A,B\})$, ($A \& B$ are defined in Appendix \ref{app:second_part_theorem_9}), we can lose the uniqueness of $x$. Yet, we can still explicitly characterize the sets of optimal $x$ and of optimal AFs parameters. For simplicity we omit these cases from Thr. \ref{th:optimal_parameters_for_special_case_1}.
\end{remark}

\begin{remark} \label{remark:details_for_constants_in_theorem_for_special_case_1}
Theorem \ref{th:optimal_parameters_for_special_case_1}'s proof gives relationships among Table \ref{theorem_1_table}'s constants that imply that (1) no two rows/columns simultaneously hold and (2) in some cases some cells might not hold. See App. \ref{app:details_of_theorem_special_case_1}.
\end{remark}

We do not consider $\psi_1 = \psi_2$ in Theorem \ref{th:optimal_parameters_for_special_case_1} because it implies $(1-\alpha) \mathcal{E}^\infty_{R_1} + \alpha \mathcal{S}^\infty_{R_1} $ is not defined. Note that $\psi_1 = \psi_2$ has been called the \emph{interpolation threshold} in the context of studying the generalization properties of the RFR model under vanishing regularization \citep{andrea1}.
See Remark \ref{remark:double_descent}.

%
%\begin{remark}
%The set \eqref{eq:theor_spe_cas_1} of optimal AFs parameters is invariant under scaling and shifting of $\sigma$. In particular, let $\mu(\sigma)$ be $\mu_0,\mu_1,\mu_2$ associated with $\sigma$. If $\mu(\sigma) \in \mathcal{U}_{R_1}$  then
%$\mu(a \sigma  + b)\in \mathcal{U}_{R_1}$.
%\end{remark}
%
%
%
%\begin{remark}
When $x \in \{x_L,x_R\}$ we can compute the optimal value of the objective explicitly.
For example, if $\psi_1 < \psi_2$ and  $x=x_L$ the optimal value of the objective is $\frac{(1-\alpha) (\psi_2 (F_1^2+ F_\star^2)+\psi_1 \tau^2)}{\psi_2-\psi_1}$.
If $\psi_1 > \psi_2$ and $x=x_R$ the optimal value of the objective is $\frac{\alpha  F_1^2 (\psi_1-\psi_2)+F_\star^2 ((\alpha -1) \psi_2-\alpha )+\alpha  (\psi_1-1) \tau^2-\psi_1 \tau^2}{\psi_1-\psi_2}$ if $\psi_1 < 1$ and $\frac{F_1^2 (\alpha  (2 \psi_1-1) (\psi_1-\psi_2)+(\psi_1-1) \psi_2)+F_\star^2 ((\alpha -1) \psi_2-\alpha  \psi_1)-\psi_1 \tau^2}{\psi_1-\psi_2}$ if $\psi_1 \geq 1$. This follows by substitution.

\begin{theorem} \label{th:special_case_2_optimal_mu_values}
Let $\alpha \in [0,1)$. We have that,
\begin{align}
\mathcal{U}_{R_2} = \left\{(\mu_0, \mu_1, \mu_2): \mu_1^2 (-1 + 2 \psi_2 - x) (-1 + x) = -2  (\mu^2_\star + \lambda \psi_2) (1 + x)\right\}
%\label{eq:theor_spe_cas_2_gen_problem}
\label{eq:theor_spe_cas_2},
\end{align}
where $x$ is the unique solution to $p(x)=0$ in the range  $x \in (-1,\min\{1, -1 + 2 \psi_2\})$,
%and $\frac{\mu^2_1}{\mu^2_\star} = \frac{\Psi_2 + x + \bar{\lambda} \Psi_2 x}{x (-1 + \Psi_2 + x + \bar{\lambda} \Psi_2 x)} \geq 0$
where $p(x) \triangleq p_0 + p_1 x+  p_2 x^2 +  p_3 x^3 + p_4 x^4 $ with coefficients described in Appendix \ref{app:details_of_theorem_10}.

\label{theorm_12_1}
\end{theorem}
\begin{remark}
The only way to get $\mu_\star = 0$, and hence a linear optimal AF is if $x$ simultaneously satisfies $\mu_1^2 (-1 + 2 \psi_2 - x) (-1 + x) = -2  \lambda \psi_2 (1 + x)$ and $p(x) = 0$. Since the first equation does not depend on $\alpha$, but the zeros of $p(x) = 0$ change continuously with $\alpha$, only very special choices of parameters lead to linear AFs. In general regime $R_2$ does not have optimal linear AFs.
\end{remark}

\begin{theorem} \label{th:special_case_3_optimal_mu_values}
Let $\alpha \in [0,1)$. We have that 
\vspace{-0.2cm}
\begin{align}
 \mathcal{U}_{R_3} \hspace{-0.07cm}=\hspace{-0.07cm} 
 \left\{ \begin{array}{ll}
\left\{(\mu_0, \mu_1, \mu_2): \mu_\star = 0 \land \mu_1 = \infty \right\},  \quad\quad\quad\quad\quad\quad\text{ if } \alpha = 0 \vee (\psi_1 = 1 \land 0<  \alpha \leq \frac{1}{4})
\\
      \left\{(\mu_0,\mu_1,\mu_2): \mu_\star = 0 \land \mu^2_1  = \frac{-4 \alpha ^2 \lambda +3 \alpha  \lambda +\sqrt{\alpha } \lambda }{16 \alpha ^2-8 \alpha +1}   \right\}, \quad\quad  \text{ if }\psi_1 = 1 \land \alpha > \frac{1}{4} \\
       \left\{(\mu_0,\mu_1,\mu_2): \mu_\star = 0 \land \mu_1^2 (-1 + 2 \psi_1 - x) (-1 + x) +2  \lambda \psi_1 (1 + x)=0\right\}, \text{ if }\psi_1 \neq 1
\end{array} \right.
\hspace{-0.8cm}
 \label{eq:theor_spe_cas_3_gen_problem}
\end{align}
where $x$ is the unique solution to $p(x)=0$ in the range  $x \in (-1,\min\{1, -1 + 2 \psi_2\})$, where $p(x)$ is define like in Theorem \ref{th:special_case_2_optimal_mu_values} but with $\rho \to \infty$ and with $\psi_2$ replaced by $\psi_1$.

\begin{comment}
\vspace{-0.3cm}
\noindent\begin{minipage}{0.7\linewidth}
\begin{align}
&p_0 = (\alpha +4 (2 \alpha -1) {\psi_1} (2 {\psi_1}-1)) \label{eq:th_special_case_3_poly_coef_1} \\
 &p_1 = 4 (\alpha +2 {\psi_1} ({\psi_1}-2 \alpha )) \\
 &p_2 = -2 ((4 \alpha +2) {\psi_1}-3 \alpha )  
\end{align}
\end{minipage}%
\begin{minipage}{0.3\linewidth}
\begin{align}
&p_3 =  4 \alpha  \\
 &p_4 =   \alpha \label{eq:th_special_case_3_poly_coef_4} 
\end{align}
\end{minipage}\par\vspace{\belowdisplayskip}
%
\end{comment}

\end{theorem}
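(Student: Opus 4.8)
\emph{Plan.} Write $w\equiv\omegaRRR$ as in \eqref{eq:th_E_spec_case_2_w_1}. The strategy is to collapse the three‑variable problem \eqref{eq:problem_to_solve_when_trying_to_find_optimal_mus} to a scalar one by first proving that the optimal AF is always linear ($\mu_\star=0$), and then identifying the leftover scalar minimization with the one already solved inside Theorem~\ref{th:special_case_2_optimal_mu_values}. I begin by recording the reductions. Neither the large‑sample error \eqref{eq:LSampleStatement} nor the sensitivity of Theorem~\ref{th:sensitivity_special_case_3} depends on $\mu_0$, and both depend on $(\mu_1,\mu_2)$ only through $\ratio=\mu_1/\mu_\star$ and $\olambda=\lambda/\mu_\star^2$, hence only through $w$ and $\ratio$. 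Squaring \eqref{eq:th_E_spec_case_2_w_1} shows $w$ is the (negative) root of $(\olambda\psi_1+1)(w^2-w)=\ratio^2\,[\psi_1-(\psi_1-1)w]$, which I rewrite as the key identity $\mu_\star^2+\lambda\psi_1=\mu_1^2\,[\psi_1-(\psi_1-1)w]/(w(w-1))$. I also record the factorization $D(w):=(\psi_1-1)w^3+(1-3\psi_1)w^2+3\psi_1 w-\psi_1=(w-1)\,Q(w)$ with $Q(w)=(\psi_1-1)w^2-2\psi_1 w+\psi_1$, since it clears every denominator in $\cuB_{\lsamp}$ and in the sensitivity.

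The central step is to prove $\mu_\star=0$ at the optimum. Using the factorization, at a fixed value of $w$ the objective $(1-\alpha)\mathcal{E}^{\infty}_{R_3}+\alpha\mathcal{S}^{\infty}_{R_3}$ splits into a part depending on $w$ alone plus the single $\ratio$‑dependent term $(1-\alpha)F_1^2\,w^2/(\ratio^2 Q(w))$ (the sensitivity carries no $\ratio$, and the only $\ratio$ in $\cuB_{\lsamp}$ is the summand $w^2(w-1)/(\ratio^2 D(w))=w^2/(\ratio^2 Q(w))$). On the admissible range of $w$, defined by $w<0$ together with $\psi_1-(\psi_1-1)w>0$ (the condition that forces $\olambda>0$ in the identity above), one checks $Q(w)>0$, so this term is nonnegative and strictly decreasing in $\ratio^2$. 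Since holding $w$ fixed and letting $\ratio^2\to\infty$ drives $\olambda\to\infty$, i.e. $\mu_\star^2=\lambda/\olambda\to0$ (with $\mu_1^2\to\lambda\psi_1\,w(w-1)/[\psi_1-(\psi_1-1)w]$ finite), the infimum is attained only on $\{\mu_\star=0\}$. I expect this sign/feasibility verification to be the main obstacle: one must pin down the admissible $w$‑interval (equivalently the interval $(-1,\min\{1,2\psi_1-1\})$ for the substitution variable $x$) from $\olambda>0$ and simultaneously confirm $Q>0$ there, treating $\psi_1<1$, $\psi_1=1$, $\psi_1>1$ separately (for $\psi_1<1$ the vertex of the downward parabola $Q$ sits exactly at the endpoint $w=\psi_1/(\psi_1-1)$, which is what keeps $Q>0$ on the feasible side).

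With $\mu_\star=0$ the $1/\ratio^2$ term vanishes and the objective becomes a function of $w$ alone. A short computation shows this function coincides with the $R_2$ objective of Theorems~\ref{th:error_special_case_R2} and~\ref{th:sensitivity_special_case_2} after setting $\tau=F_\star=0$ (equivalently $\rho\to\infty$) and replacing $\psi_2$ by $\psi_1$: the numerator of $\cuB_{\lsamp}$ collapses to $\psi_1(w-1)$, giving $\cuB_{\lsamp}=\psi_1/Q(w)$, while the sensitivity collapses to $F_1^2\,w^2[(\psi_1-1)w-(\psi_1+1)]/((w-1)Q(w))$, which is exactly $\mathcal{S}^{\infty}_{R_2}$ at $\tau=F_\star=0$, $\psi_2=\psi_1$. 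Hence minimizing over $w$ is the same scalar problem solved within Theorem~\ref{th:special_case_2_optimal_mu_values}; the change of variables used there yields the quartic $p(x)=0$ with coefficients \eqref{eq:poly_coeff_1_R_0}--\eqref{eq:poly_coeff_1_R_4} evaluated at $\rho^{-1}=0$ and $\psi_2=\psi_1$, and the stationarity condition translates, through the identity above with $\mu_\star=0$, into $\mu_1^2(-1+2\psi_1-x)(-1+x)+2\lambda\psi_1(1+x)=0$, which is the third branch of \eqref{eq:theor_spe_cas_3_gen_problem}.

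It remains to dispose of the cases where the scalar optimum sits at the boundary $\mu_1\to\infty$ (equivalently $w\to-\infty$ for $\psi_1>1$, or $w\to\psi_1/(\psi_1-1)$ for $\psi_1<1$) rather than at an interior stationary point. When $\alpha=0$ the objective is the pure bias $F_1^2\,\psi_1/Q(w)+F_\star^2$, whose derivative vanishes only at the vertex of $Q$, which lies at the boundary; so there is no interior minimizer and the optimum is reached as $\mu_1\to\infty$, giving the first branch. When $\psi_1=1$ the factorization degenerates to $D(w)=(w-1)(1-2w)$, $Q(w)=1-2w$, so $\cuB_{\lsamp}=1/(1-2w)$ and the sensitivity is $-2F_1^2 w^2/((w-1)(1-2w))$; the resulting function of $w$ tends to $\alpha F_1^2$ as $w\to-\infty$, and comparing this limit with the interior critical value shows the stationary point falls in range exactly when $\alpha>1/4$, producing the explicit second branch $\mu_1^2=(-4\alpha^2\lambda+3\alpha\lambda+\sqrt{\alpha}\,\lambda)/(16\alpha^2-8\alpha+1)$ (note $16\alpha^2-8\alpha+1=(4\alpha-1)^2$ and $\mu_1^2\to\infty$ as $\alpha\to\tfrac14^+$, matching the escape to the boundary), while for $0<\alpha\le\tfrac14$ the boundary wins and we revert to the first branch. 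Assembling the three regimes gives \eqref{eq:theor_spe_cas_3_gen_problem}.
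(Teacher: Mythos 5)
Your proof is correct and follows essentially the same route as the paper's: reduce the objective to a function of $(\omega_1,\zeta^2)$ only, show it is decreasing in $\zeta^2$ at fixed $\omega_1$ (which forces $\mu_\star=0$ at the optimum and yields the stated relation between $\mu_1^2$ and $x$), and then solve the remaining one-dimensional problem, treating $\alpha=0$ and $\psi_1=1$ separately because there the stationary point escapes to the boundary $\mu_1\to\infty$ unless $\alpha>\tfrac14$. The only organizational difference is that you import the convexity/uniqueness analysis of the scalar problem from Theorem \ref{th:special_case_2_optimal_mu_values} via the identification $\rho\to\infty$, $\psi_2\to\psi_1$ (consistent with how the theorem states $p(x)$), whereas the paper recomputes ${\rm d}^2 O/{\rm d}x^2$ directly for $R_3$; this shortcut is legitimate precisely because the cases in which the $R_2$ boundary-sign and strict-convexity arguments degenerate in that limit ($\alpha=0$ and $\psi_1=1$) are exactly the ones you handle separately.
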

\begin{remark}
The optimal AF is always linear and independent of the noise variables $F_\star$ and $\tau$.
\end{remark}
\begin{remark}
When $\psi_1 = 1,\alpha \leq \frac{1}{4}$ there is no optimal AF inside our AF search space since no AF can satisfy $\mu_1 = \infty$. Rather, there exists a sequence of valid AFs with decreasing $L$ whose $\mu_1\to\infty$.
\end{remark}
\begin{remark}
We can compute the optimal objective in closed-form in some scenarios. When $\alpha = 0$ the optimal objective is $F_\star^2$. When $\psi_1 = 1 \land 0 < \alpha \leq \frac{1}{4}$, the optimal objective approaches $\alpha F^2_1 + (1-\alpha)F_\star^2$ as $\mu_1\to\infty$. When $\psi_1 = 1 \land \alpha > \frac{1}{4}$, the optimal objective is $F^2_1(4\sqrt{\alpha} -1 - 3\alpha) + F_\star^2 (1 - \alpha)$.
%\vspace{-0.4cm}
\end{remark}

\begin{proof}[Proofs' main ideas: ]
We give the main ideas behind the proof of Theorem \ref{th:special_case_2_optimal_mu_values}. The proof of Theorems \ref{th:optimal_parameters_for_special_case_1} and \ref{th:special_case_3_optimal_mu_values} follows similar techniques but require more care. The objective $L$ only depends on AF parameters via $\omega_2 = \omega_2(\psi_2,\mu^2_1,\mu^2_\star)$. We use the M\"{o}bius transformation $x = (1+\omega_2)/(\omega_2 - 1)$ such that the infinite range $\omega_2 \in [-\infty,0]$ gets mapped to the finite range $x\in[-1,1]$. 
We then focus the rest of the proof on minimizing $L = L(x)$ over the range of $x$. First we show that given that $\mu^2_1,\mu^2_\star\geq 0,\psi_1 >0$, the range of $x$ can be reduced to $x\in [x_L,x_R]\triangleq[-1,\min\{1,-1+2\psi_2\}]$. Then we compute ${\rm d} L / {\rm d} x$ and ${\rm d}^2 L / {\rm d} x^2$, which turn out to be rational functions of $x$. We then show that if $x\in[-1,\min\{1,-1+2\psi_2\}]$ then ${\rm d}^2 L / {\rm d} x^2 > 0$, so $L$ is strictly convex. We also show that ${\rm d} L / {\rm d} x < 0$ at  $x_L$ and ${\rm d} L / {\rm d} x > 0$ at  $x_R$, thus $x_R$ and $x_L$ cannot be minimizers. Finally, we show that the zeros of the numerator $p(x)$ of the rational function ${\rm d} L / {\rm d} x$ differ from the denominator's zeros. So the optimal $x$ is the unique solution to $p(x)$ in $[x_L,x_R]$.
\end{proof}

\begin{comment}
From the general solution of the ODE and the special solutions in section 2, substitute and write down explicit formulas.
\end{comment}

\vspace{-0.7cm}
\subsection{Important observations}\label{sec:important_observations}
\vspace{-0.2cm}

Together, Sections \ref{sec:optimal_activation_functions} and \ref{sec:optimal_mu0_mu1_mu2} explicitly and fully characterize the solutions of \eqref{eq:bilevel_formulation_of_our_optimization_problem} in the ridgeless, overparametrized, and large sample regimes. A few important observations follow from our theory. In Appendix \ref{app:more_experiments} we discuss more on this topic and include details on the observations below.

{\bf Observation 1}: In regime $R_1$, and if $\alpha=0, \psi_1 < \psi_2$, the optimal AF is linear. This follows from Theorem \ref{th:optimal_parameters_for_special_case_1} and Remark \ref{remark:condition_for_linearity}.
Indeed, expressions simplify and we get that $p(x) = \rho {\psi_2} ({\psi_1}-(x+{\psi_1})^2)^2$ if $\psi_1 \neq 1$ or $p(x) = -(2 + x)^2 \rho \psi_2$ if $\psi_1=1$, which implies that $x_1$ does not exist (since it would be outside of $(-x_L,x_R)$). %
Hence, the first row of Table \ref{theorem_1_table} always gives $x=x_R$ and the optimal AF is linear.
Also, when $\alpha = 0, \psi_1<\psi_2$, we can explicitly compute the optimal objective (see paragraph before Theorem \ref{th:special_case_2_optimal_mu_values}). 
If furthermore $\tau=F_\star=0$, we can show that also when
$\psi_1 > \psi_2$, $x_1$ does not exist and $x=x_R$, therefore the optimal objective and AF when $\psi_1>\psi_2$ have the same formula as when $\psi_1 < \psi_2$.
Hence, if $\alpha = \tau=F_\star=0, \psi_1<\psi_2$, from the formula one can conclude that choosing an optimal linear AF destroys the double descent curve if $\psi_2 >1$\footnote{If $\psi_2 <1$ the optimal AF is still linear but the explosion at the interpolation threshold $\psi_1=\psi_2$ remains.}, the test error becoming exactly zero for $\psi_1 \geq 1$. 
This contrasts with choosing a non-linear,  sub-optimal, AF which will exhibit a double descent curve.
This justifies observation 1 (low complexity $\psi_1<\psi_2$) and observation 2 in Sec. \ref{sec:intro}. Fig. \ref{fig:important_observations}-(A,B) illustrates this and details the high-complexity ($\psi_1 > \psi_2$) observation.
\vspace{-0.2cm}

\begin{figure}[h!]
    \centering
    \includegraphics[width=0.25\textwidth]{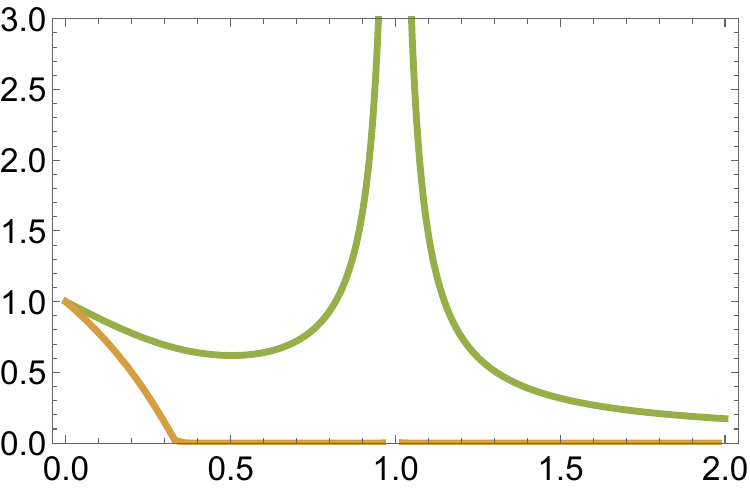}
    \put(-55,65){\footnotesize{ 
      (A)} }
    \put(-25, 55){\tiny{$F_\star = 0$}}
    \put(-21, 48){\tiny{$\tau=0$}}
    \put(-22, 40){\tiny{$\alpha=0$}}
    \put(-80, 55){\tiny{$F_1=1$}}
    \put(-80, 49){\tiny{$\psi_2 = 3$}}
    \put(-65, 20){\color{teal} \tiny{$L_{\sigma_{\text{ReLU}}}$}}
    \put(-78, 10){\color{orange} \tiny{$L_{\sigma^\star}$}}
    \put(-60, -5){\tiny{$\psi_1 / \psi_2$}}
    \includegraphics[width=0.25\textwidth]{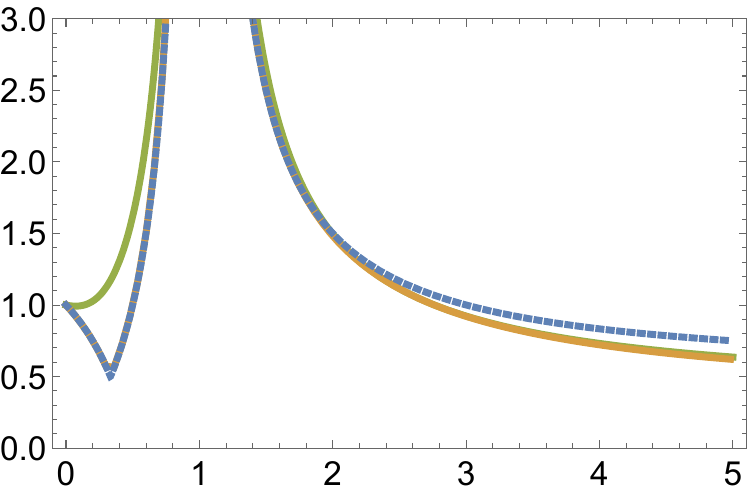}
    \put(-55,65){\footnotesize{ 
     (B)}}
    \put(-25, 55){\tiny{$F_\star = 0$}}
    \put(-21, 48){\tiny{$\tau=1$}}
    \put(-22, 40){\tiny{$\alpha=0$}}
    \put(-50, 55){\tiny{$F_1=1$}}
    \put(-50, 46){\tiny{$\psi_2=3$}}
    \put(-88, 30){\color{teal} \rotatebox{90}{\tiny{$L_{\sigma_{\text{ReLu}}}$}} }
    \put(-82, 20){\color{blue} \tiny{$L_{\sigma_{\text{Linear}}}$}}
    \put(-15, 12){\color{orange} \tiny{$L_{\sigma^\star}$}}
    \put(-60, -5){\tiny{$\psi_1 / \psi_2$}}
    \includegraphics[width=0.25\textwidth]{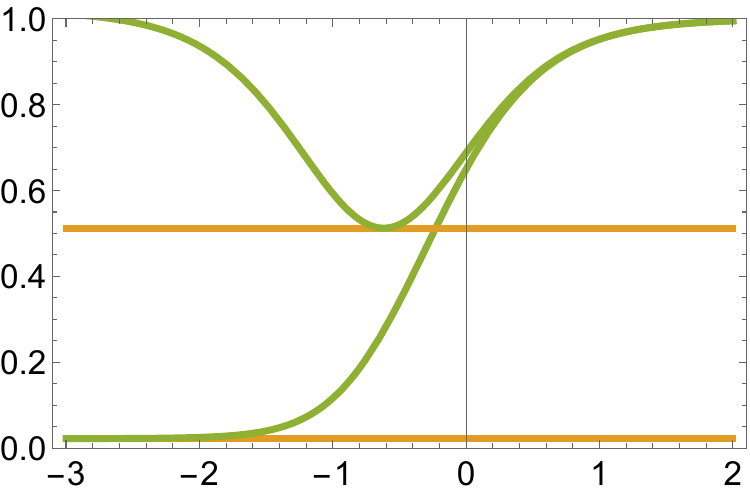}
    \put(-55,65){\footnotesize{ 
     (C)}}
    \put(-24, 55){\tiny{$F_1 = 1$}}
    \put(-40, 40){\tiny{\color{teal}$L_{\sigma_{\text{ReLU}}}$}}
    \put(-90, 52){\tiny{$\psi_2 = 10$}}
    \put(-60, 48){\color{teal}\tiny{$\tau^2=10$}}
    \put(-24, 48){\tiny{$F_\star=0$}}
    \put(-90, 45){\tiny{$\alpha=0$}}
    \put(-90, 30){\tiny{{\color{orange} $L_{\sigma_{\text{ReLU}}}(\lambda^\star)=L_{\sigma^\star}$}}}
    \put(-90, 9){\color{teal} \tiny{$\tau^2 = 5$}}
    \put(-65, 12){ \tiny{{\color{orange} $L_{\sigma_{\text{ReLU}}}(\lambda^\star)$} > {\color{orange} $L_{\sigma^\star}$}}}
    \put(-60, -5){\tiny{$\log_{10} \lambda$}}
    \includegraphics[width=0.25\textwidth]{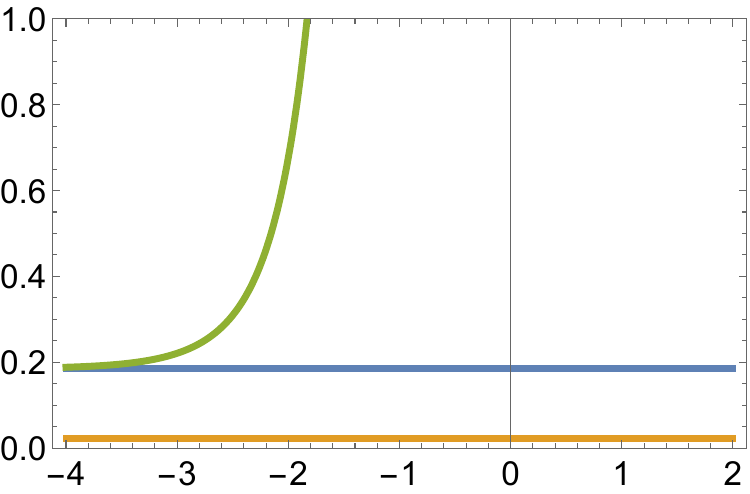}
    \put(-55,65){\footnotesize{ 
     (D)}}
    \put(-23, 55){\tiny{$F_\star = 0$}}
    \put(-70, 30){\tiny{\color{teal}$L_{\sigma_{\text{ReLU}}}$}}
    \put(-23, 46){\tiny{$\alpha=0$}}
    \put(-50, 55){\tiny{$F_1=10$}}
    \put(-50, 46){\tiny{$\psi_2=10$}}
    \put(-75, 48){\color{teal}\tiny{$\tau^2=5$}}
    \put(-60,20){\color{blue}\tiny{$L_{\sigma_{\text{ReLU}}}(\lambda^\star)$}}
    \put(-20, 11){\color{orange} \tiny{$L_{\sigma^\star}$}}
    \put(-60, -5){\tiny{$\log_{10} \lambda$}}
    \caption{\footnotesize{ (A) Consider the regime $R_1$. In a noiseless setting, if $\psi_2>1$, the evolution of $L$ versus $\psi_1$, when an optimal linear AF $\sigma^\star$ is used, can achieve $0$ test error for $\psi_1 \geq 1$.
    However, if a non-linear $\sigma_{\text{ReLU}}$ is used, we observe the typical double descent curve. 
    (B) Consider the regime $R_1$. If there is observation noise $\tau > 0$, the evolution of $L$ versus $\psi_1$ with
    a linear AF $\sigma_{\text{linear}}$ is only optimal for $\psi_1 <\psi_2$. For $\psi_1 > \psi_2$, $L$ is optimal for
    a linear AF until $\psi_1 < C$ ($C=5$ for the parameters here). For $\psi_1 > C$ 
    a non-linear AF $\sigma_\star$, here close to but different from a ReLU, achieves minimal $L$.
    (C) Consider the regime $R_2$. When a ReLU is used (green curves), the evolution of $L$ versus $\lambda$ for both low and high Signal to Noise Ratio (SNR) $\rho$ is only optimal for a special choice of $\lambda$, achieving the minimum $L_{\sigma_{\text{ReLU}}}(\lambda^\star)$.
    However, also for the same low and high SNR settings, when an optimal (non-linear) AF is used (orange curves), we obtain the same, or slightly better, $L_{\sigma^\star}$ regardless of any careful choice for $\lambda$. For low SNR ($\tau^2$ = 10) we have  $L_{\sigma^\star} =L_{\sigma_{\text{ReLU}}}(\lambda^\star)= 0.512$ and for high SNR ($\tau^2 =5$) we get $L_{\sigma_{\text{ReLU}}}(\lambda^\star)= 0.0220>  L_{\sigma^\star} = 0.0217$.
    (D) In a situation just like in (C) but with even higher SNR, the difference between the minimum $L$ that can be achieved with a particular choice of $\lambda$ (blue line ordinate value $L_{\sigma_{\text{ReLU}}}(\lambda^\star)$) and the value of $L$ with any choice of $\lambda$ but with an optimal (non-linear) AF (orange line ordinate value $L_{\sigma^\star}$) becomes clearly visible.
    Both (C) and (D) show that optimally tuning AFs can be different from optimally tuning regularization. Tuning AFs is always better or equal to tuning $\lambda$, showing the limits of the connection between AFs and implicit regularization when Gaussian equivalence holds (cf. Section \ref{sec:Gaussian_equivalence}).
    We include inside of each plot the parameters used. See Appendix \ref{app:how_to_reproduce_first_fig} for how to reproduce this figure.
    }
    \vspace{-0.4cm}
    }
    \label{fig:important_observations}
\end{figure}

{\bf Observation 2}: In regime $R_2$, looking at Theorem \ref{th:error_special_case_R2} and Theorem \ref{th:sensitivity_special_case_2}, one sees that both $\mathcal{E}$ and $\mathcal{S}$, and hence the objective $L$ (cf. \eqref{eq:general_definition_of_objective}), only depend on the optimal AF parameters via $\omega_2$. In particular, we can solve \eqref{eq:problem_to_solve_when_trying_to_find_optimal_mus} by searching for the $\omega_2$ that achieves the smallest objective. Given the definition of $\omega_2 = \omega_2(\psi_2,\zeta^2,\mu_\star,\lambda)$ in \eqref{th:th_eps_spe_case_2_w_2}, fixing $\lambda$ and changing $\zeta$ or $\mu_\star$ always allows one to span a larger range of values for $\omega_2$ than fixing the AF's parameters $\zeta,\mu_\star$ and changing $\lambda$.
In particular, a tedious calculation shows that in the first case the achievable range for $\omega_2$ is $[ \frac{\psi_2}{\min\{0^-,-1+\psi_2\}} , 0]$ which contains the range in the second case which is $\Big[\frac{1}{2} \Big(\zeta ^2 \left(-\psi _2\right)-\sqrt{\zeta ^2 \left(\psi _2 \left(\zeta ^2 \left(\psi _2-2\right)+2\right)+\zeta ^2+2\right)+1}+\zeta ^2+1\Big),0\Big]$.
This implies that while for a fixed AF one needs to tune  $\lambda$ during learning for best performance, if an optimal AF is used, regardless of $\lambda$, we always achieve either equal or better performance. This justifies the observation 4 made in Section \ref{sec:intro}. This is illustrated in Figure \ref{fig:important_observations}-(C,D).

In Appendix  \ref{app:real_exp} we have experiments involving real data that show consistency with these observations. 
The supplementary material has code to generate Fig. \ref{fig:important_observations} and the figures in Appendix \ref{app:real_exp} for real data.

\vspace{-0.8cm}
\section{Conclusion and future work} \label{sec:conclusion}
\vspace{-0.5cm}

We found optimal Activation Functions (AFs) for the Random Features Regression model (RFR) and characterized when these are linear, or non-linear. We connected the best AF to use with the regime in which RFR operates: e.g. using a linear AF can be optimal even beyond the interpolation threshold; in some regimes optimal AFs can replace, and outperform, regularization tuning. 

We reduced the gap between the practice and theory of AFs' design, but parts remain to be closed.
%
%A limitation of our work is that to obtain explicit formulas we restricted ourselves to three important regimes only. Although a general and complete explicit treatment is unlikely, we have a plan to extract in future work a few properties of the optimal AF in a general setting, such as determining when the optimal AF will, or will not, be linear. 
%
For example, we could only obtain explicit equations for optimal AFs under two functional norms in the  optimization problem from which we extract them. One could explore other norms in the future. One could also explore adding higher order moment restrictions to the AF since 
%
%One research direction for which we have concrete plans is focusing on the two norms used and adding higher order moment restrictions to the AF.
%
some of these higher order constraints appear in the theoretical analysis of neural models \citep{ghorbani2021linearized}.

One open problem is determining, both numerically and analytically, how generic our observations are. One could numerically compute optimal AFs for several models, target functions, and regimes beyond the ones we considered here, and determine how the conditions under which the optimal AF is, or not, linear compare with the conditions we presented.
We suspect that the choice of target function affects our conclusions. In fact, even for our current results, the amount of non-linearity in our target function affects our conclusions. In particular, it can affect the optimal AF being linear or not linear (this is visible in Theorem \ref{th:optimal_parameters_for_special_case_1} in its dependency on $\rho$, cf. \eqref{eq:def_of_signal_to_noise_ratio_and_F*}, via the polynomial $p(x)$).

Another future direction would be to study optimal AFs when the first layer in our model is also trained, even if with just one gradient step. For this model there are asymptotic expressions for the test error \citep{ba2022} to which one could apply a similar analysis as in this paper. One could  also study the RFR model under different distributions for the random weights, including the \emph{Xavier distribution} \citep{pmlr-v9-glorot10a} and the \emph{Kaiming distribution} \citep{He_2015}. Some preliminary experimental results are included in Appendix \ref{app:diff_init}.  Regarding the use of different distributions, we note the following: The key technical contribution in \cite{andrea1} is the use of random matrix theory to show that the spectral distribution of the Gramian matrix obtained from the Regressor matrix in the ridge regression is asymptotically unchanged if the Regressor matrix is recomputed by replacing the application of an AF element wise by the addition of Gaussian i.i.d. noise. Because many universality results exist in random matrix theory, we expect that for other choices of random weights distributions, exactly the same asymptotic results would hold. The first thing to try to prove would be similar results for from well-known random matrix ensembles. We note that \cite{Gerace20} provides very strong numerical evidence that this is true for other matrix ensembles, and stronger results are known in what concerns just spectral equivalence \citep{Benigni_2021}.

One could study both the RFR and other models in regimes other than the asymptotic proportional regime. The work of \cite{Misiakiewicz2022} is a good lead since it provides asymptotic error formulas  derived for our setup but when $n \sim \text{poly}(d), d\to \infty$. In this regime, the RFR can learn non-linear target functions with zero MSE \citep{Misiakiewicz2022, ghorbani2021linearized}. These formulas are equivalent to the ones in \cite{andrea1} after a renormalization of parameters, a reparametrisation that depend on a problem's constants, as noted in \cite{luANDyau22}. It is unclear if this reparametrisation would change the high-level observations from our work but we expect it to change their associated low-level details, like Table \ref{theorem_1_table}'s thresholds.
It would be interesting to make these same investigations for more realistic neural architectures, such as \cite{Mikhail19} and \cite{Nakkiran19}, for which phenomena such as the double descent curve is well documented.

Finally, it would be interesting to design AFs for an RFR model that optimizes a combination of test error and robustness to test/train distribution shifts and adversarial attacks. The starting point would be \cite{tripuraneni2021overparameterization,hassani2022curse} (cf Appendix \ref{sec:related_work}). The results of \cite{hassani2022curse} would need to be generalized from a ReLU to general AFs before one could optimize the AFs' parameters.

\section*{Acknowledgments}

We thank Song Mei for valuable discussions regarding the asymptotic properties of the Random Feature Regression model. We thank Piotr Suwara for his help regarding Hermite-type differential equations and their solutions.

\bibliography{main}

\begin{thebibliography}{70}
\providecommand{\natexlab}[1]{#1}
\providecommand{\url}[1]{\texttt{#1}}
\expandafter\ifx\csname urlstyle\endcsname\relax
  \providecommand{\doi}[1]{doi: #1}\else
  \providecommand{\doi}{doi: \begingroup \urlstyle{rm}\Url}\fi

\bibitem[Agostinelli et~al.(2014)Agostinelli, Hoffman, Sadowski, and
  Baldi]{agostinelli2014learning}
Forest Agostinelli, Matthew Hoffman, Peter Sadowski, and Pierre Baldi.
\newblock Learning activation functions to improve deep neural networks, 2014.

\bibitem[Ba et~al.(2022)Ba, Erdogdu, Suzuki, Wang, Wu, and Yang]{ba2022}
Jimmy Ba, Murat~A. Erdogdu, Taiji Suzuki, Zhichao Wang, Denny Wu, and Greg
  Yang.
\newblock High-dimensional asymptotics of feature learning: How one gradient
  step improves the representation, 2022.
\newblock URL \url{https://arxiv.org/abs/2205.01445}.

\bibitem[Bach(2017)]{bach2017equivalence}
Francis Bach.
\newblock On the equivalence between kernel quadrature rules and random feature
  expansions.
\newblock \emph{The Journal of Machine Learning Research}, 18\penalty0
  (1):\penalty0 714--751, 2017.

\bibitem[Banerjee et~al.(2019)Banerjee, Mukherjee, and
  Pasiliao~Jr]{banerjee2019empirical}
Chaity Banerjee, Tathagata Mukherjee, and Eduardo Pasiliao~Jr.
\newblock An empirical study on generalizations of the relu activation
  function.
\newblock In \emph{Proceedings of the 2019 ACM Southeast Conference}, pp.\
  164--167, 2019.

\bibitem[Bean et~al.(2013)Bean, Bickel, El~Karoui, and Yu]{bean2013optimal}
Derek Bean, Peter~J Bickel, Noureddine El~Karoui, and Bin Yu.
\newblock Optimal m-estimation in high-dimensional regression.
\newblock \emph{Proceedings of the National Academy of Sciences}, 110\penalty0
  (36):\penalty0 14563--14568, 2013.

\bibitem[Belkin et~al.(2019{\natexlab{a}})Belkin, Hsu, Ma, and
  Mandal]{Mikhail19}
Mikhail Belkin, Daniel Hsu, Siyuan Ma, and Soumik Mandal.
\newblock Reconciling modern machine-learning practice and the classical
  bias–variance trade-off.
\newblock \emph{Proceedings of the National Academy of Sciences}, 116\penalty0
  (32):\penalty0 15849--15854, 2019{\natexlab{a}}.
\newblock \doi{10.1073/pnas.1903070116}.
\newblock URL \url{https://www.pnas.org/doi/abs/10.1073/pnas.1903070116}.

\bibitem[Belkin et~al.(2019{\natexlab{b}})Belkin, Rakhlin, and
  Tsybakov]{belkin2019does}
Mikhail Belkin, Alexander Rakhlin, and Alexandre~B Tsybakov.
\newblock Does data interpolation contradict statistical optimality?
\newblock In \emph{The 22nd International Conference on Artificial Intelligence
  and Statistics}, pp.\  1611--1619. PMLR, 2019{\natexlab{b}}.

\bibitem[Belkin et~al.(2020)Belkin, Hsu, and Xu]{belkin2020two}
Mikhail Belkin, Daniel Hsu, and Ji~Xu.
\newblock Two models of double descent for weak features.
\newblock \emph{SIAM Journal on Mathematics of Data Science}, 2\penalty0
  (4):\penalty0 1167--1180, 2020.

\bibitem[Benigni \& P{\'{e} }ch{\'{e}}(2021)Benigni and P{\'{e}
  }ch{\'{e}}]{Benigni_2021}
Lucas Benigni and Sandrine P{\'{e} }ch{\'{e}}.
\newblock Eigenvalue distribution of some nonlinear models of random matrices.
\newblock \emph{Electronic Journal of Probability}, 26\penalty0 (none), jan
  2021.
\newblock \doi{10.1214/21-ejp699}.
\newblock URL \url{https://doi.org/10.1214%2F21-ejp699}.

\bibitem[Bubeck et~al.(2020)Bubeck, Li, and Nagaraj]{bubeck2020law}
Sébastien Bubeck, Yuanzhi Li, and Dheeraj Nagaraj.
\newblock A law of robustness for two-layers neural networks, 2020.

\bibitem[Cao et~al.(2018)Cao, Wang, Ming, and Gao]{cao2018review}
Weipeng Cao, Xizhao Wang, Zhong Ming, and Jinzhu Gao.
\newblock A review on neural networks with random weights.
\newblock \emph{Neurocomputing}, 275:\penalty0 278--287, 2018.

\bibitem[Clevert et~al.(2015)Clevert, Unterthiner, and
  Hochreiter]{clevert2015fast}
Djork-Arné Clevert, Thomas Unterthiner, and Sepp Hochreiter.
\newblock Fast and accurate deep network learning by exponential linear units
  (elus), 2015.

\bibitem[D'Amour et~al.(2020)D'Amour, Heller, Moldovan, Adlam, Alipanahi,
  Beutel, Chen, Deaton, Eisenstein, Hoffman, et~al.]{andrea2}
Alexander D'Amour, Katherine Heller, Dan Moldovan, Ben Adlam, Babak Alipanahi,
  Alex Beutel, Christina Chen, Jonathan Deaton, Jacob Eisenstein, Matthew~D
  Hoffman, et~al.
\newblock Underspecification presents challenges for credibility in modern
  machine learning.
\newblock \emph{arXiv preprint arXiv:2011.03395}, 2020.

\bibitem[Daniely(2017)]{NIPS2017_489d0396}
Amit Daniely.
\newblock Sgd learns the conjugate kernel class of the network.
\newblock In I.~Guyon, U.~V. Luxburg, S.~Bengio, H.~Wallach, R.~Fergus,
  S.~Vishwanathan, and R.~Garnett (eds.), \emph{Advances in Neural Information
  Processing Systems}, volume~30. Curran Associates, Inc., 2017.
\newblock URL
  \url{https://proceedings.neurips.cc/paper/2017/file/489d0396e6826eb0c1e611d82ca8b215-Paper.pdf}.

\bibitem[Daniely et~al.(2016)Daniely, Frostig, and Singer]{NIPS2016_abea47ba}
Amit Daniely, Roy Frostig, and Yoram Singer.
\newblock Toward deeper understanding of neural networks: The power of
  initialization and a dual view on expressivity.
\newblock In D.~Lee, M.~Sugiyama, U.~Luxburg, I.~Guyon, and R.~Garnett (eds.),
  \emph{Advances in Neural Information Processing Systems}, volume~29. Curran
  Associates, Inc., 2016.
\newblock URL
  \url{https://proceedings.neurips.cc/paper/2016/file/abea47ba24142ed16b7d8fbf2c740e0d-Paper.pdf}.

\bibitem[de~G.~Matthews et~al.(2018)de~G.~Matthews, Hron, Rowland, Turner, and
  Ghahramani]{g.2018gaussian}
Alexander~G. de~G.~Matthews, Jiri Hron, Mark Rowland, Richard~E. Turner, and
  Zoubin Ghahramani.
\newblock Gaussian process behaviour in wide deep neural networks.
\newblock In \emph{International Conference on Learning Representations}, 2018.
\newblock URL \url{https://openreview.net/forum?id=H1-nGgWC-}.

\bibitem[Deng(2012)]{mnistdataset}
Li~Deng.
\newblock The mnist database of handwritten digit images for machine learning
  research [best of the web].
\newblock \emph{IEEE signal processing magazine}, 29\penalty0 (6):\penalty0
  141--142, 2012.

\bibitem[Dhifallah \& Lu(2020)Dhifallah and Lu]{Dhifallahandyue20}
Oussama Dhifallah and Yue~M. Lu.
\newblock A precise performance analysis of learning with random features,
  2020.
\newblock URL \url{https://arxiv.org/abs/2008.11904}.

\bibitem[Garriga-Alonso et~al.(2019)Garriga-Alonso, Rasmussen, and
  Aitchison]{garriga-alonso2018deep}
Adrià Garriga-Alonso, Carl~Edward Rasmussen, and Laurence Aitchison.
\newblock Deep convolutional networks as shallow gaussian processes.
\newblock In \emph{International Conference on Learning Representations}, 2019.
\newblock URL \url{https://openreview.net/forum?id=Bklfsi0cKm}.

\bibitem[Gelfand et~al.(2000)Gelfand, Silverman, et~al.]{gelfand2000calculus}
Izrail~Moiseevitch Gelfand, Richard~A Silverman, et~al.
\newblock \emph{Calculus of variations}.
\newblock Courier Corporation, 2000.

\bibitem[Gerace et~al.(2020)Gerace, Loureiro, Krzakala, Mezard, and
  Zdeborova]{Gerace20}
Federica Gerace, Bruno Loureiro, Florent Krzakala, Marc Mezard, and Lenka
  Zdeborova.
\newblock Generalisation error in learning with random features and the hidden
  manifold model.
\newblock In Hal~Daumé III and Aarti Singh (eds.), \emph{Proceedings of the
  37th International Conference on Machine Learning}, volume 119 of
  \emph{Proceedings of Machine Learning Research}, pp.\  3452--3462. PMLR,
  13--18 Jul 2020.
\newblock URL \url{https://proceedings.mlr.press/v119/gerace20a.html}.

\bibitem[Ghorbani et~al.(2021)Ghorbani, Mei, Misiakiewicz, and
  Montanari]{ghorbani2021linearized}
Behrooz Ghorbani, Song Mei, Theodor Misiakiewicz, and Andrea Montanari.
\newblock Linearized two-layers neural networks in high dimension.
\newblock \emph{The Annals of Statistics}, 49\penalty0 (2):\penalty0
  1029--1054, 2021.

\bibitem[Glorot \& Bengio(2010)Glorot and Bengio]{pmlr-v9-glorot10a}
Xavier Glorot and Yoshua Bengio.
\newblock Understanding the difficulty of training deep feedforward neural
  networks.
\newblock In Yee~Whye Teh and Mike Titterington (eds.), \emph{Proceedings of
  the Thirteenth International Conference on Artificial Intelligence and
  Statistics}, volume~9 of \emph{Proceedings of Machine Learning Research},
  pp.\  249--256, Chia Laguna Resort, Sardinia, Italy, 13--15 May 2010. PMLR.
\newblock URL \url{https://proceedings.mlr.press/v9/glorot10a.html}.

\bibitem[Goldt et~al.(2022)Goldt, Loureiro, Reeves, Krzakala, Mezard, and
  Zdeborova]{SebastianGoldt21}
Sebastian Goldt, Bruno Loureiro, Galen Reeves, Florent Krzakala, Marc Mezard,
  and Lenka Zdeborova.
\newblock The gaussian equivalence of generative models for learning with
  shallow neural networks.
\newblock In Joan Bruna, Jan Hesthaven, and Lenka Zdeborova (eds.),
  \emph{Proceedings of the 2nd Mathematical and Scientific Machine Learning
  Conference}, volume 145 of \emph{Proceedings of Machine Learning Research},
  pp.\  426--471. PMLR, 16--19 Aug 2022.
\newblock URL \url{https://proceedings.mlr.press/v145/goldt22a.html}.

\bibitem[Goodfellow et~al.(2016)Goodfellow, Bengio, and
  Courville]{goodfellow2016deep}
Ian Goodfellow, Yoshua Bengio, and Aaron Courville.
\newblock \emph{Deep learning}.
\newblock MIT press, 2016.

\bibitem[Goodfellow et~al.(2013)Goodfellow, Warde-Farley, Mirza, Courville, and
  Bengio]{goodfellow2013maxout}
Ian~J Goodfellow, David Warde-Farley, Mehdi Mirza, Aaron Courville, and Yoshua
  Bengio.
\newblock Maxout networks.
\newblock \emph{arXiv preprint arXiv:1302.4389}, 2013.

\bibitem[Goyal et~al.(2020)Goyal, Goyal, and Lall]{Goyal20normalized}
Mohit Goyal, Rajan Goyal, and Brejesh Lall.
\newblock Improved polynomial neural networks with normalised activations.
\newblock In \emph{2020 International Joint Conference on Neural Networks
  (IJCNN)}, pp.\  1--8, 2020.
\newblock \doi{10.1109/IJCNN48605.2020.9207535}.

\bibitem[Hassani \& Javanmard(2022)Hassani and Javanmard]{hassani2022curse}
Hamed Hassani and Adel Javanmard.
\newblock The curse of overparametrization in adversarial training: Precise
  analysis of robust generalization for random features regression.
\newblock \emph{arXiv preprint arXiv:2201.05149}, 2022.

\bibitem[Hastie et~al.(2019)Hastie, Montanari, Rosset, and
  Tibshirani]{Tibshirani2020}
Trevor Hastie, Andrea Montanari, Saharon Rosset, and Ryan~J. Tibshirani.
\newblock Surprises in high-dimensional ridgeless least squares interpolation,
  2019.
\newblock URL \url{https://arxiv.org/abs/1903.08560}.

\bibitem[Hastie et~al.(2022)Hastie, Montanari, Rosset, and
  Tibshirani]{hastie2022surprises}
Trevor Hastie, Andrea Montanari, Saharon Rosset, and Ryan~J Tibshirani.
\newblock Surprises in high-dimensional ridgeless least squares interpolation.
\newblock \emph{The Annals of Statistics}, 50\penalty0 (2):\penalty0 949--986,
  2022.

\bibitem[Hazan \& Jaakkola(2015)Hazan and Jaakkola]{Hazan2015StepsTD}
Tamir Hazan and T.~Jaakkola.
\newblock Steps toward deep kernel methods from infinite neural networks.
\newblock \emph{ArXiv}, abs/1508.05133, 2015.

\bibitem[He et~al.(2015)He, Zhang, Ren, and Sun]{He_2015}
Kaiming He, Xiangyu Zhang, Shaoqing Ren, and Jian Sun.
\newblock Delving deep into rectifiers: Surpassing human-level performance on
  imagenet classification.
\newblock \emph{2015 IEEE International Conference on Computer Vision (ICCV)},
  Dec 2015.
\newblock \doi{10.1109/iccv.2015.123}.
\newblock URL \url{http://dx.doi.org/10.1109/ICCV.2015.123}.

\bibitem[Hu \& Lu(2020)Hu and Lu]{HuLu20}
Hong Hu and Yue~M. Lu.
\newblock Universality laws for high-dimensional learning with random features,
  2020.
\newblock URL \url{https://arxiv.org/abs/2009.07669}.

\bibitem[Huang et~al.(2020)Huang, Qin, Zhou, Zhu, Liu, and
  Shao]{huang2020normalization}
Lei Huang, Jie Qin, Yi~Zhou, Fan Zhu, Li~Liu, and Ling Shao.
\newblock Normalization techniques in training dnns: Methodology, analysis and
  application.
\newblock \emph{arXiv preprint arXiv:2009.12836}, 2020.

\bibitem[Jacot et~al.(2018)Jacot, Gabriel, and Hongler]{jacot2018neural}
Arthur Jacot, Franck Gabriel, and Cl{\'e}ment Hongler.
\newblock Neural tangent kernel: Convergence and generalization in neural
  networks.
\newblock \emph{Advances in neural information processing systems}, 31, 2018.

\bibitem[Khan et~al.(2013)Khan, Ahmad, Khan, and Miller]{khan2013fast}
Maryam~Mahsal Khan, Arbab~Masood Ahmad, Gul~Muhammad Khan, and Julian~F Miller.
\newblock Fast learning neural networks using cartesian genetic programming.
\newblock \emph{Neurocomputing}, 121:\penalty0 274--289, 2013.

\bibitem[Kuka{\v{c}}ka et~al.(2017)Kuka{\v{c}}ka, Golkov, and
  Cremers]{kukavcka2017regularization}
Jan Kuka{\v{c}}ka, Vladimir Golkov, and Daniel Cremers.
\newblock Regularization for deep learning: A taxonomy.
\newblock \emph{arXiv preprint arXiv:1710.10686}, 2017.

\bibitem[Liang \& Rakhlin(2020)Liang and Rakhlin]{liang2020just}
Tengyuan Liang and Alexander Rakhlin.
\newblock Just interpolate: Kernel “ridgeless” regression can generalize.
\newblock \emph{The Annals of Statistics}, 48\penalty0 (3):\penalty0
  1329--1347, 2020.

\bibitem[Loureiro et~al.(2021)Loureiro, Gerbelot, Cui, Goldt, Krzakala, Mezard,
  and Zdeborov\'{a}]{Loureiroetal2021}
Bruno Loureiro, Cedric Gerbelot, Hugo Cui, Sebastian Goldt, Florent Krzakala,
  Marc Mezard, and Lenka Zdeborov\'{a}.
\newblock Learning curves of generic features maps for realistic datasets with
  a teacher-student model.
\newblock In M.~Ranzato, A.~Beygelzimer, Y.~Dauphin, P.S. Liang, and J.~Wortman
  Vaughan (eds.), \emph{Advances in Neural Information Processing Systems},
  volume~34, pp.\  18137--18151. Curran Associates, Inc., 2021.
\newblock URL
  \url{https://proceedings.neurips.cc/paper/2021/file/9704a4fc48ae88598dcbdcdf57f3fdef-Paper.pdf}.

\bibitem[Lu \& Yau(2022)Lu and Yau]{luANDyau22}
Yue~M. Lu and Horng-Tzer Yau.
\newblock An equivalence principle for the spectrum of random inner-product
  kernel matrices, 2022.
\newblock URL \url{https://arxiv.org/abs/2205.06308}.

\bibitem[Maas et~al.(2013)Maas, Hannun, and Ng]{maas2013rectifier}
Andrew~L Maas, Awni~Y Hannun, and Andrew~Y Ng.
\newblock Rectifier nonlinearities improve neural network acoustic models.
\newblock In \emph{Proc. icml}, volume~30, pp.\ ~3, 2013.

\bibitem[Mei \& Montanari(2022)Mei and Montanari]{andrea1}
Song Mei and Andrea Montanari.
\newblock The generalization error of random features regression: Precise
  asymptotics and the double descent curve.
\newblock \emph{Communications on Pure and Applied Mathematics}, 75\penalty0
  (4):\penalty0 667--766, 2022.

\bibitem[Mel \& Pennington(2022)Mel and Pennington]{mel2022anisotropic}
Gabriel Mel and Jeffrey Pennington.
\newblock Anisotropic random feature regression in high dimensions.
\newblock In \emph{International Conference on Learning Representations}, 2022.

\bibitem[Milletarí et~al.(2019)Milletarí, Chotibut, and
  Trevisanutto]{milletari2019mean}
Mirco Milletarí, Thiparat Chotibut, and Paolo~E. Trevisanutto.
\newblock Mean field theory of activation functions in deep neural networks,
  2019.

\bibitem[Misiakiewicz(2022)]{Misiakiewicz2022}
Theodor Misiakiewicz.
\newblock Spectrum of inner-product kernel matrices in the polynomial regime
  and multiple descent phenomenon in kernel ridge regression, 2022.
\newblock URL \url{https://arxiv.org/abs/2204.10425}.

\bibitem[Montanari \& Saeed(2022)Montanari and Saeed]{MontanariANDSaeed2022}
Andrea Montanari and Basil Saeed.
\newblock Universality of empirical risk minimization, 2022.
\newblock URL \url{https://arxiv.org/abs/2202.08832}.

\bibitem[Nakkiran et~al.(2019)Nakkiran, Kaplun, Bansal, Yang, Barak, and
  Sutskever]{Nakkiran19}
Preetum Nakkiran, Gal Kaplun, Yamini Bansal, Tristan Yang, Boaz Barak, and Ilya
  Sutskever.
\newblock Deep double descent: Where bigger models and more data hurt, 2019.
\newblock URL \url{https://arxiv.org/abs/1912.02292}.

\bibitem[Nakkiran et~al.(2021)Nakkiran, Kaplun, Bansal, Yang, Barak, and
  Sutskever]{nakkiran2021deep}
Preetum Nakkiran, Gal Kaplun, Yamini Bansal, Tristan Yang, Boaz Barak, and Ilya
  Sutskever.
\newblock Deep double descent: Where bigger models and more data hurt.
\newblock \emph{Journal of Statistical Mechanics: Theory and Experiment},
  2021\penalty0 (12):\penalty0 124003, 2021.

\bibitem[Nicolae(2018)]{nicolae2018plu}
Andrei Nicolae.
\newblock Plu: The piecewise linear unit activation function.
\newblock \emph{arXiv preprint arXiv:1809.09534}, 2018.

\bibitem[Novak et~al.(2019)Novak, Xiao, Bahri, Lee, Yang, Abolafia, Pennington,
  and Sohl-dickstein]{novak2019bayesian}
Roman Novak, Lechao Xiao, Yasaman Bahri, Jaehoon Lee, Greg Yang, Daniel~A.
  Abolafia, Jeffrey Pennington, and Jascha Sohl-dickstein.
\newblock Bayesian deep convolutional networks with many channels are gaussian
  processes.
\newblock In \emph{International Conference on Learning Representations}, 2019.
\newblock URL \url{https://openreview.net/forum?id=B1g30j0qF7}.

\bibitem[Pennington et~al.(2018)Pennington, Schoenholz, and
  Ganguli]{pennington2018emergence}
Jeffrey Pennington, Samuel Schoenholz, and Surya Ganguli.
\newblock The emergence of spectral universality in deep networks.
\newblock In \emph{International Conference on Artificial Intelligence and
  Statistics}, pp.\  1924--1932. PMLR, 2018.

\bibitem[Poli(1996)]{poli1996parallel}
Riccardo Poli.
\newblock \emph{Parallel distributed genetic programming}.
\newblock University of Birmingham, Cognitive Science Research Centre, 1996.

\bibitem[Rahimi \& Recht(2007{\natexlab{a}})Rahimi and
  Recht]{NIPS2007_013a006f}
Ali Rahimi and Benjamin Recht.
\newblock Random features for large-scale kernel machines.
\newblock In J.~Platt, D.~Koller, Y.~Singer, and S.~Roweis (eds.),
  \emph{Advances in Neural Information Processing Systems}, volume~20. Curran
  Associates, Inc., 2007{\natexlab{a}}.
\newblock URL
  \url{https://proceedings.neurips.cc/paper/2007/file/013a006f03dbc5392effeb8f18fda755-Paper.pdf}.

\bibitem[Rahimi \& Recht(2007{\natexlab{b}})Rahimi and Recht]{rahimi2007random}
Ali Rahimi and Benjamin Recht.
\newblock Random features for large-scale kernel machines.
\newblock \emph{Advances in neural information processing systems}, 20,
  2007{\natexlab{b}}.

\bibitem[Ramachandran et~al.(2017)Ramachandran, Zoph, and
  Le]{ramachandran2017searching}
Prajit Ramachandran, Barret Zoph, and Quoc~V. Le.
\newblock Searching for activation functions, 2017.

\bibitem[Rasamoelina et~al.(2020)Rasamoelina, Adjailia, and
  Sin{\v{c}}{\'a}k]{rasamoelina2020review}
Andrinandrasana~David Rasamoelina, Fouzia Adjailia, and Peter Sin{\v{c}}{\'a}k.
\newblock A review of activation function for artificial neural network.
\newblock In \emph{2020 IEEE 18th World Symposium on Applied Machine
  Intelligence and Informatics (SAMI)}, pp.\  281--286. IEEE, 2020.

\bibitem[Rozsa \& Boult(2019)Rozsa and Boult]{rozsa2019improved}
Andras Rozsa and Terrance~E. Boult.
\newblock Improved adversarial robustness by reducing open space risk via tent
  activations.
\newblock 2019.

\bibitem[Rudin et~al.(1976)]{rudin1976principles}
Walter Rudin et~al.
\newblock \emph{Principles of mathematical analysis}, volume~3.
\newblock McGraw-hill New York, 1976.

\bibitem[Taheri et~al.(2021)Taheri, Pedarsani, and
  Thrampoulidis]{taheri2021fundamental}
Hossein Taheri, Ramtin Pedarsani, and Christos Thrampoulidis.
\newblock Fundamental limits of ridge-regularized empirical risk minimization
  in high dimensions.
\newblock In \emph{International Conference on Artificial Intelligence and
  Statistics}, pp.\  2773--2781. PMLR, 2021.

\bibitem[Tavakoli et~al.(2021)Tavakoli, Agostinelli, and Baldi]{TAVAKOLI20211}
Mohammadamin Tavakoli, Forest Agostinelli, and Pierre Baldi.
\newblock Splash: Learnable activation functions for improving accuracy and
  adversarial robustness.
\newblock \emph{Neural Networks}, 140:\penalty0 1--12, 2021.
\newblock ISSN 0893-6080.
\newblock \doi{https://doi.org/10.1016/j.neunet.2021.02.023}.
\newblock URL
  \url{https://www.sciencedirect.com/science/article/pii/S0893608021000733}.

\bibitem[Tripuraneni et~al.(2021)Tripuraneni, Adlam, and
  Pennington]{tripuraneni2021overparameterization}
Nilesh Tripuraneni, Ben Adlam, and Jeffrey Pennington.
\newblock Overparameterization improves robustness to covariate shift in high
  dimensions.
\newblock In A.~Beygelzimer, Y.~Dauphin, P.~Liang, and J.~Wortman Vaughan
  (eds.), \emph{Advances in Neural Information Processing Systems}, 2021.
\newblock URL \url{https://openreview.net/forum?id=PxMfDdPnTfV}.

\bibitem[Tsipras et~al.(2018)Tsipras, Santurkar, Engstrom, Turner, and
  Madry]{tsipras2018robustness}
Dimitris Tsipras, Shibani Santurkar, Logan Engstrom, Alexander Turner, and
  Aleksander Madry.
\newblock Robustness may be at odds with accuracy.
\newblock \emph{arXiv preprint arXiv:1805.12152}, 2018.

\bibitem[Unser(2019)]{unser2019representer}
Michael Unser.
\newblock A representer theorem for deep neural networks, 2019.

\bibitem[Weingaertner et~al.(2002)Weingaertner, Tatai, Gudwin, and
  Von~Zuben]{weingaertner2002hierarchical}
Daniel Weingaertner, Victor~K Tatai, Ricardo~R Gudwin, and Fernando~J
  Von~Zuben.
\newblock Hierarchical evolution of heterogeneous neural networks.
\newblock In \emph{Proceedings of the 2002 Congress on Evolutionary
  Computation. CEC'02 (Cat. No. 02TH8600)}, volume~2, pp.\  1775--1780. IEEE,
  2002.

\bibitem[Williams(1996)]{NIPS1996_ae5e3ce4}
Christopher Williams.
\newblock Computing with infinite networks.
\newblock In M.~C. Mozer, M.~Jordan, and T.~Petsche (eds.), \emph{Advances in
  Neural Information Processing Systems}, volume~9. MIT Press, 1996.
\newblock URL
  \url{https://proceedings.neurips.cc/paper/1996/file/ae5e3ce40e0404a45ecacaaf05e5f735-Paper.pdf}.

\bibitem[Wuraola \& Patel(2018)Wuraola and Patel]{wuraola2018sqnl}
Adedamola Wuraola and Nitish Patel.
\newblock Sqnl: A new computationally efficient activation function.
\newblock In \emph{2018 International Joint Conference on Neural Networks
  (IJCNN)}, pp.\  1--7. IEEE, 2018.

\bibitem[Yang et~al.(2021)Yang, Bai, and Mei]{yang2021exact}
Zitong Yang, Yu~Bai, and Song Mei.
\newblock Exact gap between generalization error and uniform convergence in
  random feature models.
\newblock In \emph{International Conference on Machine Learning}, pp.\
  11704--11715. PMLR, 2021.

\bibitem[Zhang et~al.(2019)Zhang, Yu, Jiao, Xing, El~Ghaoui, and
  Jordan]{zhang2019theoretically}
Hongyang Zhang, Yaodong Yu, Jiantao Jiao, Eric Xing, Laurent El~Ghaoui, and
  Michael Jordan.
\newblock Theoretically principled trade-off between robustness and accuracy.
\newblock In \emph{International conference on machine learning}, pp.\
  7472--7482. PMLR, 2019.

\bibitem[Zhou et~al.(2020)Zhou, Li, Huo, and Kung]{zhou2020soft}
Yuan Zhou, Dandan Li, Shuwei Huo, and Sun-Yuan Kung.
\newblock Soft-root-sign activation function.
\newblock \emph{arXiv preprint arXiv:2003.00547}, 2020.

\bibitem[Zhou et~al.(2021)Zhou, Zhu, and Zhong]{zhou2021learning}
Yucong Zhou, Zezhou Zhu, and Zhao Zhong.
\newblock Learning specialized activation functions with the piecewise linear
  unit, 2021.

\end{thebibliography}
\bibliographystyle{iclr2023_conference}

\appendix

%
%%%%%%%%%%%%%%%%%%%%%%%%%%%%%%%%%%%%%%%%%%%%
%
\newpage

\section{Comment on the choice of metrics} \label{app:comment_on_metric_choice}

A few reason for us choosing norms \eqref{eq:norm_def_1} and \eqref{eq:norm_def_2} in our setup are the following.
\begin{itemize}
    \item We use the L1 and L2 norms because these are two of the most widely used functional norms.
	\item We use these norms on a Gaussian-weighted space because the dependency of performance on the activation functions (AFs) from prior work that we build involves Gaussian-weighted measures. To be specific, both the sensitivity and error discussed in Section \ref{sec:background_mse} and Section \ref{sec:sensitivity_properties} depend on the AF only via $\mu_0, \mu_1$, and $\mu_2$, defined in \eqref{eq:def_of_mu_0_mu_1_mu_2_mu_star_zeta}, and these in turn are defined using a Gaussian distribution. In other words, Gaussian-weighted spaces are the natural space in our high-dimensional setting.
	\item We focus on the derivative of the AF to impose the notion that the AF cannot have sudden changes, i.e. needs to be simple.
\end{itemize}

We are aware that other choices for functional norms are possible, e.g. taking higher-order derivatives and/or higher-order moments of the AFs, and we plan to investigate them in future work as mentioned in Section \ref{sec:conclusion}.

%\vspace{-0.3cm}
\section{Comment on the choices of sensitivity} \label{app:comment_on_sensitivity}
%\vspace{-0.3cm}

This appendix pertains the choice of 
definition for sensitivity in 
\eqref{eq:sensitivity}.

%Our sensitivity $\mathcal{S}$ is an lower bound for the perhaps more natural $\EX( \|\nabla_x f(\bx)\|^2)$, whose use would make our analysis much more complicated. 

We want to relate $\EX( \|\nabla_{\bx} f_{\ba^\star,\Theta}(\bx)\|^2)$ with $ \| \EX (\nabla_{\bx} f_{\ba^\star,\Theta}(\bx))\|^2$. The expectations can be taken just with respect to the test data $\bx$ since in the asymptotic regime these quantities concentrate around their expected values with respect to the other random variables. 

The gradient $\nabla_{\bx} f_{\ba^\star,\Theta}(\bx)$ equals ${\ba^\star}^T (R \Theta/ \sqrt{d})$, where $R = \text{diag}(\sigma'(\Theta \bx / \sqrt{d}))$, and for some vector $\bm v$, $\text{diag}(\bm v)$ is a diagonal matrix with diagonal equal to $\bm v$.
We can write 
\begin{equation} \label{eq:expansion_correct_sensitivity}
\EX \|\nabla_{\bx} f_{\ba^\star,\Theta}(\bx)\|^2 = \EX \sum_k (\sum_i a^\star_i R_{ii} \Theta_{ik}/\sqrt{d})^2 = \sum_{i,j,k} a^\star_i a^\star_j \EX (R_{ii} R_{jj}) \Theta_{ik} \Theta_{jk} / d.
\end{equation}

Following \cite{andrea2}, Appendix E.5, when $d \to \infty$, we use the fact that $\langle\btheta_i \bx\rangle / \sqrt{d} \to Z$, where $Z \sim \mathcal{N}(0,1)$, and hence that $\EX R_{ii} = \EX \sigma'(\langle \btheta_i , \bx\rangle / \sqrt{d}) = \EX(\sigma'(Z)) + o(1) = \mu_1 + o(1)$,
to compute 
$\EX (R_{ii} R_{jj})$. 

We need to consider two scenarios. If $i\neq j$, then $\EX R_{ii} R_{jj}= \EX R_{ii} \EX R_{jj} = \mu_1^2$.
If $i = j$, then $\EX R_{ii} R_{jj} = \EX (\sigma'(Z))^2$.
Let us define $\mu_3 = \EX( \sigma'(Z)^2)$.
Replacing the formulas for
$\EX R_{ii} R_{jj}$ in \eqref{eq:expansion_correct_sensitivity} we get that 
\begin{equation} \label{eq:sensitivity_equivalence}
 \EX \|\nabla_{\bx} f_{\ba^\star,\Theta}(\bx)\|^2 = \mu_1^2 \| {\ba^\star}^T \Theta   \|^2/ d   + (\mu_3 - \mu_1^2) \sum_{i,k} {a^\star_i}^2 \Theta^2_{i,k} / d.
\end{equation}

The first term of the r.d.s. of  \eqref{eq:sensitivity_equivalence} is exactly eq. (19) on \cite{andrea2} for which we are given asymptotic expression and which we use as the basis of Theorems \ref{th:sensitivity_special_case_1}-\ref{th:sensitivity_special_case_3}, which are themselves a specialization of Theorem \ref{th:general_sensitivity}.
The second term is non-negative since by Jensen's  inequality $\mu_3 =\EX(\sigma'(Z)^2) \geq \EX(\sigma'(Z))^2 = \mu^2_1$. Therefore, $\EX( \|\nabla_{\bx} f_{\ba^\star,\Theta(\bx)}\|^2) \geq \| \EX (\nabla_{\bx} f_{\ba^\star,\Theta(\bx)})\|^2$.

The results we present, especially those in Section \ref{sec:optimal_mu0_mu1_mu2}, are already extremely complex to state and to interpret, even with us essentially
optimizing only two parameters, $\mu_1$ and $\mu_2$ ($\mu_0$ can be assumed $0$ without loss of generality). 
Stating results on  optimizing an extra parameter $\mu_3$ would make our exposition even more complex, with many 
more special cases. 
Only in the specific case where we choose the objective \eqref{eq:norm_def_2} for the outer optimization problem \eqref{eq:bilevel_formulation_of_our_optimization_problem}, it clear that $\mu_3$ is determined from $\mu_0, \mu_1$ and $\mu_2$, and hence it is clear that there is no added complexity in the number of parameters to optimize. However, we still need to find an asymptotically formula for the second term in \eqref{eq:sensitivity_equivalence}. 

At the same time, while the first term in \eqref{eq:sensitivity_equivalence} can be expressed as the trace of a product of random matrices, making it easier to use random matrix theory to get asymptotic formulas for it, the second term does not easily yield to a similar type of analysis.

%\vspace{-0.3cm}
\section{More on related work} \label{sec:related_work}
%\vspace{-0.3cm}

First attempts to optimize AFs include \cite{poli1996parallel}, \cite{weingaertner2002hierarchical}, and \cite{khan2013fast}, where genetic and evolutionary algorithms were used to learn how to numerically combine different AFs from a library into the same network.
More recently, \cite{ramachandran2017searching} used   reinforcement learning to empirically discover AFs
that minimize test accuracy. Their search was done over AFs that were a combination of basic units. This work produced the \emph{Swish} AF.
Similarly, \cite{Goyal20normalized} defined AFs as the weighted sum of a pre-defined basis and searched for optimal weights via training.
\cite{unser2019representer} 
provided a theoretical foundation to simultaneously learn a NN's weights and continuous piecewise-linear AFs. They showed that learning in their framework is compatible with learning in current existing deep-ReLU, parametric ReLU, APL (adaptive piecewise-linear) and MaxOut architectures. 
\cite{TAVAKOLI20211} parameterized continuous piece-wise linear AFs and numerically learnt their parameters to improve both accuracy and robustness to adversarial perturbations. They numerically compared the performance of their SPLASH framework with that of using ReLUs, leaky-ReLUs \citep{maas2013rectifier}, PReLUs \citep{He_2015}, $\tanh$ units, sigmoid units, ELUs \citep{clevert2015fast}, maxout units \citep{goodfellow2013maxout}, Swish units, and APL units \citep{agostinelli2014learning}.
Similarly, \cite{zhou2021learning} parameterized AFs as piece-wise linear units and learnt the AFs parameters to optimize different tasks.
\cite{banerjee2019empirical} proposed an empirical method to learn variations of ReLUs.
\cite{bubeck2020law} studied $2$-layer NNs and gave a condition on the Lipschitz constant of a polynomial AF for the network to perfectly fit data. 
They related this condition to the model's parameter-size and robustness and numerically related the number of ReLUs in the model to its robustness.

Several papers proposed new AFs and empirically studied their performance without systematically tuning them.
\cite{milletari2019mean} identified ReLU and Swish as naturally arising components of a statistical mechanics  model.
\cite{rozsa2019improved} introduced a ``tent''-shaped AF that improves robustness without adversarial training, while not hurting the accuracy on non-adversarial examples.
\cite{zhou2020soft} proposed an AF called SRS that can overcome the non-zero mean, negative missing, and unbounded output in ReLUs. Their work was purely empirical. 
\cite{wuraola2018sqnl} developed the SQuared Natural Law AF.
\cite{nicolae2018plu} proposed the Piece-wise Linear Unit AF. 

The RFR model was introduced by \cite{NIPS2007_013a006f} as a way to project input data into a low dimensional random features space and it has since then been studied  considerably. A great part of the literature has drawn connections between the expressive power of NNs and that of the RFR model, often via the study of Gaussian processes.
For example, \cite{NIPS1996_ae5e3ce4} did this in the context of shallow but infinitely wide NN and the works  \cite{garriga-alonso2018deep,novak2019bayesian,g.2018gaussian,Hazan2015StepsTD} did 
this for deep networks. 
\cite{NIPS2016_abea47ba,NIPS2017_489d0396} connected the RFR model to training a NN with gradient descent.   

In addition to \cite{andrea1,andrea2}, already discussed, other papers studied the approximation properties of the RFR model.
\cite{ghorbani2021linearized} studied both the RFR model and the neural tangent kernel model and provided conditions under which these models can fit polynomials in the raw features up to a maximum degree. These conditions were provided under two regimes, when $n\to\infty$  and $N ,d$ large but finite, or when $N\to \infty$ and $n,d$ large but finite. Their results hold under weak assumptions on the AFs.
\cite{tripuraneni2021overparameterization} used the RFR model to compute
how robust the test error is to distribution shifts between training and test data. This was done in a high-dimensional asymptotic limit when random features and training data are normal distributed. The derivations hold for a generic AF that satisfies some mild assumptions similar to the assumptions in this paper.
\cite{hassani2022curse}  characterized the role of overparametrization on the adversarial robustness for the RFR model under an asymptotic regime when learning a linear function with normal-distributed random weights and normal samples. Their AF was a shifted ReLU.

Finally, a few papers have studied the behavior of models similar to the RFR but within a different context. For example, \cite{taheri2021fundamental} and \cite{bean2013optimal} seek to compute the optimal loss function under similar asymptotic regimes of large data sets.

\section{Details regarding Theorem \ref{th:error_special_case_1}}\label{app:details_for_theorem_1_background}

The definition of the functions $\cuE_{0, \rless}$, $\cuE_{1, \rless}$, $\cuE_{2, \rless}$  and $\chi$ is as follows.

%\begin{comment}
\begin{align}
&\cuE_{0, \rless}(\ratio, \psi_1, \psi_2, \chi)\equiv~   - \chi^5\ratio^6 + 3\chi^4 \ratio^4+ (\psi_1\psi_2 - \psi_2 - \psi_1 + 1)\chi^3\ratio^6 - 2\chi^3\ratio^4 - 3\chi^3\ratio^2 \nonumber\\
&+ (\psi_1 + \psi_2 - 3\psi_1\psi_2 + 1)\chi^2\ratio^4 + 2\chi^2\ratio^2+ \chi^2+ 3\psi_1\psi_2\chi\ratio^2 - \psi_1\psi_2, \nonumber\\
&\cuE_{1, \rless}(\ratio, \psi_1, \psi_2, \chi)\equiv~ \psi_2 \chi^3 \ratio^4 - \psi_2 \chi^2 \ratio^2 + \psi_1 \psi_2 \chi \ratio^2 - \psi_1\psi_2, \text{ and } \nonumber\\
&\cuE_{2, \rless}(\ratio, \psi_1, \psi_2,  \chi)\equiv~ \chi^5\ratio^6 - 3\chi^4\ratio^4 + (\psi_1 - 1)\chi^3\ratio^6 + 2\chi^3\ratio^4 + 3\chi^3\ratio^2 + (- \psi_1 - 1)\chi^2\ratio^4 \hspace{-0.1cm}-\hspace{-0.1cm} 2\chi^2\ratio^2 \hspace{-0.1cm}-\hspace{-0.1cm} \chi^2,\nonumber\\
&\text{ where } \chi = \chi(\zeta, \psi) \equiv - \left(\sqrt{(\psi \ratio^2 - \ratio^2 - 1)^2 + 4  \ratio^2 \psi} + \psi \ratio^2 - \ratio^2 - 1\right)\Big/\left( 2 \ratio^2\right). \label{eq:chidef}
\end{align}
%\end{comment}

Note that all functions are a function of the square of $\zeta$, i.e. $\zeta^2$.
Note also that $\cuE_{0, \rless}$, $\cuE_{1, \rless}$, $\cuE_{2, \rless}$ are polynomials of their respective variables.

\section{Details regarding Theorem \ref{th:error_special_case_R2}}\label{app:details_for_theorem_2_background}

\begin{align}
&\cuB_{\wide}(\ratio,  \psi_2, \olambda) \equiv{(\psi_2\omegaRR - \psi_2)}/{( (\psi_2 - 1 )\omegaRR^3 +(1 - 3 \psi_2) \omegaRR^2  + 3 \psi_2 \omegaRR - \psi_2)}, \\ &\cuV_{\wide}(\ratio,  \psi_2, \olambda) \equiv  {(\omegaRR^3 - \omegaRR^2)}/{( ( \psi_2 - 1 )\omegaRR^3 +(1 - 3 \psi_2) \omegaRR^2  + 3 \psi_2 \omegaRR - \psi_2)},\\
&\text{where} \\
&\omegaRR \equiv - \left(\sqrt{( \psi_2\ratio^2 \hspace{-0.1cm}-\hspace{-0.1cm} \ratio^2 \sm \olambda\psi_2 -1)^2 \sp 4\psi_2\ratio^2(\olambda\psi_2 + 1)} +  \psi_2\ratio^2 \hspace{-0.1cm}-\hspace{-0.1cm} \ratio^2 \hspace{-0.1cm}-\hspace{-0.1cm}\olambda\psi_2 -1\right)\Big/\left(2(\olambda\psi_2 + 1)\right).
\label{th:th_eps_spe_case_2_w_2}
\end{align}

\section{Details regarding Theorem \ref{th:error_special_case_R3}}\label{app:details_for_theorem_3_background}
\begin{align}
&\cuB_{\lsamp}(\ratio, \psi_1, \olambda) \hspace{-0.11cm}\equiv\hspace{-0.11cm} {( ((\omegaRRR^3 \hspace{-0.11cm}-\hspace{-0.11cm}  \omegaRRR ^2) /\ratio^2)  + \psi_1 \omegaRRR  - \psi_1)}/{(( \psi_1 \hspace{-0.11cm}-\hspace{-0.11cm} 1)\omegaRRR^3+(1 \hspace{-0.11cm}-\hspace{-0.11cm} 3\psi_1)\omegaRRR^2 + 3 \psi_1 \omegaRRR - \psi_1)}, \\
&\text{where} \\
&\omegaRRR \equiv - \left(\sqrt{( \psi_1\ratio^2 - \ratio^2 \hspace{-0.1cm}-\hspace{-0.1cm}\olambda\psi_1 \hspace{-0.1cm}-\hspace{-0.1cm}1)^2 + 4\psi_1\ratio^2(\olambda\psi_1 + 1)} +  \psi_1\ratio^2 - \ratio^2 \hspace{-0.1cm}-\hspace{-0.1cm}\olambda\psi_1 \hspace{-0.1cm}-\hspace{-0.1cm}1\right)\Big/\left(2(\olambda\psi_1 + 1)\right). 
\label{eq:th_E_spec_case_2_w_1}
\end{align}

\section{General theorems for the asymptotic mean squared test error and sensitivity of the RFR model} \label{app:general_theorems_error_and_sensitivity}

This appendix is referenced at the start of Section \ref{sec:background_mse} and at the start of Section \ref{sec:sensitivity_properties}.

\begin{theorem} [Theorem 2 \cite{andrea1}] \label{th:main_theorem_test_error}
If assumptions 1, 2, 3 hold, and for any value of the regularization parameter $\lambda > 0$, the asymptotic test error \eqref{eq:test_error} for the RFR satisfies
\begin{align}\label{eqn:main}
 \mathcal{E} \xrightarrow{p} \mathcal{E}^{\infty} \equiv
   \normf_1^2 \cuB(\ratio, \psi_1, \psi_2, \lambda/\ob_{\star}^2)+ 
(\tau^2 + \normf_\star^2) \cuV(\ratio, \psi_1, \psi_2, \lambda/\ob_{\star}^2) + 
\normf_\star^2,  \end{align}
where $\xrightarrow{p} $ denotes convergence in probability when $d \to \infty$ with respect to the  training data $\mathcal{D}$, the random features $\bTheta$, and the random target function $f_d$, and where $\cuB(\ratio, \psi_1, \psi_2, \olambda) \equiv~  {\cuE_1(\ratio, \psi_1, \psi_2, \olambda) }/{\cuE_0 (\ratio, \psi_1, \psi_2, \olambda)  }$, $\cuV(\ratio, \psi_1, \psi_2, \olambda) \equiv~  {\cuE_2(\ratio, \psi_1, \psi_2, \olambda) }/{\cuE_0 (\ratio, \psi_1, \psi_2, \olambda)  }$, and 
the functions $\cuE_0$, $\cuE_1$, $\cuE_2$ are defined as follows,
\begin{equation*}
\begin{aligned}
\cuE_0(\ratio, \psi_1, \psi_2, \olambda) \equiv&~  - \chi^5\ratio^6 + 3\chi^4 \ratio^4+ (\psi_1\psi_2 - \psi_2 - \psi_1 + 1)\chi^3\ratio^6 - 2\chi^3\ratio^4 - 3\chi^3\ratio^2 \\
&+ (\psi_1 + \psi_2 - 3\psi_1\psi_2 + 1)\chi^2\ratio^4 + 2\chi^2\ratio^2+ \chi^2+ 3\psi_1\psi_2\chi\ratio^2 - \psi_1\psi_2\, ,\\
\cuE_1(\ratio, \psi_1, \psi_2,\olambda)  \equiv&~ \psi_2\chi^3\ratio^4 - \psi_2\chi^2\ratio^2 + \psi_1\psi_2\chi\ratio^2 - \psi_1\psi_2\, , \\
\cuE_2(\ratio, \psi_1, \psi_2, \olambda) \equiv&~ \chi^5\ratio^6 - 3\chi^4\ratio^4+ (\psi_1 - 1)\chi^3\ratio^6 + 2\chi^3\ratio^4 + 3\chi^3\ratio^2 + (- \psi_1 - 1)\chi^2\ratio^4 - 2\chi^2\ratio^2 - \chi^2\,,
\end{aligned}
\end{equation*}
where $\chi(\psi_1,\psi_2,\olambda) \equiv \nu_1(\imagunit ( \psi_1 \psi_2 \olambda)^{1/2}) \cdot \nu_2(\imagunit ( \psi_1 \psi_2 \olambda)^{1/2})$, and 
$\nu_1$ and $\nu_2$ are two functions specified in Def. 1 in \cite{andrea1}.
\end{theorem}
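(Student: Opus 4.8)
The plan is to reproduce the random-matrix-theory argument behind this result, which proceeds by writing the ridge estimator in closed form and reducing $\mathcal{E}$ to deterministic functions of the spectrum of the random-feature Gram matrix. Collect the features into $\bZ \in \reals^{n \times N}$ with entries $Z_{ji} = \sigma(\langle \btheta_i, \bx_j\rangle/\sqrt d)$; then \eqref{eq:RFR_model_2} gives $\ba^\star = (\tfrac1n \bZ^\top \bZ + \tfrac{N\lambda}{d} I)^{-1}\tfrac1n\bZ^\top \by$. Writing $\bsigma(\bx)$ for the feature vector at a test point, the prediction is $f(\bx)=\langle \ba^\star,\bsigma(\bx)\rangle$, so after taking the expectation over $\bx$ the error $\mathcal{E}$ splits into a bias term and a variance term, each a bilinear form in the resolvent $\bQ(\xi)=(\tfrac1n\bZ^\top\bZ+\xi I)^{-1}$ contracted against the population feature covariance $\bU=\EX_\bx[\bsigma(\bx)\bsigma(\bx)^\top]$ and the feature/target cross-covariance $\EX_\bx[\bsigma(\bx) f_d(\bx)]$. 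The whole theorem therefore reduces to computing deterministic limits of a handful of normalized traces and quadratic forms of $\bQ$.

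The second step is the linearization (Gaussian-equivalence) phenomenon. Expanding $\sigma$ in Hermite polynomials isolates a constant part $\mu_0$, a linear part $\mu_1$, and a purely nonlinear part of norm $\mu_\star$; in the proportional regime $N/d\to\psi_1$, $n/d\to\psi_2$ the Gram matrix $\tfrac1n\bZ^\top\bZ$ is shown, in operator norm, to be close to that of the equivalent model $\bZ \approx \mu_0\,\bfone\bfone^\top + \tfrac{\mu_1}{\sqrt d}\bX\bTheta^\top + \mu_\star \bW$, with $\bW$ having approximately i.i.d.\ entries orthogonal to the signal directions. This is exactly the inner-product kernel random matrix estimate, and it is where the growth and moment conditions of Assumption \ref{ass:1} on $\sigma$ enter. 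Because the nonlinear part degenerates into isotropic noise, only the three scalars $\mu_0,\mu_1,\mu_\star$ (equivalently $\ratio=\mu_1/\mu_\star$ and $\olambda=\lambda/\mu_\star^2$) survive into the limit, which explains the dependence of the formula on the AF solely through these quantities.

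Given the equivalent model, standard deterministic-equivalent theory for structured sample-covariance matrices provides a pair of Stieltjes transforms $\nu_1,\nu_2$ solving a coupled self-consistent (fixed-point) system; these are the functions of Def.~1 in \cite{andrea1}, and their product at $\xi=\imagunit(\psi_1\psi_2\olambda)^{1/2}$ defines $\chi$. I would substitute the deterministic equivalents of all the traces and bilinear forms appearing in the bias and variance, express everything as rational functions of $\chi$, $\ratio$, $\psi_1$, $\psi_2$, and then perform the mechanical but lengthy algebraic simplification to match $\cuE_0,\cuE_1,\cuE_2$, so that $\cuB=\cuE_1/\cuE_0$ and $\cuV=\cuE_2/\cuE_0$ as stated.

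The main obstacle is the rigorous justification of the Gaussian equivalence together with the concentration of the relevant bilinear forms: one must control, with vanishing operator-norm error, the contribution of the nonlinear component $\sigma_\perp$ and show that it neither correlates with the linear signal direction $\bX\bTheta^\top$ nor with the all-ones direction, which typically requires a leave-one-out / cavity argument or a Lindeberg interpolation, supported by the sub-exponential growth assumption. A secondary difficulty is selecting the correct physical branch of the fixed-point system so that $\nu_1,\nu_2$ are the genuine limits of the resolvent rather than a spurious algebraic root, and then verifying that the limit extends continuously to the boundary $\lambda\to 0^+$ used downstream to specialize the theorem into the ridgeless regime $R_1$.
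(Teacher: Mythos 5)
You have set out to prove a statement that this paper does not prove: Theorem \ref{th:main_theorem_test_error} is Theorem 2 of \cite{andrea1}, quoted verbatim in Appendix \ref{app:general_theorems_error_and_sensitivity} purely as background, and the authors give no argument for it. There is therefore no paper-internal proof to compare against; the only meaningful comparison is with the original proof in \cite{andrea1}.

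Measured against that, your outline correctly identifies the architecture of the argument — closed-form ridge estimator, reduction of $\mathcal{E}$ to resolvent traces and bilinear forms, a linearization step that collapses the activation to the three scalars $\mu_0,\mu_1,\mu_\star$, and a self-consistent (Stieltjes-transform) system whose solution gives $\chi$ via $\nu_1,\nu_2$. But as written it is a plan rather than a proof: every genuinely hard step is named and then deferred. Concretely, (i) the "Gaussian equivalence" / operator-norm closeness of the Gram matrix to the linearized model, (ii) the concentration of the specific bilinear forms against the signal and all-ones directions, and (iii) the identification of the physical branch of the fixed-point system are exactly the content of the proof in \cite{andrea1}, and you state them as obstacles without supplying the arguments. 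Two further inaccuracies of emphasis: the data in this setting lives on $\mathbb{S}^{d-1}(\sqrt d)$, so the decomposition in \cite{andrea1} is in Gegenbauer/spherical-harmonic coordinates (the Hermite coefficients $\mu_0,\mu_1,\mu_\star$ only emerge in the $d\to\infty$ limit of the Gegenbauer coefficients), and the limits of the traces are obtained there by differentiating a log-determinant functional whose stationarity conditions define $\nu_1,\nu_2$, rather than by directly substituting deterministic equivalents term by term. None of this makes your sketch wrong, but it means the proposal would need essentially the entire technical apparatus of \cite{andrea1} filled in before it constitutes a proof.
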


\begin{theorem}[Eq. (19)  \cite{andrea2}]\label{th:general_sensitivity}
If assumptions 1, 2 and 3 hold, and for any value of the regularization parameter $\olambda\equiv \lambda/\mu_{\star}^2 > 0$, the asymptotic sensitivity for the RFR, namely equation \eqref{eq:sensitivity} with $f=f_{\ba,\bTheta}$  where $f_{\ba,\bTheta}$ is defined as in equations \eqref{eq:RFR_model_1} and \eqref{eq:RFR_model_2}, satisfies
\begin{equation}
    \mathcal{S} \xrightarrow{p} \mathcal{S}^{\infty} \equiv\mu^2_1   \left( \frac{F_1^2}{\mu_\star^2}\cdot\frac{\cuD_1(\ratio,\psi_1,\psi_2,\olambda)}{(\chi\ratio^2-1)\cuD_0(\ratio,\psi_1,\psi_2,\olambda)}+\frac{\tau^2 + \normf_\star^2 }{\mu_\star^2}\cdot\frac{\cuD_2(\ratio,\psi_1,\psi_2,\olambda)}{
    \cuD_0(\ratio,\psi_1,\psi_2,\olambda)}\right),
\end{equation}
where $\xrightarrow{p} $ denotes convergence in probability when $d \to \infty$ with respect to the  training data $\bX$, $\beps{}$, the random features $\bTheta$, and the random target function $f_d$, and where
\begin{align}
  \cuD_0(\ratio,\psi_1,\psi_2,\olambda) & = \chi^5\ratio^6 - 3\chi^4\ratio^4 +
  (\psi_1 + \psi_2 - \psi_1\psi_2 - 1)\chi^3\ratio^6+ 2\chi^3\ratio^4 + 3\chi^3\ratio^2 \\
  & +(3\psi_1\psi_2 - \psi_2 - \psi_1 - 1)\chi^2\ratio^4- 2\chi^2\ratio^2 - \chi^2 - 3\psi_1\psi_2\chi\ratio^2 + \psi_1\psi_2
  \, ,\nonumber\\
  \cuD_1(\ratio,\psi_1,\psi_2,\olambda) & = \chi^6\ratio^6-2\chi^5\ratio^4-(\psi_1\psi_2-\psi_1-\psi_2+1)\chi^4\ratio^6+\chi^4\ratio^4\\
  & +\chi^4\ratio^2-2(1-\psi_1\psi_2)\chi^3\ratio^4-(\psi_1+\psi_2+\psi_1\psi_2+1)\chi^2\ratio^2-\chi^2\, ,\nonumber\\
  \cuD_2(\ratio,\psi_1,\psi_2,\olambda) & = -(\psi_1-1)\chi^3\ratio^4-\chi^3\ratio^2+(\psi_1+1)\chi^2\ratio^2+\chi^2\,,
\end{align}
and where $\chi$ is defined in \eqref{eq:chidef}.
\end{theorem}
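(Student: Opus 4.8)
The plan is to reduce the Dirichlet-energy functional $\mathcal{S}=\EX_\bx\|\nabla_\bx f\|^2$ to resolvent traces of the random-feature Gram matrix and then invoke the same random-matrix machinery that yields Theorem~\ref{th:main_theorem_test_error}. Differentiating \eqref{eq:RFR_model_1} gives $\nabla_\bx f(\bx)=d^{-1/2}\sum_{i} a^\star_i\,\sigma'(g_i)\,\btheta_i$ with $g_i\triangleq\langle\btheta_i,\bx\rangle/\sqrt d$, so that
\begin{equation}
\mathcal{S}=\frac1d\sum_{i,j=1}^{N} a^\star_i a^\star_j\,\langle\btheta_i,\btheta_j\rangle\,\EX_\bx\big[\sigma'(g_i)\sigma'(g_j)\big].
\end{equation}
The first step is to evaluate the inner expectation. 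Since $\bx$ is uniform on $\mathbb{S}^{d-1}(\sqrt d)$, for fixed features $(g_i,g_j)$ is asymptotically a centered bivariate Gaussian with unit variances and correlation $r_{ij}=\langle\btheta_i,\btheta_j\rangle/d$; expanding $\sigma'$ in Hermite polynomials and applying Mehler's formula gives $\EX_\bx[\sigma'(g_i)\sigma'(g_j)]=\sum_{k\ge0}b_k^2\,r_{ij}^{\,k}$ with leading coefficient $b_0^2=\mu_1^2$. Because the off-diagonal correlations satisfy $r_{ij}=O(d^{-1/2})$ while $r_{ii}=1$, I would track which powers of $r_{ij}$ survive after being multiplied by the weight $\langle\btheta_i,\btheta_j\rangle=d\,r_{ij}$ and summed against $a^\star_i a^\star_j$. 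The outcome is that $\mathcal{S}$ reduces, to leading order, to a fixed linear combination of a small number of bilinear forms in $\ba^\star$ (principally $d^{-1}\ba^{\star\top}\bTheta\bTheta^\top\ba^\star$, together with $\|\ba^\star\|^2$ and a signal-aligned form); the delicate point here is showing that the higher Hermite coefficients of $\sigma'$ collapse, so that the surviving dependence on the activation is only through $\mu_1$ and $\mu_\star$, consistent with the $\mu_1^2$ prefactor and the $\zeta=\mu_1/\mu_\star$ dependence of the stated formula.

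Next I would substitute the closed-form ridge solution $\ba^\star=(\tfrac1n\bZ^\top\bZ+\tfrac{N\lambda}{d}I)^{-1}\tfrac1n\bZ^\top\by$, where $\bZ_{ji}=\sigma(\langle\btheta_i,\bx_j\rangle/\sqrt d)$, and apply the Gaussian-equivalence linearization underlying \cite{andrea1}: asymptotically $\bZ$ is equivalent to $\mu_0\mathbf{1}\mathbf{1}^\top+\mu_1\bX\bTheta^\top/\sqrt d+\mu_\star\bW$ with $\bW$ an independent matrix of i.i.d.\ entries, while Assumption~\ref{ass:1} and Assumption~3 split $\by$ into its linear part $\beta_{d,0}+\bX\bbeta_{d,1}$ and an orthogonal component of variance $\tau^2+F_\star^2$. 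Feeding this into the bilinear forms turns each into a trace of products of the resolvent $(\tfrac1n\bZ^\top\bZ+\tfrac{N\lambda}{d}I)^{-1}$ against $\bTheta\bTheta^\top$ and against the signal direction $\bbeta_{d,1}$, thereby separating $\mathcal{S}^\infty$ into an $F_1^2$-signal contribution and a $(\tau^2+F_\star^2)$-noise contribution exactly as in the statement.

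The crux, and the step I expect to be the main obstacle, is computing the deterministic equivalents of these resolvent traces. I would organize the calculation around the same self-consistent Stieltjes-transform fixed point that defines $\nu_1,\nu_2$ and hence $\chi$ in \eqref{eq:chidef}, using the operator-valued free-probability / leave-one-out estimates of \cite{andrea1,andrea2}; concentration of the bilinear forms about their deterministic limits (so that $\mathcal{S}\xrightarrow{p}\mathcal{S}^\infty$) then follows from standard resolvent-perturbation bounds valid under the exponential growth condition of Assumption~\ref{ass:1}. The essential new feature relative to the pure test-error computation is the extra factor $\bTheta\bTheta^\top$ and the derivative $\sigma'$, which is precisely what replaces the numerators $\cuE_1,\cuE_2$ of Theorem~\ref{th:main_theorem_test_error} by $\cuD_1,\cuD_2$ while leaving a structurally similar denominator $\cuD_0$; the conceptual reason is that, by rotational invariance, the Dirichlet energy re-weights the spherical-harmonic (Hermite) modes, amplifying the degree-one linear mode, which is why the $\mu_1^2$ factor and the $(\chi\ratio^2-1)^{-1}$ term appear only on the signal part. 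The remaining work is the heavy but routine algebraic simplification of the resulting rational functions of $\chi$ into the compact polynomials $\cuD_0,\cuD_1,\cuD_2$, with \cite{andrea2}'s Eq.~(19) serving as the consistency check once the difference between their sensitivity notion and ours is accounted for.
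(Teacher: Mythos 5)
The first thing to note is that the paper does not prove Theorem~\ref{th:general_sensitivity} at all: it is imported verbatim as Eq.~(19) of \cite{andrea2} and stated in the appendix ``for completeness,'' with only the specializations to regimes $R_1,R_2,R_3$ (Theorems~\ref{th:sensitivity_special_case_1}--\ref{th:sensitivity_special_case_3}) proved in this paper, and those proofs consist of taking limits in the quoted formula. So there is no in-paper argument to compare yours against; the relevant comparison is with the derivation in \cite{andrea1,andrea2}, and your outline does follow that standard route: differentiate the RFR predictor, use Mehler's formula to reduce $\EX_\bx[\sigma'(g_i)\sigma'(g_j)]$ to powers of $\langle\btheta_i,\btheta_j\rangle/d$, argue that only the degree-zero Hermite mode of $\sigma'$ survives (giving the $\mu_1^2$ prefactor, with the diagonal term $\|\ba^\star\|^2\,\EX[(\sigma'(Z))^2]$ vanishing because $\|\ba^\star\|^2\to 0$), then substitute the ridge solution, linearize via Gaussian equivalence, and evaluate resolvent traces against $\bTheta\bTheta^\top$ through the fixed point defining $\chi$ in \eqref{eq:chidef}. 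All of these steps are sound and consistent with how such results are obtained.

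As a proof of the \emph{stated} theorem, however, the proposal has a genuine gap: the entire quantitative content of Theorem~\ref{th:general_sensitivity} is the explicit rational functions $\cuD_0,\cuD_1,\cuD_2$ and the $(\chi\ratio^2-1)^{-1}$ factor on the signal term, and your argument never produces them. You reduce $\mathcal{S}$ to ``a small number of bilinear forms'' and then assert that their deterministic equivalents, organized around the self-consistent equations for $\nu_1,\nu_2$, will simplify to the displayed polynomials after ``heavy but routine'' algebra. That is precisely the step where the new difficulty lives (the extra $\bTheta\bTheta^\top$ weighting requires deterministic equivalents for traces that do not appear in the test-error computation of Theorem~\ref{th:main_theorem_test_error}), and deferring it means the claimed identities are not verified. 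You also flag, correctly, that \cite{andrea2}'s sensitivity notion differs from the one in \eqref{eq:sensitivity}, but you leave the reconciliation as a ``consistency check'' rather than carrying it out; since the theorem is attributed to that equation, this translation is part of what must be established. In short: right strategy, but the proposal stops exactly where the proof would have to begin doing work specific to this statement.
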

\begin{remark}
Eq. (19) in \cite{andrea2} is expressed in terms of the asymptotic limit of $\|\ba^\star \Theta\|^2$. See Appendix \ref{app:comment_on_sensitivity} for the connection between this representation and equation \eqref{eq:sensitivity} in our definition of $\mathcal{S}$.
\end{remark}

%
%%%%%%%%%
%
\section{Details regarding Theorem \ref{th:sensitivity_special_case_1}}\label{app:details_sensitivity_1}

\begin{align}
  \cuD_{0, \rless}(\ratio,\psi_1,\psi_2) & = \chi^5\ratio^6 - 3\chi^4\ratio^4 +
  (\psi_1 + \psi_2 - \psi_1\psi_2 - 1)\chi^3\ratio^6+ 2\chi^3\ratio^4 + 3\chi^3\ratio^2 \\
  & +(3\psi_1\psi_2 - \psi_2 - \psi_1 - 1)\chi^2\ratio^4- 2\chi^2\ratio^2 - \chi^2 - 3\psi_1\psi_2\chi\ratio^2 + \psi_1\psi_2
  \, ,\nonumber\\
  \cuD_{1, \rless}(\ratio,\psi_1,\psi_2) & = \chi^6\ratio^6-2\chi^5\ratio^4-(\psi_1\psi_2-\psi_1-\psi_2+1)\chi^4\ratio^6+\chi^4\ratio^4\\
  & +\chi^4\ratio^2-2(1-\psi_1\psi_2)\chi^3\ratio^4-(\psi_1+\psi_2+\psi_1\psi_2+1)\chi^2\ratio^2-\chi^2\, ,\nonumber\\
  \cuD_{2, \rless}(\ratio,\psi_1,\psi_2) & = -(\psi_1-1)\chi^3\ratio^4-\chi^3\ratio^2+(\psi_1+1)\chi^2\ratio^2+\chi^2\, ,
\end{align}
where $\chi$ is defined as in \eqref{eq:chidef}, and $\psi \equiv \min\{ \psi_1, \psi_2\}$.
%

%
%%%%%%%%%%%%%%%%%%%%%%
%
\section{Details regarding Theorem \ref{th:optimal_AF_for_L1_norm}} \label{app:details_theorem_8}

The formula for $g(s)$ is as follows:
\begin{equation}
    g(s) = \left(e^{{s^2}/{2}} \text{erf}{\hspace{0.1cm}}^2\left({s}/{\sqrt{2}}\right)\right)\Big/\left({e^{{s^2}/{2}}
   \left(1-\text{erf}\left({s}/{\sqrt{2}}\right)\right)
   \left(s^2+\text{erf}\left({s}/{\sqrt{2}}\right)\right)-\sqrt{{2}/{\pi }} s}\right).
\end{equation}
%

%
%%%%%%%%%%%%%%%%%%%%%%%%%%%%%%%%%%%%%%%%%%%%
%
\section{Details regarding Theorem \ref{th:optimal_parameters_for_special_case_1}}\label{app:details_of_theorem_special_case_1}

\subsection{Relationship among the constants in Table \ref{theorem_1_table}}

This appendix is referenced in Remark \ref{remark:details_for_constants_in_theorem_for_special_case_1}.

The following relationships can be derived either from the proof of Theorem \ref{th:optimal_parameters_for_special_case_1}, or from a direct computation based on the formulas given in Theorem \ref{th:optimal_parameters_for_special_case_1}. They do not add critical information to what is proved in Theorem \ref{th:optimal_parameters_for_special_case_1} but they give general rules to exclude some cells in Table \ref{theorem_1_table}.

\begin{remark} \label{remark:relationship_between_constants_when_psi1_smaller_psi2}
Let $\psi_1<\psi_2$. If $\psi_2 < \min\{2\psi_1 , \psi_1 + 1\}$, then $\alpha_L < \alpha_C < \alpha_R$.
In this case, the 4th and 5th rows of Table \ref{theorem_1_table} never happen.
If $\psi_2 > \min\{2\psi_1 , \psi_1 + 1\}$, then $\alpha_R < \alpha_C < \alpha_L$. If $\psi_2 = \min\{2\psi_1 , \psi_1 + 1\}$, then $\alpha_L = \alpha_C = \alpha_R$. We always have that $\alpha_L,\alpha_C,\alpha_R \in [0,1]$, and $0\leq \beta_3<\beta_2< \beta_1$. 
Since $E_1$ and $E_2$ cannot simultaneously hold, no two rows can simultaneously hold. Since $\beta_3<\beta_2< \beta_1$, no two columns can simultaneously hold.
Remark \ref{remark:relationship_between_constants_when_psi1_smaller_psi2} follows from Lemma \ref{th:lemma_signs_and_alpha_for_theorem_special_case_1} used in the proof of Theorem \ref{th:optimal_parameters_for_special_case_1}. 
\end{remark}
\begin{remark}\label{remark:relationship_between_constants_when_psi1_larger_psi2}
Let $\psi_1>\psi_2$. We always have that $\alpha_L \in [0,1]$.
 It also holds that $\alpha_C \in [0,1]$ if and only if $\psi_1 \leq \psi_2/(1 - \rho|1-\psi_2|)$ when $ 0<\rho \leq \frac{1}{|\psi_2-1|}$ and $\psi_1 \geq \psi_2/(1 - \rho|1-\psi_2|)$ when $ \rho \geq \frac{1}{|\psi_2-1|}$ . We have that $\alpha_R \in [0,1]$ if and only if $\psi_1 \leq \psi_2 + \frac{1}{2}\rho(\psi_2 - 1)^2 \min\{\psi_2,1\}$ and
$$\psi_1 \geq \frac{-1+\rho(\psi_2-1)^2(2\psi_2 + \min\{2\psi_2 - 1,1\}) + \psi_2(2+\max\{\psi_2,2-\psi_2\})}{2( \max\{\psi_2,1\} + \rho(\psi_2 - 1)^2)}.$$
Note that if $\psi_1>\psi_2 = 1$ then $\alpha_R$ is not defined, see statement of Theorem \ref{th:optimal_parameters_for_special_case_1}. If we had attempted to use the formulas above we would have gotten that $\alpha \in[0,1]$ if and only if $1\leq \psi_1\leq 1$, this last condition never being met when $\psi_1>\psi_2 = 1$.
Also, $A \leq B$ always, where $A$ and $B$ are given in \eqref{eq:def_A_for_ps1_greater_psi2} and \eqref{eq:def_B_for_ps1_greater_psi2}. Furthermore, 
$B < \psi_2$ if and only if $1/\rho > \gamma \triangleq \min\{1,\max\{0,2\psi_2 -1\}\}$.
Since $A \leq B$, it follows that $E_1$ and $E_2$ cannot simultaneously hold, and hence no two rows can simultaneously hold.
We always have that $\beta_3<\beta_2< \beta_1$, therefore no two columns can simultaneously hold.
Remark \ref{remark:relationship_between_constants_when_psi1_larger_psi2} follows from Lemma \ref{th:lemma_signs_and_alpha_for_theorem_special_case_1.2} used in the proof of Theorem \ref{th:optimal_parameters_for_special_case_1}.
\end{remark}

%\subsection{Definition of the polynomial $p(x)$ when $\psi_1 < \psi_2$}
%This appendix is referenced in the statement of Theorem \ref{th:optimal_parameters_for_special_case_1} right after Table \ref{theorem_1_table} is introduced.

%
%

%\subsection{Definition of the polynomial $p(x)$ when $\psi_1 > \psi_2$}
%This appendix is referenced in the statement of Theorem \ref{th:optimal_parameters_for_special_case_1} right after Table \ref{theorem_1_table} is introduced.
%\vspace{-0.1cm}

\subsection{Statement of Theorem \ref{th:optimal_parameters_for_special_case_1} when $\psi_1 < \psi_2$} \label{app:first_part_theorem_9}
\begin{theorem*}[\ref{th:optimal_parameters_for_special_case_1} continued]
If $\psi_1 < \psi_2$  then
$E_1 = (\alpha < \alpha_R)$, $E_2 = (\alpha > \alpha_R)$,  

\begin{flalign}
&\beta_1 =\min\{\psi_1 - 4,-3\psi_1\} -8\sqrt{|1-\psi_1|}\max\{1/\psi_1,\psi^{3/2}_1\}+8\max\{1/\psi_1,\psi^2_1\},&&\label{eq:beta_1_def}
\end{flalign}

\vspace{-0.6cm}

%\noindent
\begin{minipage}{0.5\linewidth}
\begin{flalign}\label{eq:beta_2_def}
  &\beta_2 = {\psi_1  (\psi_1+2)}/(\psi_1+1),&&
\end{flalign}
\end{minipage}
\begin{minipage}{0.495\linewidth}
\begin{flalign}
  &\quad \beta_3 = \psi_1 + |1-\psi_1|\min\{\psi_1,1/\psi_1\},&&\label{eq:beta_3_def}
\end{flalign}
\end{minipage}

\vspace{-0.4cm}

\begin{minipage}{0.35\linewidth}
\begin{flalign}
  &\alpha_L = { \psi_2}/({\psi_2+1+\rho^{-1})},&&\label{eq:alpha_L_def}
\end{flalign}
\end{minipage}
\begin{minipage}{0.645\linewidth}
\begin{flalign}
  &\quad \alpha_C = \psi_2/({2 \psi_2+1 - \psi_1 + \max\{0,1-\psi_1\}+\rho^{-1}}) ,&&\label{eq:alpha_C_def}
\end{flalign}
\end{minipage}

\vspace{-0.4cm}
\begin{flalign}
&\alpha_R = { \psi_2}/({3 \psi_2+1 - 2\min\{1 + \psi_1,2\psi_1\}+\rho^{-1} }). \label{eq:alpha_R_def}&&
\end{flalign}

Furthermore, the polynomial $p(x)$ is defined as follows.

If $\psi_1 \neq 1$ then
$p(x) \triangleq p_0 +  p_1 x +  p_2 x^2  +   p_3 x^3  +  p_4 x^4  + p_5 x^5$ where
\begin{flalign}
 p_0 = -(\psi_1-1)^2 \psi_1^2 \left(\rho (\alpha -4 \alpha  \psi_1+(3 \alpha -1) \psi_2)+\alpha\right)\label{eq:p_0},&&
\end{flalign}
%
%\vspace{-0.7cm}
%
\begin{flalign}
p_1 = 2 (\psi_1-1) \psi_1 \left(\rho (\alpha  (\psi_1 (9 \psi_1-6 \psi_2-4)+\psi_2)+2 \psi_1 \psi_2)-2 \alpha  \psi_1 \right)\label{eq:p_1},&&
\end{flalign}
%
%\vspace{-0.7cm}
%
\begin{flalign}
p_2 = 2 \psi_1 \left(\rho (\alpha +4 \alpha  \psi_1 (4 \psi_1-3)+\psi_2 (4 \alpha -9 \alpha  \psi_1+3 \psi_1-1))-\alpha  (3 \psi_1-1) \right)\label{eq:p_2}, &&
\end{flalign}
%
%\vspace{-0.7cm}
%
\begin{flalign}
p_3 = 4 \psi_1 \left(\rho (\alpha  (7 \psi_1-2)-3 \alpha  \psi_2+\psi_2)-\alpha  \right)\label{eq:p_3},&&
\end{flalign}
%
%\vspace{-0.7cm}
%
\begin{flalign}
p_4 = \rho (\alpha  (12 \psi_1-1)-3 \alpha  \psi_2+\psi_2)-\alpha  \label{eq:p_4}, &&
\end{flalign}
%
%\vspace{-0.7cm}
%
\begin{flalign}
&p_5 = 2 \alpha  \rho \label{eq:p_5},&&
\end{flalign}
%
%\vspace{-0.5cm}
If $\psi_1 =1$ then $p(x) \triangleq q_0 + q_1 x + q_2 x^2 + q_3 x^3$ where\\

\noindent\begin{minipage}{0.5\linewidth}
\begin{flalign}
 &q_0 = \rho(-10 \alpha  +10 \alpha  \psi_2-4  \psi_2) +4 \alpha  \label{eq:q_0},&&\\
 &q_2 = \rho(-11 \alpha+3 \alpha  \psi_2-  \psi_2) +\alpha  \label{eq:q_2} ,&&
\end{flalign}
\end{minipage}
\begin{minipage}{0.5\linewidth}
\begin{flalign}
 &\quad q_1 = \rho(-20 \alpha +12 \alpha  \psi_2-4  \psi_2)+4 \alpha  \label{eq:q_1},&&\\
 &\quad \text{ and } q_3 = -2 \alpha \rho \label{eq:q_3}.&&
\end{flalign}
\end{minipage}\par\vspace{\belowdisplayskip}

\end{theorem*}

\subsection{Statement of Theorem \ref{th:optimal_parameters_for_special_case_1} when $\psi_1 > \psi_2$} \label{app:second_part_theorem_9}

\begin{theorem*}[\ref{th:optimal_parameters_for_special_case_1} continued]
If $\psi_1 > \psi_2$ and $\psi_2\neq 1$ then
$E_1 = (\psi_1 \sless B) \lor ((\alpha \sless \alpha_R) \land (B \sless \psi_1 \sless A))$,  $E_2 = (\psi_1 \smore A) \lor ((\alpha \smore \alpha_R) \land (B \sless \psi_1 \sless A))$,  
\begin{flalign}
&\beta_1  : r(\beta_1) = 0, \text{ where } r \text{ is a 4th degree polynomial described in Appendix \ref{app:details_theorem_special_case_1_psi_1_bigger_psi_2}},&& \label{eq:definition_of_poly_r}\\
&\beta_2 = {\psi_2} + ({\alpha   {\psi_2}}/({ {\psi_2}+1+\rho^{-1}})),&& \label{eq:beta_2}\\
&\beta_3 = {\psi_2} + (\min\{\psi_2,1/\psi_2\}{\alpha   |{\psi_2}-1|^3}/{(({\psi_2}-1)^2+ \rho^{-1}({\psi_2}+3))}), && \label{eq:beta_3}\\
&\alpha_L = ({2 \psi _1-\psi_2})/({ 2 \psi _1-\psi _2+1+\rho^{-1}}),&&\label{eq:alpha_L_psi1_larger_psi2}\\
&\alpha_C = ({\rho \psi _1-{\left(\psi _1-\psi _2\right) }/{\left| 1-\psi _2\right| }})/({\rho\left(\max \left\{0,1-\psi _2\right\}+2 \psi _1-\psi _2\right)+1}),&&\label{eq:define_alpha_C_psi1_bigger_psi2}\\
&\alpha_R = \frac{2 \left(\psi_1\sm\psi _2\right) \max \left\{1,\psi _2\right\}\sm\rho  \left(\psi _2\sm1\right)^2 \psi_2}{\left(\psi _2\sm1\right)^2 \left(\rho  \left(2 \min \left\{1,\psi _2\right\}\sm2 \psi_1\sp\psi_2\right)\sm\rho \sm1\right)},
\hspace{-0.8cm}&&\label{eq:define_alpha_r_psi1_bigger_psi2}\\
&A = \psi _2 + (\rho \min\{1,\psi_2\} (\psi_2-1)^2/2), \text{ and } &&\label{eq:def_A_for_ps1_greater_psi2}\\
&B = \frac{\rho \left(\psi _2 \sm 1\right){}^2 \left(2 \psi_2 \sp 1 \sp 2 \min\{\psi_2 \sm 1,0\}\right) \sp  2\psi_2 \sm 1 \sp 2\psi_2 \max\{\psi_2,1\} \sm \psi^2_2     }{2 \rho(\psi _2 \sm 1)^2\sp 2\max\{1,\psi_2\}}.\label{eq:def_B_for_ps1_greater_psi2}&&
\end{flalign}
If $\psi_1 > \psi_2$ 
and $\psi_2=1$ then $E_1 = \text{False}$, $E_2 = \text{True}$ and eqs. \eqref{eq:definition_of_poly_r}-\eqref{eq:def_B_for_ps1_greater_psi2} hold when reading
Table \ref{theorem_1_table},
except \eqref{eq:define_alpha_r_psi1_bigger_psi2}, which is not defined, and \eqref{eq:define_alpha_C_psi1_bigger_psi2} which becomes $\alpha_C=-\infty$ implying $(\alpha>\alpha_C)$ is True.

Furthermore, the polynomial $p(x)$ is defined as follows.
%
%\vspace{-0.5cm}
%
\begin{flalign}
& p(x) \triangleq p_0 \hspace{-0.05cm}+\hspace{-0.05cm} p_1 x \hspace{-0.05cm}+\hspace{-0.05cm}  p_2 x^2 \hspace{-0.05cm}+\hspace{-0.05cm}  p_3 x^3 \hspace{-0.05cm}+\hspace{-0.05cm} p_4 x^4 \hspace{-0.05cm}+\hspace{-0.05cm} p_5 x^5, \text{ where }&&\\
&p_0 = \psi _2^2 \left(\rho  \left(\psi _2-1\right){}^2 \left(2 \alpha  \psi _1-(3 \alpha +1) \psi _2+\alpha \right)+\left(\alpha  \psi _2^2-2 (\alpha +1) \psi _2+\alpha +2 \psi _1\right)\right), \label{eq:2_p_0}&&\\
&p_1 = 2 \psi_2 \left(\left(\psi_2 \left(2 \alpha  \left(\psi_2 \sm 1\right) \sm 1 \right)+\psi_1\right) \sm \rho  \left(\psi_2 \sm 1\right) \left((7 \alpha \sp 2) \psi_2^2 \sm \psi_2 \left(4 \alpha  \psi_1\sp3 \alpha +1\right)\sp\psi_1\right)\right), \label{eq:2_p_1}&&\\
&p_2 = 2 \psi _2 \left(\alpha  \left(3 \psi _2-1\right)-\rho  \left((13 \alpha +3) \psi _2^2-2 \psi _2 \left(3 \alpha  \psi _1+5 \alpha +1\right)+2 \alpha  \psi _1+\alpha +\psi _1\right)\right), \label{eq:2_p_2} &&\\
&p_3 = 4 \psi _2 \left(\rho  \left(2 \alpha  \left(\psi _1+1\right)-(6 \alpha +1) \psi _2\right)+\alpha \right), \label{eq:2_p_3} &&\\
&p_4 = \rho  \left(2 \alpha  \psi _1-(11 \alpha +1) \psi _2+\alpha \right)+\alpha, \label{eq:2_p_4}&&\\
&p_5 = -2 \alpha  \rho. \label{eq:2_p_5}
\end{flalign}

\end{theorem*}

\subsection{Definition of the polynomial $r(x)$ when $\psi_1 > \psi_2$} \label{app:details_theorem_special_case_1_psi_1_bigger_psi_2}

This appendix is referenced in the statement of Theorem \ref{th:optimal_parameters_for_special_case_1} in equation \eqref{eq:definition_of_poly_r}.

The coefficient $\beta_1$ is such that $r(\beta_1) = 0$, where $r(x) = r_0 + r_1x +r_2x^2 + r_3x^3 + r_4 x^4$ with

\begin{dmath*}
r_0 = \rho  \psi_2^3 (\rho  \psi_2 (\rho  \psi_2 (16 \alpha  \rho  \psi_2-8 \alpha  (14 \alpha  \rho +\rho -6)+\rho )+8 \alpha  (\rho  ((8 \alpha  (2 \alpha +5)-1) \rho -28 \alpha +5)+6))+16 \alpha  (\alpha  \rho +\rho +1) (-4 \alpha  \rho +\rho +1)^2), \label{eq:2_r_0}
\end{dmath*}
\begin{dmath*}
r_1 = 4 \rho  \psi _2^2 (\rho  \psi _2 (2 \alpha  (8 \alpha  \rho  (7-2 (\alpha +5) \rho )+3 (\rho -5) \rho -18)-\rho  \psi _2 (-56 \alpha ^2 \rho +12 \alpha  \rho  \psi _2-6 \alpha  (\rho -6)+\rho ))-4 \alpha  (\rho +1) ((2 \alpha -3) \rho -3) ((4 \alpha -1) \rho -1)), \label{eq:2_r_1}
\end{dmath*}
\begin{dmath*}
r_2 = 2 \rho  \psi _2 (-8 \alpha ^2 \rho  (\rho  \psi _2 (7 \rho  \psi _2-20 \rho +14)+7 (\rho +1)^2)+12 \alpha  (\rho  \psi _2+\rho +1) (\rho  \psi _2 )+2 (\rho +1)^2)+3 \rho ^3 \psi _2^2), \label{eq:2_r_2}
\end{dmath*}
\begin{dmath*}
r_3 = -8 \alpha  \rho  (\rho  \psi _2+\rho +1) (\rho  \psi _2 (2 \rho  \psi _2-3 \rho +4)+2 (\rho +1)^2)-4 \rho ^4 \psi _2^2, \text{ and} \label{eq:2_r_3}
\end{dmath*}
\begin{dmath*}
r_4 =  \rho ^4 \psi _2. \label{eq:2_r_4}
\end{dmath*}

%
%%%%%%%%%%%%%%%%%%%%%%%%%%%%%%%%%%%%%%%%%%%%
%

\section{Details regarding Theorem \ref{th:special_case_2_optimal_mu_values}}\label{app:details_of_theorem_10}

The coefficients of $p(x)$ are given below

%\vspace{-0.4cm}
\begin{minipage}{0.51\linewidth}
\begin{flalign}
   &p_0 =  8 \psi_2 \rho^{-1} \sp
   (\alpha \sp 4 \psi_2 (-1 \sp 2 \psi_2) (-1 \sp 2 \alpha)),
   \hspace{-0.5cm}
   \label{eq:poly_coeff_1_R_0}&& 
\end{flalign}
\end{minipage}
\begin{minipage}{0.49\linewidth}
\begin{flalign}
   &\quad p_1 =  8 \psi_2 \rho^{-1} \sp 4 ((1 \sm 4 \psi_2) \alpha  \sp 2 \psi^2_2),&&\label{eq:poly_coeff_1_R_1}
\end{flalign}
\end{minipage}
%
%\vspace{-0.3cm}
\noindent\begin{minipage}{0.4\linewidth}
\begin{flalign}
 &p_2 = -2  (-3 \alpha + \psi_2 (2 + 4 \alpha)),&& \label{eq:poly_coeff_1_R_2}
\end{flalign}
\end{minipage}%
\begin{minipage}{0.2\linewidth}
\begin{flalign}
  & \quad p_3 =  4  \alpha\label{eq:poly_coeff_1_R_3},&& %\hspace{-1cm}
\end{flalign}
\end{minipage}
\begin{minipage}{0.4\linewidth}
\begin{flalign}
  & \text{ and } \quad p_4 = \alpha. \label{eq:poly_coeff_1_R_4}&&
\end{flalign}
\end{minipage}

%

%
%%%%%%%%%%%%%%%%%%%%%%%%%%%%%%%%%%%%%%%%%%%%
%
\section{Important observations}\label{app:more_experiments}

This appendix is referenced in Section \ref{sec:important_observations}.

\subsection{Reproducing Figure \ref{fig:important_observations}} \label{app:how_to_reproduce_first_fig}

The plots in Figure \ref{fig:important_observations} come directly from our theory. They involve no experiments and can be  obtained using many standard mathematical computing software tools.
Nonetheless, since it does require some effort to code our equations, we include code to generate Figure \ref{fig:important_observations} in the following Github link: \url{https://github.com/Jeffwang87/RFR_AF}. This code is also available in the supplementary zip file provided.
To generate the plots in Figure \ref{fig:important_observations}, run the file named \texttt{RunMeToGenerateFigure_1.nb}.
It runs using Wolfram Mathematica V12. We ran it using a MacBook Pro with 2.6 GHz 6-Core Intel Core i7 and 32 GB 2667 MHz DDR4. In this machine it takes about 1 second to run.

\subsection{Some details about observation 1 in Section \ref{sec:important_observations}}

When $\alpha=0$ and $\psi_1 < \psi_2$, and using Theorem \ref{th:optimal_parameters_for_special_case_1}, we can explicitly compute the objective
$
L = \mathcal{E}_{R_1} = (F_\star^2 \psi_2 + F_1^2 \max\{1 - \psi_1,0\} \psi_2 + \psi_1 \tau^2)/(\psi_2 -\psi_1) \text{ if } \psi_1 < \psi_2.
%, \text{ and }
$
This follows from making $\alpha =0$ in the expressions in the paragraph before Theorem \ref{th:special_case_2_optimal_mu_values}.
Observe that, as function of $\psi_1$, the test error $\mathcal{E}_{R_1}$ achieves 
a minimum of 
 $(F_\star^2 \psi_2 + \tau^2)/(-1 + \psi_2)$ at $\psi_1 = 1$ if $\psi_2 > 1 + 1/\rho$ and a minimum of
 $(F_1^2 \psi_2 + F_\star^2 \psi_2)/\psi_2$ at $\psi_1 = 0$ otherwise.
In particular, if $\tau=F_\star=0^+$, then the condition $\psi_2 > 1 + 1/\rho$ becomes $\psi_2 > 1$.

When $\alpha=0$ and $\psi_1 > \psi_2$ the polynomial that determines $x_1$ becomes
\begin{flalign}
&p(x) = \psi _2 \left(\sm4 \psi_2 x^3 \sm 2 \left(\psi_1\sp\psi_2 \left(3 \psi_2\sm2\right)\right) x^2\sm x^4\sm2 \left(\psi_2\sm1\right) \left(\psi_1\sp\psi_2 \left(2 \psi_2\sm1\right)\right) x\sm\left(\psi_2\sm1\right){}^2 \psi_2^2\right).\hspace{-0.8cm}&&
\end{flalign}
One can show that in the range $\psi_1 > \psi_2$ and $x \in (x_L,x_R) = (-\psi_2,\min\{0,1-\psi_2\})$, we have $p(x) < 0$ and hence it has no roots. Therefore, $x_1$ does not exist. 
Therefore, if furthermore we have $\tau = F_\star = 0$, then condition $E_2$ is never true, so $x$ must be read from the first row in Table \ref{th:main_theorem_test_error}, which implies that $x=x_R$, and the optimal AF is linear also when $\psi_1 > \psi_2$.

%
%%%%%%%%%%%%%%%%%%%%%%%%%%%%%%%%%%%%%%%%%%%%
%
\section{Proofs}\label{app:proofs}

This appendix is referenced in Section \ref{sec:main_results}.

%
%%%%%%%%%%%%%%%%%%%%%%%%%%%%%%%%%%%%%%%%%%%%
%
\subsection{Proof of sensitivity properties of Section \ref{sec:sensitivity_properties}}

\begin{proof}[Proof of Theorem \ref{th:sensitivity_special_case_1}]
The proof follows directly from Theorem \ref{th:general_sensitivity} by taking the limit when $\lambda  \to +0$.
\end{proof}
\begin{proof}[Proof of Theorem \ref{th:sensitivity_special_case_2}]
The proof follows directly from Theorem \ref{th:general_sensitivity} by taking the limit when $\psi_1  \to +\infty$.
\end{proof}
\begin{proof}[Proof of Theorem \ref{th:sensitivity_special_case_3}]
The proof follows directly from Theorem \ref{th:general_sensitivity} by taking the limit when $\psi_2  \to + \infty$.
\end{proof}

%
%%%%%%%%%%%%%%%%%%%%%%%%%%%%%%
%
\subsection{Proof of necessary and sufficient condition for linearity of Section \ref{sec:optimal_activation_functions}}

\begin{proof}[Proof of Lemma \ref{th:conditions_for_linearity} ]
If $\sigma(x)$ is linear function, a direct calculation shows that $\mu_\star = 0$. On the other hand, since $0 \leq \EX(\sigma(Z) - \mu_0 - \mu_1 Z)^2 = \mu^2_\star$, we have that $\mu_\star = 0$  implies that $\sigma(Z) = \mu_0 + \mu_1 Z$ almost surely. Since the support of the probability density function of $Z$ is $\mathbb{R}$, it follows that $\sigma(x) = \mu_0 + \mu_1 x $ except on a set of measure zero in $\mathbb{R}$.
\end{proof}

%
%%%%%%%%%%%%%%%%%%%%%%%%%%%%%%%%%%%%%%%%%%%%
%
\subsection{Proof of Theorem \ref{th:opt_AF_when_norm_is_L2}}

To prove Theorem \ref{th:opt_AF_when_norm_is_L2}, we first need to state and prove a series of intermediary results.

\begin{lemma}\label{th:necessary_cond_for_opt_sigma}
A necessary condition for optimally of $\sigma(x)$ is that
\begin{equation} \label{eq:ode_opt_sigma_L_2}
-2x\sigma '(x)+2\sigma ''(x)+\lambda _{1}+\lambda _{2}x+\lambda _{3}\sigma (x) = 0,
\end{equation}
where 
$\lambda _{1}$, $\lambda _{2}$, and $\lambda _{3}$ satisfy the following constraints,
\begin{align}
\lambda _{1}+\lambda _{3}\mu _{0} = 0, \label{eq:EL_L2_const_1}\\
-2\mu _{1}+\lambda _{2}+\lambda _{3}\mu _{1} = 0, \label{eq:EL_L2_const_2}\\
-2\mathbb {E} ((\sigma '(Z))^{2})+\lambda _{1}\mu _{0}+\lambda _{2}\mu _{1}+\lambda _{3}\mu _{2} = 0 \label{eq:EL_L2_const_3},
\end{align}
and that
\begin{equation}\label{eq:limit_condition_L2}
\lim_{x\to +\infty} (\sigma'(x))^2 e^{\frac{-x^2}{2}} = \lim_{x\to -\infty} (\sigma'(x))^2 e^{\frac{-x^2}{2}} = 0.    
\end{equation}
\end{lemma}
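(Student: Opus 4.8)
The plan is to read \eqref{eq:var_problem} (for $i=2$) as a constrained problem in the calculus of variations over the Gaussian-weighted Sobolev space, and to produce \eqref{eq:ode_opt_sigma_L_2} as its Euler--Lagrange equation. Write $\phi(x) = e^{-x^2/2}/\sqrt{2\pi}$ for the standard normal density, so that the objective is $J[\sigma] = \int (\sigma'(x))^2 \phi(x)\,\de x = \EX((\sigma'(Z))^2)$ and the three constraints are $C_0[\sigma]=\int \sigma \phi = \mu_0$, $C_1[\sigma]=\int x\sigma\phi = \mu_1$, and $C_2[\sigma]=\int \sigma^2\phi = \mu_2$. Following the standard Lagrange-multiplier approach for constrained variational problems \cite{gelfand2000calculus}, I would introduce multipliers and assert that at a minimizer the Lagrangian $J - \lambda_1 C_0 - \lambda_2 C_1 - (\lambda_3/2) C_2$ is stationary, i.e. its first variation vanishes for every admissible direction $h$ in the function space. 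Computing Gâteaux derivatives gives the identity $2\int \sigma' h' \phi = \int (\lambda_1 + \lambda_2 x + \lambda_3 \sigma)\, h\,\phi$ for all such $h$, where the factor $2$ produced by differentiating the quadratic constraint $C_2$ is absorbed into $\lambda_3$.

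Next I would extract the pointwise ODE. Taking $h \in C_c^\infty$ (compactly supported, so no boundary issue), integrate the left-hand side by parts using $\phi'(x) = -x\phi(x)$ to obtain $2\int \sigma' h' \phi = -2\int(\sigma'' - x\sigma')\,h\,\phi$; the fundamental lemma of the calculus of variations then forces $-2(\sigma''-x\sigma') = \lambda_1+\lambda_2 x + \lambda_3\sigma$ almost everywhere, which is exactly \eqref{eq:ode_opt_sigma_L_2}. A regularity bootstrap (the equation has smooth coefficients, and the weak derivative of \cite{rudin1976principles} can be upgraded once $\sigma''$ is expressed through $\sigma,\sigma'$) shows $\sigma$ is classically smooth, so the ODE holds in the strong sense.

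To obtain the scalar relations \eqref{eq:EL_L2_const_1}--\eqref{eq:EL_L2_const_3}, I would test the stationarity identity against the three admissible non-compactly-supported directions $h=1$, $h=x$, and $h=\sigma$, all of which lie in the Gaussian Sobolev space under the admissibility hypotheses of \eqref{eq:var_problem}; equivalently, multiply \eqref{eq:ode_opt_sigma_L_2} by $1$, $x$, and $\sigma(x)$ and integrate against $\phi$. Using Gaussian integration by parts (Stein's identity $\EX(Zg(Z))=\EX(g'(Z))$, hence $\EX(\sigma'(Z))=\mu_1$) one checks the key cancellations $\EX(-2Z\sigma'+2\sigma'')=0$, $\EX(-2Z^2\sigma'+2Z\sigma'')=-2\mu_1$, and $\EX(-2Z\sigma\sigma'+2\sigma\sigma'')=-2\EX((\sigma')^2)$, and then substitutes $\EX(\sigma)=\mu_0$, $\EX(Z\sigma)=\mu_1$, $\EX(\sigma^2)=\mu_2$. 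These yield \eqref{eq:EL_L2_const_1}, \eqref{eq:EL_L2_const_2}, and \eqref{eq:EL_L2_const_3} respectively. Finally, the decay condition \eqref{eq:limit_condition_L2} is precisely what makes every integration-by-parts boundary term above vanish; I would establish it from the finiteness of the objective \eqref{eq:norm_def_2} (finite energy), which among solutions of \eqref{eq:ode_opt_sigma_L_2} excludes the $e^{x^2}$-type branch and forces $(\sigma')^2\phi\to 0$, the same finite-energy condition later used to quantize $\lambda_3$.

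The main obstacle I anticipate is the rigor of the multiplier rule rather than the computations: because the constraint $C_2[\sigma]=\mu_2$ is quadratic, the feasible set is nonconvex, so existence of the multipliers $\lambda_1,\lambda_2,\lambda_3$ requires a constraint qualification (surjectivity of the linearized constraint map $h\mapsto(\delta C_0,\delta C_1,\delta C_2)[h]$ at $\sigma$, which can fail in degenerate cases such as $\mu_\star=0$), and care is needed so that the argument certifies stationarity (a necessary condition) without invoking convexity. The secondary technical point is justifying the boundary decay \eqref{eq:limit_condition_L2} and the legitimacy of the weak-to-strong regularity upgrade for merely weakly differentiable admissible $\sigma$.
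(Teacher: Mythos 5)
Your proposal is correct and follows essentially the same route as the paper: form the Lagrangian with multipliers $\lambda_1,\lambda_2,\lambda_3$, obtain \eqref{eq:ode_opt_sigma_L_2} as the Euler--Lagrange equation, derive \eqref{eq:EL_L2_const_1}--\eqref{eq:EL_L2_const_3} by integrating the ODE against $1$, $x$, and $\sigma$ with Gaussian integration by parts, and deduce \eqref{eq:limit_condition_L2} from finiteness of $\EX((\sigma'(Z))^2)$. Your weak-form/Gâteaux-derivative presentation and your remark about the constraint qualification for the nonconvex quadratic constraint are slightly more careful than the paper's direct invocation of the Euler--Lagrange equation with free boundary conditions, but the substance is identical.
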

%where $\mu_{0}$, $\mu_{1}$, $\mu_{2}$ are the optimized variables from accuracy and sensitivity
%
%
\begin{remark}
Note that since \eqref{eq:ode_opt_sigma_L_2} is a second order ODE, its solutions are  parametrized by two constants in addition to being parametrized by $\lambda_1,\lambda_2,\lambda_3$. These five constants are set by our three constraints \eqref{eq:var_problem} together with two boundary conditions from our variational problem, which are $\lim_{x\to \pm\infty} \sigma'(x) e^{\frac{-x^2}{2}}= 0$. These last two we replace (see proof) by \eqref{eq:limit_condition_L2}.
\end{remark}
\begin{remark}
The lemma implies that knowing $\mu_0,\mu_1,\mu_2$ and the objective value $\mathbb {E} ((\sigma '(Z))^{2})$ is enough to determine $\lambda_1,\lambda_2,\lambda_3$, even without solving the ODE.
\end{remark}

\begin{proof}[Proof of Lemma \ref{th:necessary_cond_for_opt_sigma}]

\hspace{1cm}
\\
{\it Derivation of equation \eqref{eq:ode_opt_sigma_L_2}}:
A Lagragian for \eqref{eq:var_problem}-\eqref{eq:var_problem} is 
\begin{equation}
\E((\sigma'(Z))^2 + \lambda_1 (\mu_0 - \sigma(Z)) + \lambda_2 ( \mu_1 - Z\sigma(Z) ) + \lambda_3 ((1/2)(\mu_2 - \sigma(Z))^2 )),
\end{equation}
which can be written as 
$
    \int^{\infty}_{-\infty} L(z,\sigma,\sigma') {\rm d}z 
$
where, if we define $p(z) = \frac{1}{\sqrt{2\pi}} e^{\frac{-z^2}{2}}$, the Lagrangian density $L$ is %
\begin{align}
    &L(z,\sigma,{\sigma'}) =  p(z) ({\sigma'}^2 + \lambda_1 (\mu_0 - \sigma ) + \lambda_2 (\mu_1 - z\sigma  ) + \lambda_3 (1/2)(\mu_2 - \sigma^2)).
\end{align}

We use the Euler-Lagrange equation with free boundary conditions \cite{gelfand2000calculus} to get the necessary condition \eqref{eq:ode_opt_sigma_L_2}. To do so, we compute in sequence
\begin{align}
    \frac{\partial L}{\partial \sigma} &= -p(z)(\lambda_1 + \lambda_2 z+ \lambda_3 \sigma),\\
    \frac{\partial L}{\partial \sigma'} &= p(z)(2\sigma'),\\
    \frac{\rm d}{{\rm d} z}\frac{\partial L}{\partial \sigma'} &= p'(z)(2\sigma') + p(z)(2\sigma'') =  p(z)(- 2z\sigma' + 2\sigma'').
\end{align}
These lead to
\begin{align}
    &0 = \frac{\rm d}{{\rm d} z}\frac{\partial L}{\partial \sigma'} - \frac{\partial L}{\partial \sigma} = p(z)(- 2z\sigma' + 2\sigma'' + \lambda_1 + \lambda_2 z+ \lambda_3 \sigma) \text{ and }\label{eq:EL_opt_cond}\\
    &0 = \lim_{z \to \pm \infty} \frac{\partial L}{\partial \sigma'} = \lim_{z \to \pm \infty} p(z) \sigma'(z)\label{eq:EL_boundary_cond},
\end{align}
where the last condition follows from the fact that there is no boundary condition on $\sigma$.
Since $p(z) >0$,  equation \eqref{eq:EL_opt_cond} implies \eqref{eq:ode_opt_sigma_L_2}. 

\hspace{1cm}
\\
{\it Derivation of equations \eqref{eq:limit_condition_L2}}:
Since we are working with necessary conditions, we can choose 
not list \eqref{eq:EL_boundary_cond} in our lemma. Rather, we include condition \eqref{eq:limit_condition_L2}, which a consequence of the fact that $\EX((\sigma'(Z))^2)$ must be finite. Indeed, $\EX((\sigma'(Z))^2) = \int^{\infty}_{-\infty} p(z) (\sigma'(z))^2<\infty$ implies that $p(z) (\sigma'(z))^2$ must vanish at $\pm \infty$.
Although not necessary for this proof, note that, since $p(z)$ goes to zero as $z \to \pm \infty$, equations \eqref{eq:limit_condition_L2} imply equations \eqref{eq:EL_boundary_cond}.

\hspace{1cm}
\\
{\it Derivation of equations \eqref{eq:EL_L2_const_1}-\eqref{eq:EL_L2_const_3}}:

Equation \eqref{eq:ode_opt_sigma_L_2} must hold for all $x$. Hence, we can replace in it $x$ by a standard normal random variable $Z$ and compute the expected value of both of its sides. This leads to
\begin{equation} \label{eq:EL_L2_almost_const_1}
    -2\EX(Z\sigma'(Z)) + 2\EX(\sigma''(Z)) + \lambda_1 + \lambda_3 \mu_0  = 0.
\end{equation}
We can also multiply  \eqref{eq:ode_opt_sigma_L_2} by $x$, replace $x$ by a standard normal random variable $Z$ and compute the expected value of both of its sides. This leads to,
\begin{equation}
\label{eq:EL_L2_almost_const_2}
    -2\EX(Z^2\sigma'(Z)) + 2\EX(Z\sigma''(Z)) + \lambda_2  + \lambda_3 \mu_1  = 0.
\end{equation}
Finally, we multiply  \eqref{eq:ode_opt_sigma_L_2} by $\sigma(x)$, replace $x$ by a standard normal random variable $Z$ and compute the expected value of both of its sides. This leads to,
\begin{equation}
\label{eq:EL_L2_almost_const_3}
    -2\EX(Z\sigma(Z)\sigma'(Z)) + 2\EX(\sigma(Z)\sigma''(Z)) + \lambda_1\mu_0 + \lambda_2\mu_1 + \lambda_3 \mu_2  = 0.
\end{equation}

Using integration by parts, and the fact that for $p(z) = \frac{1}{\sqrt{2\pi}} e^{\frac{-z^2}{2}}$ we have that $p'(z) = -z p(z)$ and $p''(z) = (1-z^2) p(z)$, we derive the following useful relationships
\begin{align}
    &\EX(\sigma''(Z)) = \EX(Z\sigma'(Z)),\label{eq:EL_proof_rel_1}\\
    &\EX(Z\sigma''(Z)) = \EX(Z^2\sigma'(Z)) - \EX(\sigma'(Z)),\label{eq:EL_proof_rel_2}\\
    &\EX(\sigma'(Z)) = \mu_1, \label{eq:EL_proof_rel_3}\\
    &\EX(\sigma(Z)\sigma''(Z)) = \EX(Z \sigma(Z) \sigma'(Z)) - \EX((\sigma'(Z))^2). \label{eq:EL_proof_rel_4}
\end{align}
To derive these relationships via integration by parts we made use of the following relationships
\begin{align}
    \lim_{z\to\pm \infty} \sigma'(z) \sigma(z) p(z) = \lim_{z\to\pm \infty} \sigma'(z) z p(z) = \lim_{z\to\pm \infty} \sigma'(z) p(z) = \lim_{z\to\pm \infty} \sigma(z) p(z) = 0,
\end{align}
which can be proved from  $\EX((\sigma(Z))^2) ,\EX((\sigma'(Z))^2) <\infty$. Also, from $\EX((\sigma(Z))^2) ,\EX((\sigma'(Z))^2) <\infty$ and $\mu_1 \in \mathbb{R}$, we can show that each of the expected values in the right-hand-side of \eqref{eq:EL_proof_rel_1}-\eqref{eq:EL_proof_rel_4} is well-defined. Hence, the left-hand-side of \eqref{eq:EL_proof_rel_1}-\eqref{eq:EL_proof_rel_4} is well defined.

To finish the proof we replace \eqref{eq:EL_proof_rel_1} in \eqref{eq:EL_L2_almost_const_1} to get $\lambda_1 + \lambda_3 \mu_0 = 0$, which is equation \eqref{eq:EL_L2_const_1}. Then we replace \eqref{eq:EL_proof_rel_2} and \eqref{eq:EL_proof_rel_3} in \eqref{eq:EL_L2_almost_const_2} to get $-2\mu_1 + \lambda_2 + \lambda_3 \mu_1 = 0$, which is equation \eqref{eq:EL_L2_const_2}. Lastly, we replace \eqref{eq:EL_proof_rel_4} in \eqref{eq:EL_L2_almost_const_3} to get $-2\EX((\sigma'(Z))^2) +  \lambda_1\mu_0 + \lambda_2\mu_1 + \lambda_3 \mu_2 = 0$, which is equation \eqref{eq:EL_L2_const_3}.
\end{proof}

\begin{lemma}\label{th:explicit_sol_herm_ODE}
The solutions of \eqref{eq:ode_opt_sigma_L_2} are of the form

\begin{equation} \label{eq:gen_sol_hermite_poly}
    \sigma(x) = \bar{\sigma}(x) + c_1 H_{\frac{\lambda _3}{2}}\left(\frac{x}{\sqrt{2}}\right)+c_2 F_{\{\frac{-\lambda_3}{4}\},\{\frac{1}{2}\}}\left(\frac{x^2}{2}\right),
\end{equation}
where 
\begin{align}
    &\bar{\sigma}(x) = \frac{\lambda_2x}{2} -\frac{\lambda _1 x^2}{4}  F_{\{1,1\},\{\frac{3}{2},2\}}\left(\frac{x^2}{2}\right), \text{ if } \lambda_3 = 0\\
    &\bar{\sigma}(x)= -\frac{\lambda
   _1}{2}-\frac{\lambda_2 x}{2}+\frac{1}{2} \sqrt{\frac{\pi }{2}}
   \lambda_2 e^{\frac{x^2}{2}} \text{erf}\left(\frac{x}{\sqrt{2}}\right)-\frac{1}{4} \lambda_2 x^3 F_{\{1,1\},\{\frac{3}{2},2\}}\left(\frac{x^2}{2}\right), \text{ if } \lambda_3 = 2\\
    &\bar{\sigma}(x)= -\frac{\lambda_1}{\lambda
   _3}-\frac{ \lambda_2 x}{\lambda _3-2}, \text{ if } \lambda_3 \notin \{0,2\}
\end{align}
where $\text{erf}$ is the Gauss error function
, $F_{\{a_k\},\{b_k\}}$ is the generalized hypergeometric function with parameters $\{a_k\},\{b_k\}$, and $H_{n}$ is the Hermite polynomial, extended to a possibly non-integer $n$.
\end{lemma}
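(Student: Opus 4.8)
The plan is to treat \eqref{eq:ode_opt_sigma_L_2} as a second-order linear inhomogeneous ODE and to write its general solution as one particular solution plus the full (two-dimensional) solution space of the associated homogeneous equation. Rewriting \eqref{eq:ode_opt_sigma_L_2} in the form $2\sigma''(x) - 2x\sigma'(x) + \lambda_3\sigma(x) = -\lambda_1 - \lambda_2 x$ and substituting $u = x/\sqrt{2}$, the homogeneous part becomes the physicists' Hermite equation $y''(u) - 2u\,y'(u) + \lambda_3\, y(u) = 0$ for $y(u) = \sigma(\sqrt{2}\,u)$, exactly the Hermite ODE flagged in the proof sketch (the quantum harmonic oscillator equation). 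Identifying $\lambda_3 = 2n$, I would take its two standard linearly independent solutions to be the Hermite function $H_{\lambda_3/2}(u)$, extended to non-integer order, and the even confluent-hypergeometric solution ${}_1F_1(-\lambda_3/4;\tfrac12;u^2)$; in the original variable, with $u^2 = x^2/2$, these become the $c_1$ and $c_2$ terms of \eqref{eq:gen_sol_hermite_poly}. I would justify that these form a basis for generic $\lambda_3$ by noting that ${}_1F_1(-\lambda_3/4;\tfrac12;u^2)$ is even while $H_{\lambda_3/2}$ carries a nonzero odd part unless $\lambda_3/2$ is a non-negative even integer, so their Wronskian is nonzero; the exceptional orders are absorbed into the separate treatment below.

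Next I would produce the particular solution $\bar{\sigma}$. For $\lambda_3 \notin \{0,2\}$ the linear ansatz $\bar{\sigma}(x) = A + Bx$ works: substituting into the rewritten ODE gives $\lambda_3 A = -\lambda_1$ and $(\lambda_3 - 2)B = -\lambda_2$, hence $A = -\lambda_1/\lambda_3$ and $B = -\lambda_2/(\lambda_3 - 2)$, which is precisely the stated $\bar{\sigma}$ in that case. Adding the homogeneous solutions from the previous step then yields \eqref{eq:gen_sol_hermite_poly} by linearity. The two excluded values are exactly where this ansatz resonates with a homogeneous solution and must fail: when $\lambda_3 = 0$ the constant $H_0 \equiv 1$ solves the homogeneous equation (so $A = -\lambda_1/\lambda_3$ diverges), and when $\lambda_3 = 2$ the order is $n = 1$ and $H_1(x/\sqrt{2}) \propto x$ solves it (so $B = -\lambda_2/(\lambda_3-2)$ diverges).

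For these two resonant cases I would use reduction of order. Setting $w = \sigma'$ turns \eqref{eq:ode_opt_sigma_L_2} with $\lambda_3 = 0$ into the first-order linear equation $w'(x) - x\,w(x) = -\tfrac{\lambda_1}{2} - \tfrac{\lambda_2}{2}x$, which integrates explicitly via the integrating factor $e^{-x^2/2}$: the $x$-term gives a clean $\tfrac{\lambda_2}{2}e^{-x^2/2}$ while the constant term produces a $\text{erf}(x/\sqrt{2})$ contribution, and one further integration in $x$ recovers the $\tfrac{\lambda_2 x}{2}$ and ${}_2F_2(1,1;\tfrac32,2;x^2/2)$ terms; the $\lambda_3 = 2$ case is handled by the analogous computation, which is where the $e^{x^2/2}\text{erf}(x/\sqrt{2})$ term appears. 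I expect the main obstacle to be precisely this bookkeeping in the resonant cases: carrying out the two successive integrations and matching the resulting combinations of $e^{x^2/2}\text{erf}(x/\sqrt{2})$ and hypergeometric series to the closed forms stated for $\bar{\sigma}$, together with pinning down the correct Hermite-function and hypergeometric conventions so that the three displayed expressions for $\bar{\sigma}$ agree coefficient-by-coefficient. Everything else reduces to a routine verification, confirmable by direct differentiation, that each claimed function satisfies \eqref{eq:ode_opt_sigma_L_2}.
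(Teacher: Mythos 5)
Your proposal follows the same route as the paper's proof: write the general solution as a particular solution plus the span of two homogeneous solutions, rescale by $x/\sqrt{2}$ to recognize the Hermite differential equation, take $H_{\lambda_3/2}$ and the hypergeometric function as the homogeneous basis, and obtain $\bar{\sigma}$ for $\lambda_3\notin\{0,2\}$ from a linear ansatz with the resonant cases $\lambda_3\in\{0,2\}$ handled separately. The paper simply verifies the stated particular solutions by direct substitution where you derive them by reduction of order, but this is a presentational difference, not a different argument.
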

\begin{remark}
By an extension of $H_{n}(x)$ to non-integer we mean that $H_{n}(x) = F_{\{-\frac{1}{2}(n-1)\},\{\frac{3}{2}\}}(x^2)$, which is defined for non-integer $n$.
\end{remark}

\begin{proof}
[Proof of Lemma \ref{th:explicit_sol_herm_ODE}]Since \eqref{eq:ode_opt_sigma_L_2} is a second order ODE, its solutions are spanned by any particular solution $\bar{\sigma}(x)$ plus a linear combination of two solutions to its associated homogeneous ODE.
The homogenous ODE associate with \eqref{eq:ode_opt_sigma_L_2} is $-2x\sigma '(x)+2\sigma ''(x)+\lambda _{3}\sigma (x) = 0$.
The change of variable $\sigma(x)=\tilde{\sigma}(x/\sqrt{2})$ allows us to get $-2x\tilde{\sigma}'(x)+\tilde{\sigma}''(x)+\lambda _{3}\tilde{\sigma}(x) = 0$, which is a well-known ODE called the \emph{Hermite differential equation}. This implies that the homogeneous solutions of our ODE are spanned by 
$H_{\frac{\lambda _3}{2}}\left(\frac{x}{\sqrt{2}}\right)$ and $F_{\{\frac{-\lambda_3}{4}\},\{\frac{1}{2}\}}\left(\frac{x^2}{2}\right)$.
The particular solutions $\bar{\sigma}(x)$ can be confirmed by direct substitution.
\end{proof}

\begin{lemma}\label{th:EL_ODE_lambda_zero_case} 
Let $\sigma$ be of the form \eqref{eq:gen_sol_hermite_poly}. If $\lambda_3 =0$, then
$\EX((\sigma'(Z))^2) < \infty$ implies that $\sigma(x)$ is of the form 
\begin{equation}
    \sigma(x) = c + \frac{x \lambda_2}{2}, 
\end{equation}
for some $c$, and that $\EX((\sigma'(Z))^2)=\frac{\lambda^2_2}{4}$.
\end{lemma}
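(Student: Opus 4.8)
The plan is to use the premise that $\sigma$ has the form \eqref{eq:gen_sol_hermite_poly} only through the fact that, when $\lambda_3=0$, it solves the governing ODE \eqref{eq:ode_opt_sigma_L_2} with $\lambda_3=0$, and then to let the finiteness of $\EX((\sigma'(Z))^2)$ do all the work by controlling the growth of $\sigma'$ at infinity. First I would specialize \eqref{eq:gen_sol_hermite_poly} to $\lambda_3=0$: since $H_0\equiv 1$ and $F_{\{0\},\{\frac12\}}\equiv 1$ (a generalized hypergeometric function whose single upper parameter is $0$ reduces to the constant $1$), the two homogeneous contributions collapse into a single additive constant $c:=c_1+c_2$, so that $\sigma(x)=\bar\sigma(x)+c$ with $\bar\sigma$ as in Lemma~\ref{th:explicit_sol_herm_ODE}.

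Next I would pass to a first-order description. Writing $u=\sigma'$ and dividing \eqref{eq:ode_opt_sigma_L_2} (with $\lambda_3=0$) by $2$ gives the linear ODE $u'(x)-x\,u(x)=-\tfrac12(\lambda_1+\lambda_2 x)$. Multiplying by the integrating factor $e^{-x^2/2}$ and integrating yields the explicit form
\[
\sigma'(x)=\frac{\lambda_2}{2}+e^{x^2/2}\Big(C-\frac{\lambda_1}{2}\int_0^x e^{-t^2/2}\,dt\Big)
\]
for some constant $C$; the integral term and the constant $C$ simply re-encode the particular solution $\bar\sigma$ and the additive constant $c$ from the previous step, so no generality is lost relative to \eqref{eq:gen_sol_hermite_poly}.

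The heart of the argument is then an integrability step. Using $\int_0^{x}e^{-t^2/2}\,dt\to\pm\sqrt{\pi/2}$ as $x\to\pm\infty$, the bracketed factor converges to $C\mp\frac{\lambda_1}{2}\sqrt{\pi/2}$, so that $(\sigma'(x))^2\,p(x)\sim \big(C\mp\tfrac{\lambda_1}{2}\sqrt{\pi/2}\big)^2 e^{x^2/2}/\sqrt{2\pi}$ at the two ends, where $p$ is the standard Gaussian density. Since $\EX((\sigma'(Z))^2)=\int (\sigma'(z))^2 p(z)\,dz<\infty$ requires this integrand to be integrable at both ends (equivalently, this is the content of \eqref{eq:limit_condition_L2}), both limits must vanish, forcing the linear system $C-\tfrac{\lambda_1}{2}\sqrt{\pi/2}=0$ and $C+\tfrac{\lambda_1}{2}\sqrt{\pi/2}=0$, i.e. $C=0$ and $\lambda_1=0$. (Within \eqref{eq:gen_sol_hermite_poly} the value $C=0$ already holds, so the operative consequence is $\lambda_1=0$.) Substituting back gives $\sigma'(x)\equiv\lambda_2/2$, hence $\sigma(x)=c+\tfrac{x\lambda_2}{2}$, and $\EX((\sigma'(Z))^2)=(\lambda_2/2)^2=\lambda_2^2/4$, as claimed.

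The step I expect to be the main obstacle is making the last asymptotic estimate rigorous: from a nonzero limit of the bracket at, say, $+\infty$, I must produce a genuine lower bound $(\sigma'(x))^2 p(x)\ge \varepsilon\, e^{x^2/2}$ on a half-line $[M,\infty)$ and conclude non-integrability, rather than relying on the heuristic ``$\sim$''. One must also check that the bounded $\lambda_2/2$ term and the cross term cannot conspire to cancel the divergent quadratic term; since the quadratic term dominates, a triangle-inequality lower bound on $[M,\infty)$ settles this. Everything else—the collapse of the homogeneous part, the integrating-factor solution, and the final substitution—is routine.
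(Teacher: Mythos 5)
Your proposal is correct and follows essentially the same route as the paper: both obtain the explicit form $\sigma'(x) = \tfrac{\lambda_2}{2} + e^{x^2/2}\bigl(C - \tfrac{\lambda_1}{2}\sqrt{\pi/2}\,\mathrm{erf}(x/\sqrt{2})\bigr)$ (yours via an integrating factor, the paper's by reading it off the erfi/hypergeometric representation) and then use finiteness of $\EX((\sigma'(Z))^2)$ at both infinities to force the bracket's two limits to vanish, giving $C=\lambda_1=0$ and hence $\sigma'\equiv\lambda_2/2$. Your opening remark that the two homogeneous pieces of \eqref{eq:gen_sol_hermite_poly} collapse to a constant at $\lambda_3=0$ glosses over the fact that the second independent homogeneous solution there is the erfi function, but this is harmless since your integrating-factor derivation recovers that mode through the constant $C$.
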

\begin{proof}[Proof of Lemma \ref{th:EL_ODE_lambda_zero_case}]
If $\lambda_3 =0$ then based on Lemma \ref{th:explicit_sol_herm_ODE} we can re-write that 
\begin{align}\label{eq:ODE_L2_lambda_3_0_simpler}
    &\sigma(x) =  \sqrt{\frac{\pi }{2}} c_1 \text{erfi}\left(\frac{x}{\sqrt{2}}\right)+c_2-\frac{1}{4} x
   \left(\lambda _1 x F\left(\frac{x^2}{2}\right)-2 \lambda _2\right),
\end{align}
for some $c_1,c_2$, where $\text{erfi}$ is the inverse of $\text{erf}$. From this it follows that,
\begin{equation}
    \sigma'(x) = \frac{1}{4} \left(e^{\frac{x^2}{2}} \left(4 c_1-\sqrt{2 \pi } \lambda _1
   \text{erf}\left(\frac{x}{\sqrt{2}}\right)\right)+2 \lambda _2\right).
\end{equation}
The objective $\EX((\sigma'(Z))^2) < \infty$ implies that $\lim_{x\to\pm\infty}(\sigma'(x))^2 e^{-\frac{x^2}{2}} = 0$.
Since $\lim_{x\to \pm \infty} \text{erf}(x) = \pm 1$, we have that $\lim_{x\to\pm\infty}(\sigma'(x))^2 e^{-\frac{x^2}{2}} = 0$ implies that $4c_1 =\pm \sqrt{2 \pi} \lambda_1$ for both $\pm$ simultaneously. This implies that $c_1 = \lambda_1 = 0$.

Substituting $c_1 = \lambda_1 = 0$ in \eqref{eq:ODE_L2_lambda_3_0_simpler} and simplifying we get  $\sigma(x) = c_2 + \frac{\lambda_2 x}{2}$.
From this expression one can then directly compute $\EX((\sigma'(Z))^2)$.
\end{proof}

\begin{lemma}\label{th:EL_ODE_lambda_two_case}
Let $\sigma$ be of the form \eqref{eq:gen_sol_hermite_poly}. If $\lambda_3 =2$, then
$\EX((\sigma'(Z))^2) < \infty$ implies that $\sigma(x)$ is of the form 
\begin{equation}
    \sigma(x) = c x - \frac{\lambda_1}{2}, 
\end{equation}
for some $c$, and that $\EX((\sigma'(Z))^2) = c^2$.
\end{lemma}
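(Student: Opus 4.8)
The plan is to reprise, now for $\lambda_3 = 2$, the argument used in the proof of Lemma \ref{th:EL_ODE_lambda_zero_case}. First I would substitute $\lambda_3 = 2$ into the general solution \eqref{eq:gen_sol_hermite_poly}. The homogeneous part is $c_1 H_1(x/\sqrt2) + c_2 F_{\{-1/2\},\{1/2\}}(x^2/2)$; since $\lambda_3/2 = 1$, the first term is just the first Hermite polynomial, $H_1(x/\sqrt2) = \sqrt2\,x$, a genuine polynomial, whereas $F_{\{-1/2\},\{1/2\}}(x^2/2)$ is the even homogeneous solution that grows like $e^{x^2/2}$. The particular solution $\bar\sigma$ contributes the linear part $-\lambda_1/2 - \lambda_2 x/2$ together with the two transcendental terms $\tfrac12\sqrt{\pi/2}\,\lambda_2 e^{x^2/2}\text{erf}(x/\sqrt2)$ and $-\tfrac14\lambda_2 x^3 F_{\{1,1\},\{3/2,2\}}(x^2/2)$, which arise precisely because at $\lambda_3 = 2$ the linear forcing $\lambda_2 x$ resonates with the polynomial homogeneous solution $\propto x$. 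I would then collect $\sigma(x)$ as a linear polynomial plus an $e^{x^2/2}$-growing remainder and differentiate.

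Next I would invoke the finiteness of the objective exactly as in Lemma \ref{th:EL_ODE_lambda_zero_case}: $\EX((\sigma'(Z))^2) < \infty$ forces $\lim_{x\to\pm\infty}(\sigma'(x))^2 e^{-x^2/2} = 0$, equivalently $\sigma'(x) e^{-x^2/4} \to 0$ at both ends. Using a Kummer-type transformation to express $F_{\{-1/2\},\{1/2\}}(x^2/2)$ in terms of $e^{x^2/2}$ exposes its growth rate, and one then sees that $\sigma'(x)$ has a dominant part $\sim e^{x^2/2}$ whose coefficient splits into an even piece (proportional to $\lambda_2$, since the resonant terms are odd in $x$ and hence even after differentiation) and an odd piece (proportional to $c_2$, from the even homogeneous mode). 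Evaluating this growing coefficient separately at $+\infty$ and at $-\infty$ gives $E + O = 0$ and $E - O = 0$, where $E \propto \lambda_2$ and $O \propto c_2$; adding and subtracting these decouples the two contributions and forces $\lambda_2 = 0$ and $c_2 = 0$.

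With $\lambda_2 = c_2 = 0$ every transcendental term drops out and $\sigma(x) = -\lambda_1/2 + \sqrt2\,c_1 x$. Setting $c \triangleq \sqrt2\, c_1$ yields $\sigma(x) = cx - \lambda_1/2$, and then $\sigma'(x) = c$ gives $\EX((\sigma'(Z))^2) = c^2$ by direct computation, completing the proof.

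I expect the main obstacle to be pinning down the precise $e^{x^2/2}$ growth and the parities of the three transcendental pieces carefully enough to justify that the two endpoint conditions decouple cleanly into $\lambda_2 = 0$ and $c_2 = 0$; the confluent-hypergeometric identities needed for this are routine but must be handled with care, in particular because of the resonance occurring at $\lambda_3 = 2$.
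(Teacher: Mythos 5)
Your proposal is correct and follows essentially the same route as the paper's proof: substitute $\lambda_3=2$ into \eqref{eq:gen_sol_hermite_poly}, differentiate, observe that the two unbounded terms in $\sigma'$ (the one proportional to $\lambda_2$, even, and the one proportional to $c_2$, odd) grow at the same comparable rate, and use the finiteness condition at both $x\to+\infty$ and $x\to-\infty$ to decouple and kill both coefficients, leaving $\sigma(x)=cx-\lambda_1/2$. The only cosmetic difference is that the paper phrases the asymptotics via $\mathrm{erfi}$ and the limit of the ratio $x^2 F_{\{1,1\},\{3/2,2\}}(x^2/2)/\mathrm{erfi}(x/\sqrt{2})\to\pm\pi$ rather than via a Kummer transformation, but the parity/endpoint argument is identical.
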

\begin{proof}[Proof of Lemma \ref{th:EL_ODE_lambda_two_case}]
If $\lambda_3 =2$ then based on Lemma \ref{th:explicit_sol_herm_ODE} we can re-write that 
\begin{align}\label{eq:ODE_L2_lambda_3_2_simpler}
    &\sigma(x) =  \frac{1}{4} \Bigg(-x \Big(\lambda_2 \left(x^2 F_{\{1,1\},\{\frac{3}{2},2\}}\left(\frac{x^2}{2}\right)+2\right) \nonumber\\
    &+2 \sqrt{2} \left(\sqrt{\pi
   } c_2 \text{erfi}\left(\frac{x}{\sqrt{2}}\right)-2
   c_1\right)\Big)+e^{\frac{x^2}{2}} \left(\sqrt{2 \pi } \lambda_2
   \text{erf}\left(\frac{x}{\sqrt{2}}\right)+4 c_2\right)-2 \lambda_1 \Bigg)
\end{align}
for some $c_1$ and $c_2$, where $\text{erfi}$ is the inverse of \text{erf}. From this it follows that
\begin{equation}
    \sigma'(x) = \frac{2 c_1-\sqrt{\pi } c_2
   \text{erfi}\left(\frac{x}{\sqrt{2}}\right)}{\sqrt{2}}-\frac{1}{4} \lambda _2 x^2
   F_{\{1,1\},\{\frac{3}{2},2\}}\left(\frac{x^2}{2}\right).
\end{equation}
The objective $\EX((\sigma'(Z))^2) < \infty$ implies that $\lim_{x\to\pm\infty}(\sigma'(x))^2 e^{-\frac{x^2}{2}} = 0$. Since $\lim_{x\to\pm\infty} \left(\text{erfi}\left(\frac{x}{\sqrt{2}}\right)\right)^2 = \lim_{x\to\pm \infty} \left(x^2
   F_{\{1,1\},\{\frac{3}{2},2\}}\left(\frac{x^2}{2}\right)\right)^2 = \infty$, and since 
   \begin{equation}
   \lim_{x\to \pm \infty}\frac{x^2
   F_{\{1,1\},\{\frac{3}{2},2\}}\left(\frac{x^2}{2}\right)}{\text{erfi}\left(\frac{x}{\sqrt{2}}\right)}     = \pm\pi,
   \end{equation}
   it follows we only have a finite objective if $\frac{-\sqrt{\pi} c_2}{\sqrt{2}}=\pm\frac{\lambda_2 \pi}{4}$ for both $\pm$ simultaneously. But this implies that $c_2=\lambda_2=0$.
   
   Substituting $c_2=\lambda_2=0$ in \eqref{eq:ODE_L2_lambda_3_2_simpler}, and simplifying, leads to 
   $\sigma(x) = c_1 \sqrt{2} x - \frac{\lambda_1}{2}$. The $\sqrt{2}$ factor can be absorbed by $c_1$. From this expression one can compute $\EX((\sigma'(Z))^2)$.
\end{proof}

\begin{lemma}\label{th:EL_ODE_for_lambda_3_complicated_1}
Let $\sigma$ be of the form \eqref{eq:gen_sol_hermite_poly}.
If $\lambda_3 \notin \{0,2\}$, $c_1 = c_2 = 0$ then
\begin{equation}
    \sigma(x) = -\frac{\lambda_1}{\lambda
   _3}-\frac{ \lambda_2 x}{\lambda _3-2},
\end{equation}
and
\begin{equation}
 \EX((\sigma'(Z))^2)  =   \left(\frac{\lambda _2}{\left(\lambda _3-2\right)} \right)^2.
\end{equation}
\end{lemma}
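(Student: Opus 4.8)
The plan is to treat this lemma as an immediate specialization of Lemma \ref{th:explicit_sol_herm_ODE}, needing only one substitution followed by an elementary derivative. First I would take the general solution form \eqref{eq:gen_sol_hermite_poly} and impose $c_1 = c_2 = 0$. This annihilates the two homogeneous contributions $H_{\lambda_3/2}(x/\sqrt{2})$ and $F_{\{-\lambda_3/4\},\{1/2\}}(x^2/2)$, so that $\sigma$ collapses to the particular solution $\bar\sigma$. Since we are in the regime $\lambda_3 \notin \{0,2\}$, Lemma \ref{th:explicit_sol_herm_ODE} identifies this particular solution as $\bar\sigma(x) = -\lambda_1/\lambda_3 - \lambda_2 x/(\lambda_3 - 2)$, which is precisely the claimed affine form of $\sigma$.

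For the second assertion I would simply differentiate the affine expression. Because $\sigma$ is affine in $x$, the constant term $-\lambda_1/\lambda_3$ drops out and $\sigma'(x) = -\lambda_2/(\lambda_3 - 2)$ is itself a constant, independent of $x$. Squaring and using that the value does not depend on $Z$ then yields $\EX((\sigma'(Z))^2) = (\lambda_2/(\lambda_3 - 2))^2$, as stated.

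The main point to note is that there is no genuine analytical obstacle here. Unlike the companion Lemmas \ref{th:EL_ODE_lambda_zero_case} and \ref{th:EL_ODE_lambda_two_case}, where the vanishing of the homogeneous coefficients is \emph{forced} by a finiteness-of-energy argument requiring asymptotic analysis of $\mathrm{erfi}$ and of the hypergeometric function, here the hypothesis $c_1 = c_2 = 0$ is imposed directly, so no such asymptotics are needed. The only step warranting a sentence of care is confirming that $\bar\sigma$ genuinely solves the ODE \eqref{eq:ode_opt_sigma_L_2} for $\lambda_3 \notin \{0,2\}$, but this was already verified by direct substitution inside the proof of Lemma \ref{th:explicit_sol_herm_ODE}, so I would merely cite it. Consequently the entire argument reduces to a one-line substitution and a one-line differentiation.
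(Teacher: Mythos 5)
Your proposal is correct and matches the paper's own proof, which simply states that the lemma follows directly from Lemma \ref{th:explicit_sol_herm_ODE}; you have merely spelled out the substitution $c_1=c_2=0$ and the elementary differentiation that the paper leaves implicit.
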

\begin{proof}[Proof of Lemma \ref{th:EL_ODE_for_lambda_3_complicated_1}]
This follows directly from Lemma \ref{th:explicit_sol_herm_ODE}.
\end{proof}

\begin{lemma}\label{th:EL_ODE_for_lambda_3_complicated_2}
Let $\sigma$ be of the form \eqref{eq:gen_sol_hermite_poly}.
If $\lambda_3 \notin \{0,2\}$, and either $c_1$ or $c_2$ are non-zero, then $\EX((\sigma'(Z))^2) < \infty$ implies that $\lambda_3 = 4k$ for $k \in \mathbb{Z}^+$. 
\end{lemma}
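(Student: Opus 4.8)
The plan is to reduce the finiteness of $\EX((\sigma'(Z))^2)$ to a \emph{quantization} (series–termination) condition on the two homogeneous solutions appearing in \eqref{eq:gen_sol_hermite_poly}. First I would isolate the contribution of the particular solution: since $\lambda_3 \notin \{0,2\}$, Lemma \ref{th:explicit_sol_herm_ODE} gives $\bar\sigma(x) = -\lambda_1/\lambda_3 - \lambda_2 x/(\lambda_3-2)$, which is affine, so $\bar\sigma'$ is constant and contributes a finite amount to $\EX((\sigma'(Z))^2)$. Hence finiteness is governed entirely by the homogeneous part $c_1 H_{\lambda_3/2}(x/\sqrt 2) + c_2 F_{\{-\lambda_3/4\},\{1/2\}}(x^2/2)$, and by hypothesis at least one of $c_1,c_2$ is nonzero. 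The whole argument then becomes: a nonzero homogeneous component is compatible with finite energy only when it is a polynomial, and polynomiality forces $\lambda_3$ into a discrete set.

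Next I would bring in the large–argument asymptotics of the confluent hypergeometric functions underlying both solutions. For $z \to +\infty$ one has $F_{\{a\},\{b\}}(z) \sim \tfrac{\Gamma(b)}{\Gamma(a)} e^{z} z^{a-b}$ whenever $a \notin \mathbb{Z}_{\le 0}$, whereas the series terminates into a polynomial exactly when $a \in \mathbb{Z}_{\le 0}$. Since both homogeneous solutions carry the argument $x^2/2$, a non-terminating solution grows like $e^{x^2/2}$ times a power of $x$ as $|x|\to\infty$ (with the Hermite function $H_{\lambda_3/2}$ requiring the two ends $x\to\pm\infty$ to be treated separately because of its asymmetric growth). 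I would then differentiate and observe that the exponential factor dominates, so the derivative retains the $e^{x^2/2}$ growth; consequently $(\sigma'(x))^2 e^{-x^2/2}$ grows like $e^{x^2/2}$ (times a power of $x$) and is not integrable against the Gaussian weight. This contradicts $\EX((\sigma'(Z))^2)<\infty$, equivalently \eqref{eq:limit_condition_L2}. Therefore every homogeneous component with a nonzero coefficient must be a terminating, i.e. polynomial, solution.

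Finally I would read the termination conditions directly off the hypergeometric parameters. The even solution $F_{\{-\lambda_3/4\},\{1/2\}}(x^2/2)$ terminates iff $-\lambda_3/4 \in \mathbb{Z}_{\le 0}$, i.e. $\lambda_3 \in \{0,4,8,\dots\}$, and the analogous computation applies to $H_{\lambda_3/2}(x/\sqrt 2)$. Intersecting the admissible (terminating) values with the hypothesis $\lambda_3 \notin \{0,2\}$, and checking that the two termination sets cannot hold simultaneously so that at most one of $c_1,c_2$ is active, I would track the parity of the surviving polynomial to pin the exponent to the even Hermite component $H_{2k}$, yielding $\lambda_3 = 4k$ with $k \in \mathbb{Z}^+$.

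The hard part will be the asymptotic bookkeeping. I must verify that for a non-terminating solution the dominant $e^{x^2/2}$ behavior genuinely survives differentiation and squaring without cancellation, control the subdominant algebraic terms, and handle the $x\to-\infty$ versus $x\to+\infty$ asymmetry of the Hermite function (which is algebraic at $+\infty$ but exponential at $-\infty$ unless its order is a non-negative integer). A secondary subtlety, which is precisely where the answer sharpens from ``even integer'' to $4k$, is showing that the surviving finite-energy homogeneous solution is the even Hermite polynomial and that the companion solution's coefficient must vanish; this is what selects $\lambda_3 = 4k$ and sets up the subsequent conclusion that the minimal-norm choice $k=1$ gives the quadratic activation function.
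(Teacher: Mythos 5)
Your overall route is the same as the paper's: strip off the affine particular solution, use the large-argument asymptotics of the confluent hypergeometric solutions to show that any non-terminating homogeneous component makes $(\sigma'(x))^2 e^{-x^2/2}$ grow like $e^{x^2/2}$, and conclude that every active homogeneous component must be a terminating (polynomial) solution, which quantizes $\lambda_3$. Up to that point your argument is sound, and in fact more explicit than the paper's, which simply records the conditions $\lim_{x\to\pm\infty}(\cdot)^2e^{-x^2/2}=0$ case by case and asserts the conclusion.

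The genuine gap is in the final sharpening from ``$\lambda_3$ is an even integer'' to ``$\lambda_3=4k$'' --- precisely the step you flag as delicate. Your proposed parity argument does not close it: nothing in the Euler--Lagrange ODE, the finiteness of $\EX((\sigma'(Z))^2)$, or the moment constraints forces $\sigma$ to be even. Concretely, take $\lambda_3=6$, $c_2=0$, $c_1\neq0$: the homogeneous part is $c_1H_{3}(x/\sqrt{2})$, an odd cubic polynomial of finite energy that is Gaussian-orthogonal to $1$ and $x$, so it contributes only to $\mu_2$ and the constraints can still be met. Finite energy therefore only yields $\lambda_3\in\{4,6,8,\dots\}$ when the Hermite-function component is the active one (the set $\{4,8,\dots\}$ arises only for the even solution $F_{\{-\lambda_3/4\},\{1/2\}}$). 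Two further small corrections: the two termination sets are not disjoint --- at $\lambda_3=4k$ both solutions in \eqref{eq:gen_sol_hermite_poly} terminate because they become proportional --- and the paper's own proof shares the same gap, as each of its three cases concludes only that ``$\lambda_3$ is even.'' The downstream result is unharmed either way: since $R=\mu_1^2+\tfrac{\lambda_3}{2}\mu_\star^2$, the minimum over the admissible set is attained at the smallest admissible value exceeding $2$, which is $\lambda_3=4$ whether that set is $\{4,8,12,\dots\}$ or $\{4,6,8,\dots\}$, so Theorem \ref{th:opt_AF_when_norm_is_L2} still follows.
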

\begin{proof}[Proof of Lemma \ref{th:EL_ODE_for_lambda_3_complicated_2}]
If $\lambda_3 \notin \{0,2\}$, then by Lemma \ref{th:explicit_sol_herm_ODE} we have a formula for $\sigma(x)$ from which we get
that 
\begin{equation}
\sigma'(x) = \frac{c_1 \lambda _3 H_{\frac{\lambda
   _3}{2}-1}\left(\frac{x}{\sqrt{2}}\right)}{\sqrt{2}}-\frac{1}{2} c_2 \lambda _3 x F_{\{1-\frac{\lambda _3}{4}\},\{\frac{3}{2}\}}\left(\frac{x^2}{2}\right)-\frac{\lambda
   _2}{\lambda _3-2},
\end{equation}
where $F$ and $H$ are as defined in Lemma \ref{th:explicit_sol_herm_ODE}.
 
The boundeness of  $\EX((\sigma'(Z))^2)$ is dependent on how fast $H_{\frac{\lambda
   _3}{2}-1}\left(\frac{x}{\sqrt{2}}\right)$ and $x F_{\{1-\frac{\lambda _3}{4}\},\{\frac{3}{2}\}}\left(\frac{x^2}{2}\right)$ grow as $x \to \pm \infty$. 
   
   If $c_1 = 0$ and $c_2 \neq 0$, $\EX((\sigma'(Z))^2) < \infty$  only if $\EX\left(\left(x F_{\{1-\frac{\lambda _3}{4}\},\{\frac{3}{2}\}}\left(\frac{x^2}{2}\right)\right)^2\right) < \infty$, which in turn is true only if $\lim_{x\to\infty}\left(x F_{\{1-\frac{\lambda _3}{4}\},\{\frac{3}{2}\}}\left(\frac{x^2}{2}\right)\right)^2 e^{\frac{-x^2}{2}} = 0$. This implies that $\lambda_3$ is even. 

   If $c_1 \neq 0$ and $c_2 = 0$, $\EX((\sigma'(Z))^2) < \infty$ only if $\EX\left(\left(  H_{\frac{\lambda
   _3}{2}-1}\left(\frac{x}{\sqrt{2}}\right) \right)^2\right)< \infty$, which in turn is true only if $\lim_{x\to-\infty} \left(H_{\frac{\lambda
   _3}{2}-1}\left(\frac{x}{\sqrt{2}}\right) \right)^2 e^{\frac{-x^2}{2}} = 0$. This implies that $\lambda_3$ is even.
   
   If $c_1, c_2 \neq 0$, $\EX((\sigma'(Z))^2) < \infty$ only if $\lim_{x\to \infty} \left(\sigma'(x)\right)^2 e^{\frac{-x^2}{2}} = 0$.
   But $x F_{\{1-\frac{\lambda _3}{4}\},\{\frac{3}{2}\}}\left(\frac{x^2}{2}\right)$ grows much faster than $H_{\frac{\lambda
   _3}{2}-1}\left(\frac{x}{\sqrt{2}}\right) $ as $x\to\infty$, hence it must be that $\lim_{x\to\infty}\left(x F_{\{1-\frac{\lambda _3}{4}\},\{\frac{3}{2}\}}\left(\frac{x^2}{2}\right)\right)^2 e^{\frac{-x^2}{2}} = 0$. This implies that $\lambda_3$ is even. 
 \end{proof}

\begin{lemma}\label{th:EL_ODE_for_lambda_3_complicated_3}
Let $\lambda_3 = 4k$ for $k \in \mathbb{Z}^+$, and let $\sigma(x)$ be a solution of \eqref{eq:ode_opt_sigma_L_2}, then
\begin{equation} \label{eq:EL_ODE_lambda_4k}
    \sigma(x) = -\frac{\lambda_1}{\lambda
   _3}-\frac{\lambda_2 x}{\lambda _3-2} + c H_{2k}\left(\frac{x}{\sqrt{2}}\right)
\end{equation}
for some $c$ and
\begin{equation}
    \EX((\sigma'(Z))^2)  = \frac{4^k ((2 k)!)^2}{(2 k-1)!}c^2 + \left(\frac{\lambda
   _2}{\lambda_3 - 2}\right)^2.
\end{equation}
\end{lemma}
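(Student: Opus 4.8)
The plan is to feed $\lambda_3 = 4k$ into the explicit general solution \eqref{eq:gen_sol_hermite_poly} of Lemma~\ref{th:explicit_sol_herm_ODE} and then invoke the finite-energy requirement of the ambient function space (every admissible $\sigma$ must satisfy $\EX((\sigma'(Z))^2)<\infty$) to discard the non-polynomial part of the homogeneous solution. First I would note that, since $\lambda_3 = 4k \notin \{0,2\}$ for $k\geq 1$, the particular solution in \eqref{eq:gen_sol_hermite_poly} is $-\lambda_1/\lambda_3 - \lambda_2 x/(\lambda_3-2)$, and that $\lambda_3/2 = 2k$ is a positive integer, so $H_{\lambda_3/2}(x/\sqrt2) = H_{2k}(x/\sqrt2)$ is a genuine even Hermite polynomial of degree $2k$. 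Thus the target form \eqref{eq:EL_ODE_lambda_4k} is exactly ``particular solution $+\, c\,H_{2k}(x/\sqrt2)$''.

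The step requiring the most care is justifying that no further homogeneous contribution survives. At the special value $\lambda_3 = 4k$ the two homogeneous functions listed in \eqref{eq:gen_sol_hermite_poly} degenerate: using the classical identity $H_{2k}(u) = (-1)^k \tfrac{(2k)!}{k!}\,{}_1F_1(-k;\tfrac12;u^2)$ one sees that $F_{\{-\lambda_3/4\},\{1/2\}}(x^2/2) = {}_1F_1(-k;\tfrac12;x^2/2)$ is a scalar multiple of $H_{2k}(x/\sqrt2)$, so the two terms collapse into a single $c\,H_{2k}(x/\sqrt2)$. Because this collapse means \eqref{eq:gen_sol_hermite_poly} no longer exhibits a full basis of the two-dimensional solution space of \eqref{eq:ode_opt_sigma_L_2} at $\lambda_3 = 4k$, I would close the argument by reduction of order: the Wronskian of the homogeneous equation is proportional to $e^{x^2/2}$ (equivalently $e^{u^2}$ in the variable $u = x/\sqrt2$), so the second, genuinely independent solution equals $H_{2k}(x/\sqrt2)\int e^{u^2}/H_{2k}(u)^2\,\mathrm{d}u$ and grows like $e^{x^2/2}$ up to polynomial factors. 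Its derivative therefore makes $\EX((\sigma'(Z))^2)$ diverge, so the standing finite-energy hypothesis forces its coefficient to vanish, leaving precisely \eqref{eq:EL_ODE_lambda_4k}. This exclusion of the super-Gaussian second solution is the only genuinely delicate point.

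It then remains to evaluate the objective. Differentiating \eqref{eq:EL_ODE_lambda_4k} and using $H_{2k}'(u) = 4k\,H_{2k-1}(u)$ gives $\sigma'(x) = -\lambda_2/(\lambda_3-2) + 2\sqrt2\,ck\,H_{2k-1}(x/\sqrt2)$. I would then apply the Gaussian orthogonality relation $\EX(H_m(Z/\sqrt2)H_n(Z/\sqrt2)) = 2^n n!\,\delta_{mn}$, which follows from the weight-$e^{-u^2}$ orthogonality of the Hermite polynomials after the change of variables $u = z/\sqrt2$. Since $k \geq 1$, the index $2k-1 \geq 1$, so $\EX(H_{2k-1}(Z/\sqrt2)) = 0$ and the cross term drops out; only the constant-squared term and the $H_{2k-1}^2$ term remain, giving $\EX((\sigma'(Z))^2) = (\lambda_2/(\lambda_3-2))^2 + 8c^2k^2\cdot 2^{2k-1}(2k-1)!$. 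Finally, the identity $8\cdot 2^{2k-1}k^2(2k-1)! = 4\cdot 4^k k^2 (2k-1)! = 4^k((2k)!)^2/(2k-1)!$, which uses $(2k)! = 2k\,(2k-1)!$, puts the expression in the stated form. Apart from the exclusion of the second solution, every step is a mechanical Hermite computation.
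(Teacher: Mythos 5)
Your proof is correct and follows essentially the same route as the paper's: collapse the hypergeometric homogeneous solution into the Hermite polynomial $H_{2k}$ via the classical identity, differentiate using $H_n'=2nH_{n-1}$, and evaluate $\EX((\sigma'(Z))^2)$ by Gaussian--Hermite orthogonality (your constant bookkeeping, $8\cdot 2^{2k-1}k^2(2k-1)!=4^k((2k)!)^2/(2k-1)!$, checks out). The one place you go beyond the paper is the degeneracy at $\lambda_3=4k$: the paper simply observes that the two homogeneous terms in \eqref{eq:gen_sol_hermite_poly} become proportional and concludes the homogeneous part is $cH_{2k}(x/\sqrt 2)$, implicitly leaning on Lemma \ref{th:explicit_sol_herm_ODE}, whereas you correctly note that at this parameter value the listed pair no longer spans the two-dimensional solution space of \eqref{eq:ode_opt_sigma_L_2} and you eliminate the genuinely independent second solution by a Wronskian/reduction-of-order growth estimate combined with the finite-energy requirement. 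That extra step is a real improvement in rigor (the lemma's conclusion is false for arbitrary ODE solutions without it), and it is consistent with how the finite-energy hypothesis is used throughout the surrounding lemmas.
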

\begin{proof}[Proof of Lemma \ref{th:EL_ODE_for_lambda_3_complicated_3}]
If $\lambda_3 = 4k$ then
\begin{equation}
    F_{\{\frac{-\lambda_3}{4}\},\{\frac{1}{2}\}}\left(x^2/2\right) = \frac{(-1)^k k!}{(2 k)!}H_{\lambda_3/2}(x/\sqrt{2}), 
\end{equation}
and hence the homogenous part of $\sigma(x)$ can be written as $c H_{\lambda_3/2}(x)$, from which \eqref{eq:EL_ODE_lambda_4k} follows. From this expression it follows that
\begin{equation}
    \sigma'(x) = \frac{c \lambda_3 H_{\frac{\lambda
   _3}{2}-1}\left(\frac{x}{\sqrt{2}}\right)}{\sqrt{2}}-\frac{\lambda _2}{\lambda_3-2},
\end{equation}
and from this expression the value for $\EX((\sigma'(Z))^2)$ follows.
\end{proof}

\begin{lemma}\label{th:EL_ODE_lambda_other_case}
Let $\sigma$ be of the form \eqref{eq:gen_sol_hermite_poly}. If $\lambda_3 \notin \{0,2\}$, then
$\EX((\sigma'(Z))^2) < \infty$ implies that  $\sigma(x)$ is of the form 
\begin{equation} 
    \sigma(x) = -\frac{\lambda_1}{\lambda
   _3}-\frac{\lambda_2 x}{\lambda _3-2} + c H_{\lambda_3/2}\left(\frac{x}{\sqrt{2}}\right)
\end{equation}
for some $c$ that must be zero if $\lambda_3 \neq 4k$, $k \in \mathbb{Z}^+$, and
\begin{equation}\label{eq:EL_ODEL_L2_energy_form_for_interesting_case}
    \EX((\sigma'(Z))^2)  = \frac{4^k ((2 k)!)^2}{(2 k-1)!}c^2 + \left(\frac{\lambda
   _2}{\lambda_3 - 2}\right)^2.
\end{equation}
\end{lemma}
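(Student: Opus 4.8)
The plan is to obtain Lemma~\ref{th:EL_ODE_lambda_other_case} by assembling the three preceding lemmas into a single exhaustive case analysis on the coefficients $c_1,c_2$ of the homogeneous part of the general solution \eqref{eq:gen_sol_hermite_poly}. Writing, for $\lambda_3\notin\{0,2\}$,
\begin{equation*}
\sigma(x) = -\frac{\lambda_1}{\lambda_3}-\frac{\lambda_2 x}{\lambda_3-2}
 + c_1 H_{\lambda_3/2}\!\left(\frac{x}{\sqrt 2}\right)
 + c_2 F_{\{-\lambda_3/4\},\{\frac12\}}\!\left(\frac{x^2}{2}\right),
\end{equation*}
I would split into the two cases $c_1=c_2=0$ and ``$c_1\neq 0$ or $c_2\neq 0$,'' which together cover all solutions of the form \eqref{eq:gen_sol_hermite_poly}.

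First, if $c_1=c_2=0$, Lemma~\ref{th:EL_ODE_for_lambda_3_complicated_1} applies directly and gives $\sigma(x)=-\lambda_1/\lambda_3-\lambda_2 x/(\lambda_3-2)$ together with $\EX((\sigma'(Z))^2)=(\lambda_2/(\lambda_3-2))^2$. This is exactly the asserted form with $c=0$, and the energy matches \eqref{eq:EL_ODEL_L2_energy_form_for_interesting_case} since its first term then vanishes. In this branch the finiteness hypothesis is automatic and $c=0$ holds irrespective of the value of $\lambda_3$. Second, if at least one of $c_1,c_2$ is nonzero, then Lemma~\ref{th:EL_ODE_for_lambda_3_complicated_2} shows that the hypothesis $\EX((\sigma'(Z))^2)<\infty$ forces $\lambda_3=4k$ for some $k\in\mathbb Z^+$. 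Feeding $\lambda_3=4k$ into Lemma~\ref{th:EL_ODE_for_lambda_3_complicated_3}, and using the identity $F_{\{-k\},\{1/2\}}(x^2/2)=\frac{(-1)^k k!}{(2k)!}H_{2k}(x/\sqrt 2)$ that collapses the two homogeneous terms into a single multiple of $H_{2k}=H_{\lambda_3/2}$, yields the stated form with a single constant $c$ and the energy \eqref{eq:EL_ODEL_L2_energy_form_for_interesting_case}.

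Combining the two branches establishes the claimed form in every case. Moreover, whenever $\lambda_3$ is not of the form $4k$, the second branch is impossible under the finite-energy hypothesis, so only the first branch can occur, and there $c=0$; this is precisely the conditional statement ``$c$ must be zero if $\lambda_3\neq 4k$.'' The only remaining bookkeeping is to identify the free constant $c$ with $c_1$ (respectively, with the single surviving degree of freedom after the $c_1,c_2$ collapse at $\lambda_3=4k$), and to check that the energy expressions reported by Lemmas~\ref{th:EL_ODE_for_lambda_3_complicated_1} and~\ref{th:EL_ODE_for_lambda_3_complicated_3} are the $c=0$ and general instances of the single formula \eqref{eq:EL_ODEL_L2_energy_form_for_interesting_case}.

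The delicate point is the growth classification already carried out in Lemma~\ref{th:EL_ODE_for_lambda_3_complicated_2}: one must argue that a homogeneous Hermite-type solution has finite Gaussian-weighted energy of its derivative precisely when it is a polynomial, which forces the quantization of $\lambda_3$, since for the non-quantized values the relevant ${}_1F_1$ series fails to terminate and grows like $e^{x^2/2}$, making $(\sigma'(x))^2 e^{-x^2/2}$ non-integrable. A second subtlety is the basis degeneracy at $\lambda_3=4k$, where $H_{\lambda_3/2}$ and $F_{\{-\lambda_3/4\},\{1/2\}}$ become linearly dependent; I expect this to be the main thing to handle carefully. It is harmless here, however, because the independent solution that \eqref{eq:gen_sol_hermite_poly} fails to represent at those points is non-polynomial and is therefore already excluded by the finite-energy requirement, so the surviving solution space is exactly the one-parameter family $\bar\sigma+cH_{2k}$ recorded above. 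With these two observations in hand the three lemmas glue together immediately.
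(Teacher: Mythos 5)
Your proposal is correct and follows essentially the same route as the paper, whose proof of this lemma is simply the observation that it follows directly from Lemmas \ref{th:EL_ODE_for_lambda_3_complicated_1}--\ref{th:EL_ODE_for_lambda_3_complicated_3}; your case split on $c_1=c_2=0$ versus not, the use of the quantization lemma, and the collapse of the two homogeneous solutions into a single multiple of $H_{2k}$ via the hypergeometric identity are exactly the intended assembly. The extra remarks on the basis degeneracy at $\lambda_3=4k$ and on why the first term of \eqref{eq:EL_ODEL_L2_energy_form_for_interesting_case} vanishes when $c=0$ are sound and consistent with the paper.
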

\begin{proof}[Proof of Lemma \ref{th:EL_ODE_lambda_other_case}]
This follows directly from Lemmas \ref{th:EL_ODE_for_lambda_3_complicated_1}-\ref{th:EL_ODE_for_lambda_3_complicated_3}.
\end{proof}

\begin{lemma}\label{th:some_simple_relations_lagrange_mult}
If $\lambda_1,\lambda_2,\lambda_3$ satisfy \eqref{eq:EL_L2_const_1}-\eqref{eq:EL_L2_const_3}, then 
\begin{align}
    &\left(\frac{\lambda
   _2}{\lambda_3 - 2}\right)^2 = \mu^2_1.
   %&\EX((\sigma'(Z))^2) =\mu^2_1 + \frac{\lambda_3}{2} \mu^2_\star.
\end{align}
\end{lemma}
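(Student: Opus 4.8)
The plan is to extract the claimed identity directly from the second Lagrange-multiplier constraint \eqref{eq:EL_L2_const_2}, since that single equation already pins down the ratio $\lambda_2/(\lambda_3 - 2)$ in terms of $\mu_1$; the constraints \eqref{eq:EL_L2_const_1} and \eqref{eq:EL_L2_const_3} are not actually needed for this particular relation.

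First I would rewrite \eqref{eq:EL_L2_const_2}, namely $-2\mu_1 + \lambda_2 + \lambda_3 \mu_1 = 0$, by collecting the terms involving $\mu_1$ to obtain $\lambda_2 = (2 - \lambda_3)\mu_1 = -(\lambda_3 - 2)\mu_1$. Dividing both sides by $\lambda_3 - 2$, which is legitimate in the regime of interest where $\lambda_3 \neq 2$, yields $\lambda_2/(\lambda_3 - 2) = -\mu_1$. Squaring this gives $\left(\lambda_2/(\lambda_3 - 2)\right)^2 = \mu_1^2$, which is exactly the claim.

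The only point requiring any attention is the degenerate case $\lambda_3 = 2$, where the ratio $\lambda_2/(\lambda_3 - 2)$ is a priori undefined; I note, however, that \eqref{eq:EL_L2_const_2} then forces $\lambda_2 = 0$, so the expression is genuinely indeterminate and this case is treated separately (cf.\ Lemma \ref{th:EL_ODE_lambda_two_case}, where $\lambda_3 = 2$ leads to $\sigma(x) = cx - \lambda_1/2$ with $\mu_1 = c$). Consequently there is no real obstacle here: the result is a one-line algebraic consequence of \eqref{eq:EL_L2_const_2}, and its purpose downstream is simply to let us replace the term $\left(\lambda_2/(\lambda_3 - 2)\right)^2$ appearing in the energy formula \eqref{eq:EL_ODEL_L2_energy_form_for_interesting_case} of Lemma \ref{th:EL_ODE_lambda_other_case} by $\mu_1^2$, thereby making the norm of the candidate solution explicit in terms of the prescribed moments.
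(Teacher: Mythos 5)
Your proof is correct, and it is in fact a little leaner than the paper's. The paper proves this lemma by solving the full $3\times 3$ linear system \eqref{eq:EL_L2_const_1}--\eqref{eq:EL_L2_const_3} for $\lambda_1,\lambda_2,\lambda_3$ in terms of $\mu_0,\mu_1,\mu_2$ and $R=\EX((\sigma'(Z))^2)$, obtaining closed forms for all three multipliers and then computing the quotient $\lambda_2/(\lambda_3-2)=-\mu_1$. You observe, correctly, that the single constraint \eqref{eq:EL_L2_const_2} already reads $\lambda_2=-(\lambda_3-2)\mu_1$, so the ratio and hence its square follow immediately; the other two constraints are not needed for this particular identity. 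Both arguments land on the same intermediate identity $\lambda_2/(\lambda_3-2)=-\mu_1$ before squaring, so the content is the same; your version isolates the minimal hypothesis, while the paper's version additionally produces the explicit expressions for $\lambda_1$ and $\lambda_3$ (in particular the relation between $\lambda_3$, $R$, and $\mu_\star^2$), which it reuses in the proof of Theorem \ref{th:opt_AF_when_norm_is_L2}. Your handling of the degenerate case $\lambda_3=2$ (where \eqref{eq:EL_L2_const_2} forces $\lambda_2=0$ and the ratio is indeterminate, the case being disposed of separately in Lemma \ref{th:EL_ODE_lambda_two_case}) is appropriate and matches how the paper routes around it.
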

\begin{proof}[Proof of Lemma \ref{th:some_simple_relations_lagrange_mult}]
Defining $R = \EX((\sigma'(Z))^2)$ and solving the linear system \eqref{eq:EL_L2_const_1}-\eqref{eq:EL_L2_const_3} leads to 
\begin{equation}
  \lambda_1 = \frac{2 \mu _0 \left(R-\mu _1^2\right)}{\mu _0^2+\mu _1^2-\mu
   _2},\lambda_2 = \frac{2 \mu _1 \left(\mu _0^2-\mu _2+R\right)}{\mu _0^2+\mu _1^2-\mu
   _2},\lambda_3 = -\frac{2 \left(R-\mu _1^2\right)}{\mu _0^2+\mu _1^2-\mu _2}.
\end{equation}
Computing $\lambda_2/(\lambda_3 - 2)$ we get $-\mu_1$, from which the first relation follows. 

If we solve $\lambda_3 = -\frac{2 \left(R-\mu _1^2\right)}{\mu _0^2+\mu _1^2-\mu _2}$ for $R$ and recall that $\mu^2_\star = \mu_2-\mu_0^2-\mu _1^2$,  the second relation follows. 

\end{proof}

We are now ready to prove Theorem \ref{th:opt_AF_when_norm_is_L2}, which we restate here for convenience.
\begin{theorem*}[  \ref{th:opt_AF_when_norm_is_L2}]
The minimizers of \eqref{eq:var_problem} are
\begin{align}\label{eq:for_of_solution_L2_derivative_square_app}
    \sigma(x) = ax^2 + bx + c,
\end{align}
where 
\begin{equation}\label{eq:for_of_solution_L2_derivative_square_coef_app}
a = \pm \frac{\mu_\star}{\sqrt{2}}, b = \mu_{1}, \text{ and } c = \mu_0 - a.
\end{equation}

\end{theorem*}

\begin{proof}[Proof of Theorem \ref{th:opt_AF_when_norm_is_L2}]
We will show that any minimizer of \eqref{eq:var_problem} must be a function $\sigma(x) = ax^2 + bx + c$ for some $a,b,c$. From this fact, the variational problem can be reduced to a simple quadratic programming problem over $a,b,c$, from which it is straightforward to derive \eqref{eq:for_of_solution_L2_derivative_square_app} and \eqref{eq:for_of_solution_L2_derivative_square_coef_app}.

If $\mu_{\star} = 0$ then by Lemma \ref{th:conditions_for_linearity}, we know that the solution must be linear, and hence we are done. 

If $\mu_\star > 0$ then by Lemma \ref{th:conditions_for_linearity} we know that $\sigma(x)$ cannot be a linear function. Hence, from Lemma \ref{th:EL_ODE_lambda_zero_case} and Lemma \ref{th:EL_ODE_lambda_two_case} we know that $\lambda_3$ cannot be $0$ or $2$. Therefore, its solution must be of the form specified by Lemma \ref{th:EL_ODE_lambda_other_case} with $c \neq 0$ and $\lambda_3= 4k$ for some $k\in \mathbb{Z}^+$.

Define $R = \EX((\sigma'(Z))^2)$.
From the constraints \eqref{eq:var_problem} we know that $\lambda_1,\lambda_2$ and $R$ can be written as a linear function of $\lambda_3 = 4k$. In particular, $R =\mu^2_1 + \frac{\lambda_3}{2} \mu^2_\star$.
From \eqref{eq:EL_ODEL_L2_energy_form_for_interesting_case} and Lemma \ref{th:some_simple_relations_lagrange_mult} we can write that $R  = \frac{4^k ((2 k)!)^2}{(2 k-1)!}c^2 + \mu^2_1$, which implies that $c  = \pm \left(\frac{R-\mu^2_1}{\frac{4^k ((2 k)!)^2}{(2 k-1)!}}\right)$ is also a function of $\lambda_3$.
Therefore, the solution to is parametrized by $ \lambda_3$ alone which must be chosen to minimize $R =\mu^2_1 + \frac{\lambda_3}{2} \mu^2_\star$. Therefore, we must choose $\lambda_3 = 4$, the smallest possible multiple of $4$.
This implies that $H_{\lambda_3/2}(x/\sqrt{2})$ is a quadratic function, from which it follows that $\sigma(x)$ is also a quadratic function, and hence we are done. 
\end{proof}

\subsection{Proof of Theorem  \ref{th:optimal_AF_for_L1_norm}}

Before we prove Theorem \ref{th:optimal_AF_for_L1_norm}, we will state and prove a series of intermediary results.

\begin{lemma}\label{theorem_10_lemma_1}
A necessary condition for optimality of $\sigma(x)$ is that
\begin{equation} \label{eq:var_prob_2}
x+\lambda _{1}+\lambda _{2}x+\lambda _{3}\sigma (x) = 0, \text{ for all } x: \sigma'(x) \neq 0,
\end{equation}
where 
$\lambda _{1}$, $\lambda _{2}$, and $\lambda _{3}$ must be such that,
\begin{align}
\EX(\sigma(Z))=\mu_0, \\
\EX(\sigma(Z)Z)=\mu _{1}, \\
\EX((\sigma(Z))^2) =\mu _{2}.
\end{align}
\end{lemma}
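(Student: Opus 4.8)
The plan is to mirror the proof of Lemma \ref{th:necessary_cond_for_opt_sigma} for the $L_2$ case, since \eqref{eq:var_problem} has exactly the same constraint structure and only the objective density changes from $(\sigma')^2$ to $|\sigma'|$. Writing $p(z)=\frac{1}{\sqrt{2\pi}}e^{-z^2/2}$, I would form the Lagrangian $\int_{-\infty}^{\infty} L(z,\sigma,\sigma')\,{\rm d}z$ with density
\[
L(z,\sigma,\sigma') = p(z)\Big(|\sigma'| + \lambda_1(\mu_0-\sigma) + \lambda_2(\mu_1 - z\sigma) + \tfrac{\lambda_3}{2}(\mu_2-\sigma^2)\Big),
\]
and apply the Euler--Lagrange equation with free boundary conditions \cite{gelfand2000calculus}, with $\lambda_1,\lambda_2,\lambda_3$ to be fixed afterwards by the three constraints $\EX(\sigma(Z))=\mu_0$, $\EX(Z\sigma(Z))=\mu_1$, and $\EX(\sigma(Z)^2)=\mu_2$.

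The one genuinely new feature --- and the step I expect to be the main obstacle --- is that $|\sigma'|$ is not differentiable at $\sigma'=0$, so the classical argument used in Lemma \ref{th:necessary_cond_for_opt_sigma} cannot be applied verbatim. My plan is to run it only on the open set $U=\{x:\sigma'(x)\neq 0\}$. On each connected component of $U$ the sign of $\sigma'$ is constant, so there $|\sigma'|$ agrees with the smooth function $\sigma'\,\mathrm{sign}(\sigma')$, and I may legitimately compute $\partial L/\partial\sigma' = p(z)\,\mathrm{sign}(\sigma')$, $\partial L/\partial\sigma = -p(z)(\lambda_1+\lambda_2 z+\lambda_3\sigma)$, and, using $p'(z)=-z\,p(z)$, $\tfrac{\rm d}{{\rm d}z}\tfrac{\partial L}{\partial\sigma'} = -z\,p(z)\,\mathrm{sign}(\sigma')$. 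Dividing the resulting identity $\tfrac{\rm d}{{\rm d}z}\partial_{\sigma'}L - \partial_\sigma L = 0$ by $p(z)>0$ yields the pointwise relation $-x\,\mathrm{sign}(\sigma'(x)) + \lambda_1+\lambda_2 x+\lambda_3\sigma(x)=0$ valid exactly on $U$, i.e. for all $x$ with $\sigma'(x)\neq 0$; on the branch where the minimizer is increasing this is the stated linear relation \eqref{eq:var_prob_2}, the sign of the leading $x$ being absorbed into the convention for the multipliers. Crucially, no pointwise condition is forced where $\sigma'=0$, which is precisely why the claim is restricted to $\{x:\sigma'(x)\neq 0\}$.

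To make this rigorous I would replace the formal Euler--Lagrange derivation by a first-variation argument: perturb an optimal $\sigma$ by $\epsilon\eta$ with $\eta$ smooth and compactly supported inside a component of $U$ (so that $\sigma'+\epsilon\eta'$ keeps a fixed sign for small $\epsilon$ and $|\cdot|$ stays smooth along the variation), enforce the three constraints to first order against three fixed reference directions (which is what determines $\lambda_1,\lambda_2,\lambda_3$ as multipliers), and conclude that the first variation must vanish, giving \eqref{eq:var_prob_2}. The transversality terms at $\pm\infty$ vanish automatically, since $\partial_{\sigma'}L = p(z)\,\mathrm{sign}(\sigma')\to 0$ as $z\to\pm\infty$ (because $p(z)\to 0$ while $\mathrm{sign}$ is bounded), so --- unlike in the $L_2$ case --- no extra decay hypothesis on $\sigma'$ is needed. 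The remaining care-points are checking that variations supported in $U$ suffice to fix all three multipliers (a non-degeneracy of the constraint gradients), and verifying that the jumps of $\mathrm{sign}(\sigma')$ across the boundary of $U$, being confined to a set not contained in $U$, contribute no additional pointwise constraint.
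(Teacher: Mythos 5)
Your proposal is correct and follows essentially the same route as the paper: form the Lagrangian with density $p(z)\bigl(|\sigma'|+\lambda_1(\mu_0-\sigma)+\lambda_2(\mu_1-z\sigma)+\tfrac{\lambda_3}{2}(\mu_2-\sigma^2)\bigr)$, restrict the variations to the set where $\sigma'\neq 0$ so that $|\cdot|$ stays smooth along the variation, and read off the Euler--Lagrange identity there, with the multipliers fixed a posteriori by the three moment constraints. If anything, your explicit computation $\tfrac{\rm d}{{\rm d}z}\,\partial_{\sigma'}L=-z\,p(z)\,\mathrm{sign}(\sigma')$ and your remark that $\mathrm{sign}(\sigma')$ is absorbed into the multiplier convention are more careful than the paper's own wording of this step.
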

\begin{proof}
[Proof of Lemma \ref{theorem_10_lemma_1}]
A Lagragian for \eqref{eq:var_prob_2} is 
\begin{equation}
\E(|\sigma'(Z)| + \lambda_1 (\mu_0 - \sigma(Z)) + \lambda_2 ( \mu_1 - Z\sigma(Z) ) + \lambda_3 ((1/2)(\mu_2 - \sigma(Z))^2 )),
\end{equation}
which can be written as 
$
    \int^{\infty}_{-\infty} L(z,\sigma,\sigma') {\rm d}z 
$
where, if we define $p(z) = \frac{1}{\sqrt{2\pi}} e^{\frac{-z^2}{2}}$, the Lagrangian density $L$ is %
\begin{align}
    &L(z,\sigma,\sigma') =  p(z) (|\sigma'| + \lambda_1 (\mu_0 - \sigma ) + \lambda_2 (\mu_1 - z\sigma  ) + \lambda_3 (1/2)(\mu_2 - \sigma^2)).
\end{align}

Any variation $\sigma(z) + \epsilon \eta(z)$ of an optimal $\sigma(z)$ must yield $\EX(|\sigma'(z) + \epsilon \eta'(z)|) \geq \EX(|\sigma'(z)|)$. In particular, this must be the case for any variation such that $\eta(z) = 0$ whenever $\sigma'(z) = 0$. If we focus on these variations, from $\EX(|\sigma'(z) + \epsilon \eta'(z)|) \geq \EX(|\sigma'(z)|)$ the Euler-Lagrange equation can be derived despite $|\cdot|$ not being differentiable at $0$. 
To be specific, it must hold that 
\begin{equation}\label{eq:EU_L1}
    \frac{\rm d}{{\rm d} z}\frac{\partial L}{\partial \sigma'} - \frac{\partial L}{\partial \sigma} =0 
    \; \forall z: \sigma'(z) \neq 0.
\end{equation}
Since there are no fixed boundary conditions on our integration domain $(-\infty,\infty)$, it also needs to hold that 
$\lim_{z \to \pm \infty} \frac{\partial L}{\partial \sigma'} = \lim_{z \to \pm \infty} p(z) \sigma'(z) = 0$, which we choose not to list in our necessary conditions. %

Since $\frac{\rm d}{{\rm d} z}\frac{\partial L}{\partial \sigma'} = 0$ if $\sigma'(z) \neq 0$, equation \eqref{eq:EU_L1} implies \eqref{eq:var_prob_2}. 
First order optimality conditions imply that 
the Lagrange multipliers must be choose such that $\EX(\sigma(Z))=\mu_0, 
\EX(\sigma(Z)Z)=\mu_{1}, 
\EX((\sigma(Z))^2) =\mu_{2}$.
\end{proof}

\begin{comment}
\begin{lemma}
If $\lambda_3 = 0$, the solutions of \eqref{eq:var_prob_2} are of the form $\sigma(x) = c$ for some $c$.
\end{lemma}
%
\begin{proof}
Since $\sigma$ is a one-dimensional function, the requirement of existence of weak derivative implies that there exists $v$ absolute continuous
that agrees with $\sigma$ almost everywhere. We will work with these absolute continuous representations of $\sigma$. Other  solutions can differ from the absolute continuous solutions only up to a set of measure zero with respect to the Gaussian measure. 

From \eqref{eq:var_prob_2} with $\lambda_3 = 0$ we know that wherever $\sigma'(x) \neq 0$, we have that $a$
$\sigma(x) = ax + b$ for the same fixed $a$ and $b$. Hence, since $\sigma$
\end{proof}
\end{comment}

\begin{lemma} \label{th:form_of_sol_L1_lambda_not_3}
If $\lambda_3\neq 0$, the solutions of \eqref{eq:var_prob_2} are of the form
\begin{align}\label{eq:form_of_sol_L1}
     &\sigma(x) = -\frac{\lambda_1}{\lambda_3} - \frac{1+\lambda_2}{\lambda_3} \min\{\max\{x,b\},c\},
\end{align}
for some constants $b,c$ where $b < c$ if $\frac{1+\lambda_2}{\lambda_3} \leq 0$ and $b>c$ otherwise.

\end{lemma}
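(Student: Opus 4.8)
The plan is to feed the algebraic necessary condition of Lemma~\ref{theorem_10_lemma_1} into the regularity of $\sigma$ guaranteed by the weak-differentiability hypothesis. Because $\lambda_3\neq 0$, I can solve \eqref{eq:var_prob_2} pointwise for $\sigma$: at every $x$ with $\sigma'(x)\neq 0$ one has $\sigma(x)=\ell(x)$, where $\ell(x)\triangleq -\lambda_1/\lambda_3-\tfrac{1+\lambda_2}{\lambda_3}\,x$ is a fixed affine function with slope $m\triangleq -\tfrac{1+\lambda_2}{\lambda_3}$ and intercept $-\lambda_1/\lambda_3$. As noted after \eqref{eq:var_problem}, a weakly differentiable one-dimensional $\sigma$ admits an absolutely continuous (hence continuous) representative, and I work with it throughout; thus the graph of $\sigma$ consists of ``moving'' pieces, on which $\sigma\equiv\ell$, glued to ``flat'' pieces on which $\sigma$ is locally constant.

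The heart of the argument is to show that the moving part is a single interval. Assume first $m\neq 0$. Suppose $\sigma$ coincides with $\ell$ on some sub-interval and then enters a flat stretch on which it takes a constant value $v$. Since $\ell$ is strictly monotone, $\ell(x)=v$ has a unique solution, so $\sigma$ can never meet $\ell$ again at a later abscissa without violating continuity at the junction (the flat value $v$ would have to equal $\ell$ at two distinct points). Hence, once $\sigma$ leaves $\ell$ it cannot return, and the set on which $\sigma$ actually moves is an interval $[b,c]$ (possibly a single point, or with $b=-\infty$ and/or $c=+\infty$). On $[b,c]$ we have $\sigma=\ell$; for $x<b$ and $x>c$ continuity forces $\sigma$ to equal the constants $\ell(b)$ and $\ell(c)$ respectively. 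This is precisely $\sigma(x)=\ell\!\big(\min\{\max\{x,b\},c\}\big)=-\tfrac{\lambda_1}{\lambda_3}-\tfrac{1+\lambda_2}{\lambda_3}\min\{\max\{x,b\},c\}$, i.e.\ \eqref{eq:form_of_sol_L1}. The degenerate case $m=0$ (equivalently $1+\lambda_2=0$) makes $\ell$ constant, so wherever $\sigma$ moves its value is pinned to a single constant; continuity then forces $\sigma$ to be globally constant, the limiting clamp $b=c$.

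It remains to fix the orientation convention. The monotonicity of the affine piece is the sign of $m=-\tfrac{1+\lambda_2}{\lambda_3}$, and the two regimes of the statement ($b<c$ when $\tfrac{1+\lambda_2}{\lambda_3}\le 0$ and $b>c$ otherwise) simply record which breakpoint carries the lower and which the upper saturation level. Substituting $x=b$ and $x=c$ into \eqref{eq:form_of_sol_L1} and matching against the two flat values $\ell(b)$ and $\ell(c)$ confirms the claimed form in each case.

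The step I expect to be the main obstacle is making the ``moving set is a single interval'' claim fully rigorous in the weak-derivative setting: \eqref{eq:var_prob_2} is an identity only at points where $\sigma'$ (defined a.e.) is nonzero, so I must argue on the absolutely continuous representative that the open set $\{\sigma'\neq 0\}$ cannot split into two components separated by a flat stretch, while controlling the null sets on which $\sigma'$ is undefined and the boundary behaviour at the junctions. Handling the semi-infinite cases $b=-\infty$ or $c=+\infty$ (which include the ReLU limit noted after Theorem~\ref{th:optimal_AF_for_L1_norm}) requires the same continuity bookkeeping but with the constant ray on only one side.
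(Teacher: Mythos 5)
Your proposal is correct and follows essentially the same route as the paper's proof: pass to the absolutely continuous representative, observe that \eqref{eq:var_prob_2} pins $\sigma$ to a single fixed affine line wherever $\sigma'\neq 0$, and use continuity to rule out a flat stretch interrupting the sloped part (since a strictly monotone affine function cannot take the same value at two distinct points), leaving only the saturated-linear (clamp) form. Your treatment is in fact slightly more careful than the paper's, which delegates the interval argument to a figure, but the underlying idea is identical.
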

\begin{proof}[Proof of Lemma \ref{th:form_of_sol_L1_lambda_not_3}]
Since $\sigma$ is a one-dimensional function, the requirement of existence of weak derivative implies that there exists $v$ absolute continuous
that agrees with $\sigma$ almost everywhere. We will work with these absolute continuous representations of $\sigma$. Other  solutions can differ from the absolute continuous solutions only up to a set of measure zero with respect to the Gaussian measure. 

From \eqref{eq:var_prob_2} we know that wherever $\sigma'(x) \neq 0$, we have that $\sigma(x) = ax + b$ for the same fixed $a$ and $b$. Hence, since $\sigma$ is continuous, $\sigma$ must be an alternation of flat portions and portions with the same slope $a$. Because of continuity, we cannot have a flat portion interrupt portion of slope $a$ (unless $a = 0$), as illustrated in Figure \ref{fig:sol_L1}-right, and $\sigma$ must be as in the other two cases in Figure \ref{fig:sol_L1}. These have a form as in \eqref{eq:form_of_sol_L1}.
\begin{figure}[h!]
  \centering
  \includegraphics[width=0.7\linewidth]{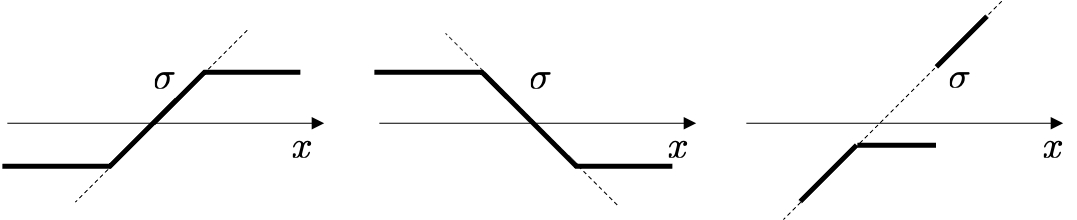}
  \caption{\small The first two functions are the only two possible types of continuous functions that satisfy \eqref{eq:var_prob_2}. The right-most function also satisfies \eqref{eq:var_prob_2} but is not continuous.}
  \label{fig:sol_L1}
\end{figure}
\end{proof}

\begin{lemma}\label{th:objective_L1_problem}
Let $\sigma$ be of the form \eqref{eq:form_of_sol_L1}. Then,
\begin{align}
   &\EX( |\sigma'(Z)|) = \frac{1}{2} \left|\frac{1+\lambda_2}{\lambda_3}\right| {\mathbb{P}(Z\in[b,c])},\\
    \mu_1 \triangleq &\EX(Z \sigma(Z)) = -\frac{1}{2} \frac{1+\lambda_2}{\lambda_3} {\mathbb{P}(Z\in[b,c])},
\end{align}
where $[b,c]$ should be interpreted as $[c,b]$ if $c<b$. In particular, $\EX( |\sigma'(Z)|) = |\mu_1|$.
\end{lemma}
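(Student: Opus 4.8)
The plan is to use the very explicit structure of the candidate minimizers supplied by Lemma \ref{th:form_of_sol_L1_lambda_not_3}. Abbreviating the slope and offset by $m \triangleq -\frac{1+\lambda_2}{\lambda_3}$ and $a_0 \triangleq -\frac{\lambda_1}{\lambda_3}$, the function in \eqref{eq:form_of_sol_L1} is the continuous, piecewise-affine ``saturated-linear'' map that coincides with the affine function $a_0 + m\,x$ on the interval whose endpoints are $b$ and $c$ and is constant on either side of it (this is exactly what continuity forces on the admissible solutions of \eqref{eq:var_prob_2}, cf. Figure \ref{fig:sol_L1}); the sign condition on $\frac{1+\lambda_2}{\lambda_3}$ merely records whether $b<c$ or $b>c$, and I will read $[b,c]$ as $[c,b]$ when $c<b$ throughout, as in the statement. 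The first thing I would record is that, in the weak sense, $\sigma'(x) = m\,\mathbf{1}\{x \in (b\wedge c,\,b\vee c)\}$, with no atoms at the two kinks, since $\sigma$ is continuous and piecewise affine.

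From this the first expectation is immediate: as $|\sigma'(Z)| = |m|$ precisely on the interval between $b$ and $c$ and vanishes elsewhere, $\EX(|\sigma'(Z)|)$ equals $|m|$ times the Gaussian mass $\mathbb{P}(Z\in[b,c])$. For $\mu_1 = \EX(Z\sigma(Z))$ I would invoke Gaussian integration by parts, the same device used to obtain \eqref{eq:EL_proof_rel_3}: because $\sigma$ is absolutely continuous and bounded, the boundary terms $\sigma(z)\,p(z)$ vanish as $z\to\pm\infty$, so using $p'(z) = -z\,p(z)$ one gets $\EX(Z\sigma(Z)) = \EX(\sigma'(Z))$, and substituting the indicator form of $\sigma'$ reduces $\mu_1$ to the same slope times the same interval mass $\mathbb{P}(Z\in[b,c])$. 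Alternatively, to avoid the boundary-term discussion, one can split $\reals$ into the three regions $\{x \le b\wedge c\}$, $\{b\wedge c < x < b\vee c\}$ and $\{x \ge b\vee c\}$ and compute the three Gaussian integrals directly with $\int z\,p = -p$ and $\int z^2 p = \Phi - z\,p$; the flat-tail contributions $\pm(\text{endpoint})\cdot p$ cancel against the endpoint terms of the middle integral, again leaving exactly (slope)$\times\mathbb{P}(Z\in[b,c])$.

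Both expectations therefore carry the common factor $\mathbb{P}(Z\in[b,c])$ and the common slope $m$, differing only by the absolute value in the first, so $\EX(|\sigma'(Z)|) = \bigl|\,\EX(\sigma'(Z))\,\bigr| = |\mu_1|$, which is the asserted identity. I expect the only delicate points to be bookkeeping rather than conceptual: one must keep the two orderings of $b$ and $c$ (equivalently the sign of $m$) consistent so the absolute value lands correctly, and one must justify that the weak derivative of the clamp has no singular part and that the integration-by-parts boundary terms vanish, both of which follow from $\sigma$ being bounded, continuous and piecewise affine. I would also note explicitly the degenerate sub-case $b=c$ (empty middle interval, $\mu_1=0$) and the convention by which $\min\{\max\{x,b\},c\}$ in \eqref{eq:form_of_sol_L1} is to be read as the intended saturated-linear shape when $b>c$, so that the notation remains unambiguous.
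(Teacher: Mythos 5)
Your proposal is correct and takes essentially the same route as the paper, whose proof is just the remark that one plugs the explicit form of $\sigma$ and $\sigma'$ into the expectations and evaluates a few Gaussian integrals; your use of Stein's identity $\EX(Z\sigma(Z))=\EX(\sigma'(Z))$ is simply a clean way of organizing that calculation. One remark: your computation produces $\EX(|\sigma'(Z)|)=\left|\tfrac{1+\lambda_2}{\lambda_3}\right|\mathbb{P}(Z\in[b,c])$ and $\mu_1=-\tfrac{1+\lambda_2}{\lambda_3}\mathbb{P}(Z\in[b,c])$, i.e.\ without the factor $\tfrac{1}{2}$ appearing in the lemma statement; that factor appears to be a typo in the paper (it is inconsistent with the later identity $\mu_1=a\,\mathrm{erf}(s/\sqrt{2})$ in the proof of Theorem \ref{th:optimal_AF_for_L1_norm}) and is in any case immaterial for the conclusion $\EX(|\sigma'(Z)|)=|\mu_1|$, which you establish correctly.
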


\begin{proof}[Proof of Lemma \ref{th:objective_L1_problem}]
From Lemma \ref{th:form_of_sol_L1_lambda_not_3}, we have explicit formulas for $\sigma$ and $\sigma'$.
The proof boils down to a direct calculation of the expected values, which themselves boil down to computing a few Gaussian integrals.
\end{proof}

%%%%%%%%%%%%%%%%%%

We are now ready to prove Theorem \ref{th:optimal_AF_for_L1_norm}, which we restate below for convenience.

\begin{theorem*}[\ref{th:optimal_AF_for_L1_norm}]
One minimizer of \eqref{eq:norm_def_1} is
\begin{align}
    \sigma(x) = \mu_0 + a \min\{\max\{x,-s\},s\}
\end{align}
where $a = \frac{\mu_1}{\text{erf}(s/\sqrt{2})}$, $\text{erf}$ is the Gauss error function, and
$s\in\mathbb{R}$ is the unique solution to the equation
\begin{equation}
 \zeta^2 \triangleq \frac{\mu^2_1}{\mu^2_\star} = \frac{e^{\frac{s^2}{2}} \text{erf}\left(\frac{s}{\sqrt{2}}\right)^2}{e^{\frac{s^2}{2}}
   \left(1-\text{erf}\left(\frac{s}{\sqrt{2}}\right)\right)
   \left(s^2+\text{erf}\left(\frac{s}{\sqrt{2}}\right)\right)-\sqrt{\frac{2}{\pi }} s},
\end{equation}
if $\mu_\star \neq 0$, and $s = \infty$ if $\mu_\star = 0$.
\end{theorem*}
\begin{proof}[Proof of Theorem \ref{th:optimal_AF_for_L1_norm}]
From Lemma \ref{th:form_of_sol_L1_lambda_not_3}, we know that if $\lambda_3 \neq 0$, then any minimizer must have the form \eqref{eq:form_of_sol_L1}. From Lemma \ref{th:objective_L1_problem} we know that all of the functions of this form have the same objective. 
Hence, if $\lambda_3 \neq 0$, all of the functions of the form \eqref{eq:form_of_sol_L1} that satisfy constraints \eqref{eq:var_problem} are a global minimizer .

We set $\lambda_3 = 1 \neq 0$. To satisfy the three constraints $\eqref{eq:var_problem}$ we have $4$ remaining values to play with, namely $\lambda_1,\lambda_2,b,c$. Hence, we set $b=-c$.
With a reparameterization, this leads to $\sigma$ having the form $\sigma(x) = b + a \min\{\max\{x,-s\},s\}$, where $b$ has a new meaning. That is, $\sigma$ is flat outside of the interval $[-s/|a|,s/|a|]$, $s\geq 0$, and inside of this interval it has slope $a$.
The goal is to find $a,b,s$ from the constraints $\eqref{eq:var_problem}$.

From $\sigma(x) = b + a \min\{\max\{x,-s\},s\}$ a direct computations leads to
\begin{align}
\mu_0 &= b,\\
\mu_1 &= a \text{erf}\left(\frac{s }{\sqrt{2}}\right),\\
\mu_2 &=a^2 \left(s^2-1\right) \text{erfc}\left(\frac{s}{\sqrt{2}}\right)+a^2
   \left(1-\sqrt{\frac{2}{\pi }} s e^{-\frac{s^2}{2}}\right)+b^2,
\end{align}
where $\text{erfc} = 1 - \text{erf}$. We can use the first two equations to write $b$ and $a$ as a function of $\mu_0,\mu_1,s$. Substituting $a$ and $b$ with these functions in the third equation, and simplifying, leads to
\begin{equation}
    \mu_2 = \mu_0^2-\frac{\mu_1^2 \left(-\left(s^2-1\right)
   \text{erfc}\left(\frac{s}{\sqrt{2}}\right)+\sqrt{\frac{2}{\pi }} e^{-\frac{s^2}{2}}
   s-1\right)}{\text{erf}\left(\frac{s}{\sqrt{2}}\right)^2}.
\end{equation}
Recalling that $\mu_2 = \mu^2_\star  + \mu^2_1 + \mu^2_0$, replacing this definition into the above equation, and simplifying leads to
\begin{equation}\label{th:proof_l1_case_func_right_side}
\frac{\mu^2_1}{\mu^2_\star}\triangleq \zeta^2 = -\frac{e^{\frac{s^2}{2}} \text{erf}\left(\frac{s}{\sqrt{2}}\right)^2}{e^{\frac{s^2}{2}}
   \left(\text{erf}\left(\frac{s}{\sqrt{2}}\right)-1\right)
   \left(s^2+\text{erf}\left(\frac{s}{\sqrt{2}}\right)\right)+\sqrt{\frac{2}{\pi }} s}.
\end{equation}
One can show that the function on the right hand side \eqref{th:proof_l1_case_func_right_side} is monotonic increasing in $s\in [-\infty,\infty]$
with range $[0,\infty]$, which implies that there is only one $s$ that solves \eqref{th:proof_l1_case_func_right_side}.

\end{proof}

%
%%%%%%%%%%%%%%%%%%%%%%%%%%%%%%%%%%%%%%%%%%%%
%
\subsection{Proof of Theorem \ref{th:optimal_parameters_for_special_case_1}}

This proof involves heavy algebraic computations. To aid the reader, this paper is accompanied by a Mathemetica file that symbolically checks the equations both in the theorem statement as well as in the proof below. This file is in the supplementary zip file, as well as in the following Github link \url{https://github.com/Jeffwang87/RFR_AF}. It is called  \texttt{RunMeToCheckProofOfTheorem9.nb}.

\begin{proof}[Proof of Theorem \ref{th:optimal_parameters_for_special_case_1}]
Theorem \ref{th:optimal_parameters_for_special_case_1} amounts a statement about the solutions of the optimization problem \eqref{eq:problem_to_solve_when_trying_to_find_optimal_mus} for regime $R_1$.

Its proof amounts to studying the local minima of the objective via its first and second derivatives, both on the inside and on the boundary of the variables' domain.

We will prove the theorem for $\psi_1 < \psi_2$ and $\psi_1 > \psi_2$ separately. For $\psi_1=\psi_2$ the objective is not defined.

In what follows, we let $L = (1-\alpha) \mathcal{E}^\infty_{R_1} + \alpha \mathcal{S}^\infty_{R_1}$.
We will use the fact that $L$ is a one-dimensional function of $\zeta^2 \in [0,+\infty]$, as can be seen from \eqref{eq:E_formula_for_special_case_1} and \eqref{eq:S_formula_for_special_case_1}.
We will use this and the fact that
\eqref{eq:chidef} defines a monotonic function between $\chi$ and $\zeta^2$, to express $L$ as a function of $\chi$ and study the solutions of the optimization problem
\eqref{eq:problem_to_solve_when_trying_to_find_optimal_mus}

in the variable $\chi$. Notice that \eqref{eq:chidef} can be solved for $\zeta^2$ as $\zeta^2 = \frac{\mu^2_1}{\mu^2_\star} = \frac{\chi + \min\{\psi_1,\psi_2\}}{\chi(-1 + \chi +\min\{\psi_1,\psi_2\})}$ and, calling $\chi$ by $x$, this can be arranged to get \eqref{eq:theor_spe_cas_1}, which connects optimal values of $\chi$ with optimal values of $\mu_0,\mu_1,\mu_2$. 
Henceforth, we denote $\chi$ by $x$.
We note that since $\zeta^2 \in [0,+\infty]$, from \eqref{eq:chidef} it follows that $x  \in [x_L,x_R] \triangleq [-\min\{\psi_1,\psi_2\},\min\{0,1 - \min\{\psi_1,\psi_2\}\}]$. Hence, we only need to study the function $L(x)$ in this interval.

{\bf Case 1, $\psi_1 < \psi_2$:} 

For the sake of simplicity, and for the most part, we will assume that $\psi_1 \neq 1$ and omit the argument for $\psi_1 = 1$. The argument when $\psi_1 = 1$ is almost identical to the argument when $\psi_1 \neq 1$ if we work with the extended reals $[-\infty,\infty]$. Furthermore, most conclusions for  $\psi_1 = 1$ can be obtained by taking the limit of $\psi_1 \to 1$ with $\psi_1 \neq 1$. The only situation where this is not the case is that the polynomial $p(x)$, referenced right after Table \ref{theorem_1_table}, has an expression when $\psi_1 = 1$ that cannot be obtained as the limit of its expression for $1\neq \psi_1$. We will also assume that $\alpha >0$. Recall that we are already assuming that  $\alpha <1$ because when $\alpha =1$ our problem is trivial. If $\alpha = 0$ one can check that $L(x)$ is a decreasing line, hence the minimum is at $x=x_R$, 
and this solution can be obtained from the first row of Table \ref{theorem_1_table}. This solution for $\alpha =0$ can also be obtained by studying $\alpha >0$ and taking $\alpha\to 0$. Below thus assume that $ 0< \alpha < 1$.

We start by studying the second derivative of $L$. A direct computation yields
\begin{align}\label{eq:proof_R1_def_C_tilde_O}
\frac{{\rm d}^2 L}{ {\rm d} x^2} = \tilde{L}(x) + C, \text{ where } C = \frac{2 F_1^2 \alpha}{\psi_1 - \psi_2}
\end{align}
and $\tilde{L}(x) = -\frac{2 \alpha  F_1^2 \psi_1 \left(-3 x^2 (\psi_1-1)-2 x^3+(\psi_1-1)^2 \psi_1\right)}{\left((x+\psi_1)^2-\psi_1\right)^3}$.

A tedious calculation (omitted) shows that $\frac{{\rm d}^2 \tilde{L}}{ {\rm d} x^2} \geq 0$ for $x \in [x_L,x_R]$ (i.e. it is convex), that $\tilde{L}(x_L)< \tilde{L}(x_R)$, and that $\frac{{\rm d} \tilde{L}}{ {\rm d} x}(x=x_L) < 0$.
From this it follows that, depending on the value of $\psi_2$, the concavity $\frac{{\rm d}^2 L}{ {\rm d} x^2}$ is positive or negative, as illustrated in the following figure:

\begin{figure}[h!]
  \centering
  \includegraphics[width=1\linewidth]{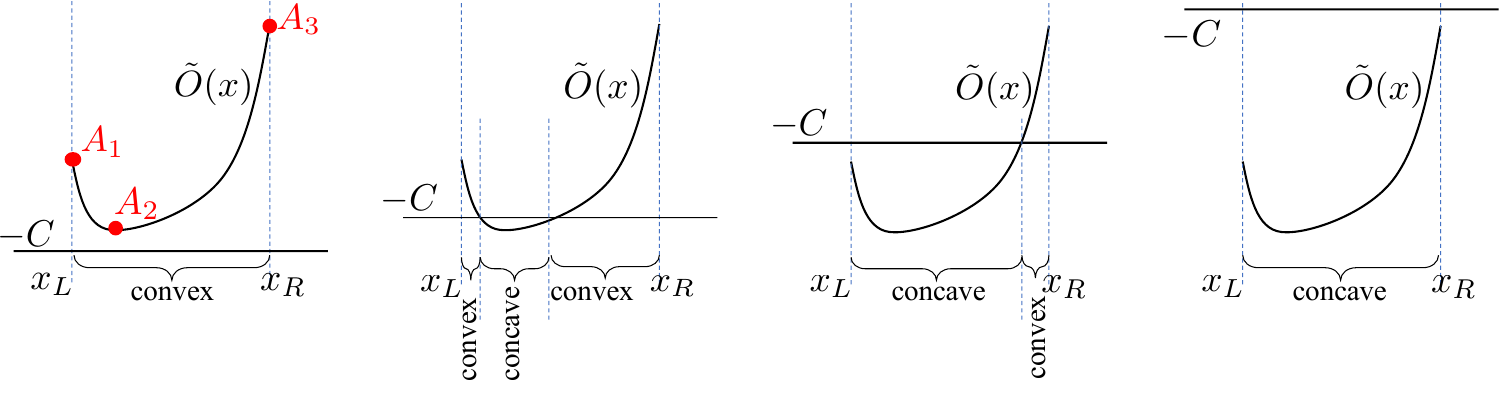}
  \caption{\small Depending on the value of $\psi_2$, the function $L$ is convex, convex-concave-convex, concave-convex, or concave respectively. The points A, B, and C will be referenced later in the proof. It is possible to compute closed-form expressions for the $y$-coordinate of these points. }
  \label{fig:proof_theorem_special_case_1}
\end{figure}

To be specific, starting from large $\psi_2$, i.e. small $-C$, and decreasing its value, i.e. increasing $-C$, we obtain the following four scenarios. While $\psi_2$ is large and $-C$ is bellow $A_2$, the function $L$ is convex. After $\psi_2$ reaches a value $\beta_1$ at which $-C$ touches $A_2$, the function $L$ is  convex for small $x$, then concave, and then convex for large $x$. As $\psi_2$ keeps decreasing, and after it reaches a value $\beta_2$ at which $-C$ touches $A_1$, the function $L$ is concave for small $x$, and then convex for large $x$. Finally, after $\psi_2$ reaches a value $\beta_3$ at which $-C$ touches  $A_3$
, the function $L$ is concave. %
Note that by definition $0 \leq \beta_3 < \beta_2 < \beta_1$.

It is possible to compute closed-form expressions for the $y$-coordinate of the points $A_1,A_2,A_3$, which we denote by $A_1,A_2,$ and $A_3$. Namely, $A_1 = F^2_1 \alpha (2+\frac{2}{\psi_1}), A_2 = F_1^2\alpha\left(\frac{\sqrt{1-\psi _1}+1}{\psi _1}-\frac{1}{2}\right)$, if $\psi_1 \leq 1$, and $A_2 = F_1^2\alpha\left( \sqrt{\frac{\psi _1-1}{\psi _1}}-\frac{1}{2 \psi _1}+1\right)$, if $\psi_1 > 1$, and $A_3 =F_1^2 \alpha \max\left\{\frac{2}{{\psi_1}-{\psi_1}^2},\frac{2 {\psi_1}}{{\psi_1}-1}\right\}$.
Using the closed-form expression for $-C$, see \eqref{eq:proof_R1_def_C_tilde_O}, we find closed-form expressions for $\beta_1, \beta_2$, and $\beta_3$
as $\beta_1=\psi_2:A_2(\psi_2)+C(\psi_2)=0$, $\beta_2=\psi_2:A_1(\psi_2)+C(\psi_2)=0$, and $\beta_3=\psi_2:A_3(\psi_2)+C(\psi_2)=0$. 
By definition of $\beta_1,\beta_2$, and $\beta_3$, these equations have a unique solution when $\psi_2>\psi_1 > 0$ and their explicit expressions are given in \eqref{eq:beta_1_def}, \eqref{eq:beta_2_def} and \eqref{eq:beta_3_def} respectively.

Now that we have characterized the curvature of $L(x)$, we are ready to locate its global minimum. To do so, we will use the curvature of $L(x)$ together with the first-order optimally condition $\frac{{\rm d} L}{{\rm d} x} = 0$, and the following three extra pieces of information: the sign of the derivative of $L(x)$ at $x = x_L$; the sign of the derivative of $L(x)$ at $x = x_R$; and the sign of $L(x_L) - L(x_R)$.

A direct computation yields
\begin{align}
    \frac{{\rm d} L}{{\rm d} x} &= \frac{p_0 + p_1 x +  p_2 x^2  + p_3 x^3 + p_4 x^4 +  p_5 x^5 }{(\psi_1 - (x + \psi_1)^2)^2 (\psi_1 - \psi_2)}, \text{ if } \psi_1 \neq 1, \text{ and } \label{eq:der_O_special_case_1_psi1_small}\\
    \frac{{\rm d} L}{{\rm d} x} &= \frac{q_0 + q_1 x +  q_2 x^2  + q_3 x^3}{(2 + x)^2 (-1 + \psi_2)}, \text{ if } \psi_1 = 1. \label{eq:der_O_special_case_1_psi1_big}
\end{align}
where the coefficients $p_0,\dots,p_5$ and $q_0\dots,q_3$ are, apart from a multiplying constant, given in \eqref{eq:p_0}- \eqref{eq:q_3}.

The roots of the first denominator, i.e. $-\sqrt{\psi_1} - \psi_1$  and $\sqrt{\psi_1} - \psi_1$, are not a root of the first numerator when $\psi_1 \neq 1$, and the roots of the second denominator, i.e. $-2$, are not a root of the second numerator when $\psi_2 > \psi_1 = 1$. Therefore, the first-order optimality conditions are $p(x) = 0$.
%, where $p(x)$ is defined in \eqref{eq:p_gen}.

Not all solutions of $p(x) = 0$  minimize $L(x)$. To locate the minimizer, we use the sign of the derivative of $L(x)$ at $x = x_L$; the sign of the derivative of $L(x)$ at $x = x_R$; and the sign of $L(x_L) - L(x_R)$. These signs can be determined using Lemma \ref{th:lemma_signs_and_alpha_for_theorem_special_case_1}. Lemma \ref{th:lemma_signs_and_alpha_for_theorem_special_case_1} also proves Remark \ref{remark:relationship_between_constants_when_psi1_smaller_psi2}.

\begin{lemma}\label{th:lemma_signs_and_alpha_for_theorem_special_case_1}
If $\psi_1 <\psi_2$, the following relationships hold,
\begin{align}
    &\frac{{\rm d} L}{{\rm d} x}\bigg\rvert_{x=x_L} = -\frac{{F_1}^2 (\alpha +(\alpha -1) {\psi_2})+\alpha  \left({F_\star}^2+{\tau}^2\right)}{{\psi_1}-{\psi_2}}, \label{eq:l_x_L}\\
   & \frac{{\rm d} L}{{\rm d} x}\bigg\rvert_{x=x_R} = \frac{{F_1}^2 {\psi_2}-\alpha  \left(F_1^2 (-4 \psi_1+3 \psi_2+1)+F_\star^2+\tau^2\right)}{\psi_1-\psi_2} \text{ if } \psi_1 \leq 1 ,\label{eq:l_x_R_1}\\
    &\frac{{\rm d} L}{{\rm d} x}\bigg\rvert_{x=x_R} = \frac{{F_1}^2 (\alpha +2 \alpha  {\psi_1}-3 \alpha  {\psi_2}+{\psi_2})-\alpha  \left({F_\star}^2+{\tau}^2\right)}{{\psi_1}-{\psi_2}} \text{ if } \psi_1 > 1 ,\label{eq:l_x_R_2}\\
    &L(x_L) - L(x_R) = \frac{\psi_1 \left(F_1^2 (\alpha -2 \alpha  \psi_1+(2 \alpha -1) \psi_2)+\alpha  \left(F_\star^2+\tau^2\right)\right)}{\psi_1-\psi_2} \text{ if } \psi_1 \leq 1, \text{ and }\label{eq:l_o_1}\\
     &L(x_L) - L(x_R) = \frac{\alpha  \left({F_\star}^2+{\tau}^2\right)-{F_1}^2 (\alpha  {\psi_1}-2 \alpha  {\psi_2}+{\psi_2})}{{\psi_1}-{\psi_2}} \text{ if } \psi_1 > 1. \label{eq:l_o_2}
\end{align}

Also, 
\begin{align}
    &\frac{{\rm d} L}{{\rm d} x}\bigg\rvert_{x=x_L} < 0 \text{ if } \alpha < \alpha_L \text{ and } \frac{{\rm d} L}{{\rm d} x}\bigg\rvert_{x=x_L} \geq 0 \text{ if } \alpha \geq \alpha_L,\label{eq:d_1}\\
    &\frac{{\rm d} L}{{\rm d} x}\bigg\rvert_{x=x_R} < 0 \text{ if } \alpha < \alpha_R \text{ and } \frac{{\rm d} L}{{\rm d} x}\bigg\rvert_{x=x_R} \geq 0 \text{ if } \alpha \geq \alpha_R, \text{ and }\label{eq:d_2}\\
    &L(x_L) - L(x_R) > 0 \text{ if } \alpha < \alpha_C \text{ and } L(x_L) - L(x_R) \leq 0 \text{ if } \alpha \geq \alpha_C, \label{eq:d_3}
\end{align}
where $\alpha_L,\alpha_R$ and $\alpha_C$ are given in \eqref{eq:alpha_L_def}, \eqref{eq:alpha_R_def}, and \eqref{eq:alpha_C_def} respectively.

Furthermore, the following are true and help determine from which row in Table \ref{theorem_1_table} to read $x$. 
If $\psi_2 < \min\{2\psi_1 , \psi_1 + 1\}$, then $\alpha_L < \alpha_C < \alpha_R$. If $\psi_2 > \min\{2\psi_1 , \psi_1 + 1\}$, then $\alpha_R < \alpha_C < \alpha_L$. If $\psi_2 = \min\{2\psi_1 , \psi_1 + 1\}$, then $\alpha_L = \alpha_C = \alpha_R$. In particular, it follows from this that if $\psi_2 \geq \beta_1$, then $\alpha_R \leq\alpha_C \leq \alpha_L$ and if $\psi_2 \leq \beta_3$, then $\alpha_L \leq\alpha_C \leq \alpha_R$.

Finally, $\alpha_L,\alpha_R,\alpha_C \in[0,1]$.

\label{lemma: special_1}
\end{lemma}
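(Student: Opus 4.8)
The plan is to reduce everything to the single-variable function $O(x)=(1-\alpha)\mathcal{E}^\infty_{R_1}+\alpha\mathcal{S}^\infty_{R_1}$ obtained by substituting $\chi=x$ into the closed forms of Theorem \ref{th:error_special_case_1} and Theorem \ref{th:sensitivity_special_case_1}, and then to evaluate $O$ and $O'$ at the two endpoints $x_L=-\psi_1$ and $x_R=\min\{0,1-\psi_1\}$ (recall $\psi=\psi_1$ here since $\psi_1<\psi_2$). First I would establish \eqref{eq:l_x_L}--\eqref{eq:l_o_2} by direct substitution: plugging $x=-\psi_1$ and $x=\min\{0,1-\psi_1\}$ into the rational form \eqref{eq:der_O_special_case_1_psi1_small} of $O'$ and into $O$ itself, then simplifying. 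The split into $\psi_1\le 1$ and $\psi_1>1$ is forced because $x_R=\min\{0,1-\psi_1\}$ changes expression at $\psi_1=1$; I would also check that the denominator $(\psi_1-(x+\psi_1)^2)^2$ is nonzero at each endpoint (it equals $\psi_1^2\neq 0$ at $x_L$, equals $\psi_1^2(1-\psi_1)^2\neq 0$ at $x_R=0$ when $\psi_1<1$, and equals $(\psi_1-1)^2\neq0$ at $x_R=1-\psi_1$ when $\psi_1>1$), so the evaluations are legitimate away from $\psi_1=1$, the latter being recovered in the limit.

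Next I would derive the sign statements \eqref{eq:d_1}--\eqref{eq:d_3}. Each of the three quantities in \eqref{eq:l_x_L}--\eqref{eq:l_o_2} has the form (an affine function of $\alpha$) divided by $\psi_1-\psi_2$, and since $\psi_1<\psi_2$ the denominator is strictly negative; hence its sign is the negated sign of the affine numerator. The coefficient of $\alpha$ in each numerator is a strictly positive combination of $F_1^2,F_\star^2,\tau^2$, so each numerator is strictly increasing in $\alpha$ and crosses zero exactly once. Solving for that crossing and dividing through by $F_1^2$ to expose $\rho^{-1}=(\tau^2+F_\star^2)/F_1^2$ produces precisely the thresholds $\alpha_L,\alpha_R,\alpha_C$ of \eqref{eq:alpha_L_def}, \eqref{eq:alpha_R_def}, \eqref{eq:alpha_C_def}; the positivity of the $\alpha$-slope fixes the directions ``$<0$ iff $\alpha<\cdot$'' in \eqref{eq:d_1}--\eqref{eq:d_3}.

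The third and most delicate step is the ordering trichotomy. The key structural observation is that all three thresholds share the common numerator $\psi_2$ and have denominators of the form $D_\bullet+\rho^{-1}$; since $\psi_2>0$ and $\rho^{-1}$ is shared, the ordering of $\alpha_L,\alpha_C,\alpha_R$ is exactly the reverse ordering of $D_L,D_C,D_R$, independently of $\rho$. I would then compute $D_C-D_L$ and $D_R-D_C$ and show, splitting once more according to whether $\psi_1\le1$ or $\psi_1>1$, that both differences are proportional to the single quantity $\psi_2-\min\{2\psi_1,\psi_1+1\}$. This simultaneously yields the three cases ($\alpha_L<\alpha_C<\alpha_R$, equality, and $\alpha_R<\alpha_C<\alpha_L$) and, via the inequalities $\beta_3\le\min\{2\psi_1,\psi_1+1\}\le\beta_1$ checked from the explicit forms \eqref{eq:beta_3_def} and \eqref{eq:beta_1_def}, the refinements tied to $\beta_1$ and $\beta_3$. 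Finally, $\alpha_\bullet\in[0,1]$ reduces to verifying $0<D_\bullet+\rho^{-1}$ and $D_\bullet+\rho^{-1}\ge\psi_2$, which I would bound using $\psi_2>\psi_1>0$ and $\rho^{-1}\ge0$.

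I expect the main obstacle to be the bookkeeping: carrying the two subcases $\psi_1\le1$ and $\psi_1>1$ through every formula while keeping strict versus non-strict inequalities straight at the thresholds, and in particular verifying that the two denominator differences $D_C-D_L$ and $D_R-D_C$ collapse onto the \emph{same} sign-determining expression, so that the entire chain pivots at the single value $\psi_2=\min\{2\psi_1,\psi_1+1\}$. Once the denominators are in hand, the $[0,1]$ bounds and the $\beta$-comparisons are routine.
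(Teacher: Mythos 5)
Your plan is the same route the paper takes: its own proof of this lemma is literally ``direct substitution of $x_L,x_R$ into $O$ and its derivative, observe linearity in $\alpha$, compare the denominators, and read off the $[0,1]$ bounds,'' so your write-up is a faithful expansion of that, and your structural observation --- that all three thresholds have the form $\psi_2/(D_\bullet+\rho^{-1})$ with a shared numerator and a shared $\rho^{-1}$ shift, so their ordering is $\rho$-independent and governed by the $D_\bullet$ alone --- is exactly the right way to organize the trichotomy. One warning on the bookkeeping: when you solve \eqref{eq:l_o_1}--\eqref{eq:l_o_2} for the crossing you will obtain $\alpha_C=\psi_2/(2\psi_2-\psi_1+\max\{0,1-\psi_1\}+\rho^{-1})$, whose denominator is one \emph{less} than the printed \eqref{eq:alpha_C_def}. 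Only with this derived version do $D_C-D_L$ and $D_R-D_C$ both collapse onto $\psi_2-\min\{2\psi_1,\psi_1+1\}$ as you predict; with the printed $D_C$ they vanish at $\psi_2=2\psi_1\mp 1$ (for $\psi_1\le 1$), which would break the single-pivot claim. So trust the endpoint formulas over \eqref{eq:alpha_C_def} when you reach that step.

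The one step that would genuinely fail as described is the last one: $D_R+\rho^{-1}\ge\psi_2$ is \emph{not} implied by $\psi_2>\psi_1>0$ and $\rho^{-1}\ge 0$. It is equivalent to $2\psi_2+1-2\min\{1+\psi_1,2\psi_1\}+\rho^{-1}\ge 0$, and taking $\psi_1=0.9$, $\psi_2=0.95$, $\rho^{-1}=0$ gives $D_R+\rho^{-1}=0.25<\psi_2$, i.e.\ $\alpha_R=3.8$. So the assertion $\alpha_R\in[0,1]$ is false in general (the paper's ``easy to see from their formulas'' has the same problem). What does survive, and what Table \ref{theorem_1_table} actually needs, is $D_R>0$, which \emph{does} follow from $\psi_2>\psi_1$ by the case split on $\psi_1\lessgtr 1$; hence $\alpha_R>0$ and the dichotomy \eqref{eq:d_2} remains well posed --- when $\alpha_R>1$ the condition $\alpha<\alpha_R$ simply holds for every admissible $\alpha\in[0,1)$. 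You should either prove the weaker statement ($\alpha_L,\alpha_C\in[0,1]$ and $\alpha_R>0$) or identify the extra hypothesis under which $\alpha_R\le 1$ holds, rather than claim the bound is routine.
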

\begin{proof}
The derivation of \eqref{eq:l_x_L}-\eqref{eq:l_o_2} follows from direct substitution of $x=x_L$ or  $x=x_R$ into $L$ (for which we have expressions from Theorems \ref{th:error_special_case_1} and \ref{th:sensitivity_special_case_1}) or its derivative (in eq. \eqref{eq:der_O_special_case_1_psi1_small}-\eqref{eq:der_O_special_case_1_psi1_big}). The derivation of \eqref{eq:d_1}-\eqref{eq:d_3} follows from the observation that the equations  \eqref{eq:l_x_L}-\eqref{eq:l_o_2} are linear in $\alpha$, and hence we can easily compute the values of $\alpha$ at which these expressions change from a negative value to a positive value. Once an explicit formula for $\alpha_L,\alpha_C$, and $\alpha_R$ is obtained, it is easy to find the criteria to decide their relative magnitude by comparing the term under parenthesis in the denominators of \eqref{eq:alpha_L_def}, \eqref{eq:alpha_R_def}, and \eqref{eq:alpha_C_def}. It is also easy to see from their formulas that their value is always in the range $[0,1]$.
\end{proof}

To finish the proof of Theorem \ref{th:optimal_parameters_for_special_case_1} we consider the different scenarios in Table \ref{theorem_1_table}. In what follows, statements about the concavity of $L$ are justified via
the explanation accompanying Figure \ref{fig:proof_theorem_special_case_1}, and statements about the slope of $L$ are based on Lemma \ref{lemma: special_1}.

{\bf Case 1.1, $\psi_1 < \psi_2 \land \psi_2 \geq \beta_1 \land \alpha < \alpha_L \land \alpha < \alpha_R$:}  The function $L$ is convex and is decreasing at $x = x_L$ and decreasing at $x = x_R$. Hence its minimum is at $x=x_R$.

{\bf Case 1.2, $\psi_1 < \psi_2 \land \psi_2 \geq \beta_1 \land \alpha_R < \alpha < \alpha_L$:} The function $L$ is convex and it is decreasing at $x = x_L$ and increasing at $x = x_R$. Hence it has a unique minimizer at $x=x_1$.

When $\psi_1 < \psi_2 \land \psi_2 \geq \beta_1$, we can use Lemma \ref{th:lemma_signs_and_alpha_for_theorem_special_case_1} and a tedious calculation (omitted) 
to show that $\alpha_R < \alpha_L$, which is why the 4th and 5th rows of the first column of Table \ref{theorem_1_table} are empty. 
A simpler way to see that the 4th and 5th rows of the first column of Table \ref{theorem_1_table} must be empty is as follows. Since $\psi_1 < \psi_2 \land \psi_2 \geq \beta_1$ then we know (from the argument following with Figure \ref{fig:proof_theorem_special_case_1}) that $L$ is convex. If $\alpha > \alpha_L$ and $\alpha <  \alpha_R$ (i.e. $E_1$ holds) then by Lemma \ref{th:lemma_signs_and_alpha_for_theorem_special_case_1} we know that $L$ is increasing at $x=x_L$ and decreasing at $x = x_R$, but this is impossible for a convex function. 

{\bf Case 1.3, $\psi_1 < \psi_2 \land \psi_2 \geq \beta_1 \land \alpha > \alpha_L \land \alpha > \alpha_R$:} The function $L$ is convex and it is increasing at both $x = x_L$ and $x = x_R$. Hence its minimum is at $x=x_L$.

{\bf Case 1.4, $\psi_1 < \psi_2 \land \beta_2 < \psi_2 < \beta_1 \land \alpha < \alpha_L \land \alpha < \alpha_R$:} The function $L$ is first convex, then concave and then convex. It is decreasing at both $x = x_L$ and at $x = x_R$. Hence $L$ has at most one local minimum in the interior of the domain, at $x=x_1$ if it exists, and a local minimum at $x_R$, the domain being $[x_L,x_R]$. Therefore, the minimum can be expressed as $x_R\sqcup x_1$.

{\bf Case 1.5, $\psi_1 < \psi_2 \land \beta_2 < \psi_2 < \beta_1 \land \alpha_R < \alpha < \alpha_L $:} The function $L$ is first convex, then concave and then convex. It is decreasing at  $x = x_L$ and increasing at $x = x_R$.
Hence $L$ has no local minimum at the end points of the domain $[x_L,x_R]$.
Also, $L$ either has exactly one local minimum in the interior of the domain, at $x=x_1$, or, $L$ has exactly two local minimum and one local maximum (resp.) in the interior of the domain, at $x=x_1$, $x=x_3$ and $x=x_2$ respectively. Therefore, the minimum can be expressed as $x_1\sqcup x_3$. Note that $x_1$ always exists  but $x_3$ might not.

{\bf Case 1.6, $\psi_1 < \psi_2 \land \beta_2 < \psi_2 < \beta_1 \land \alpha_L < \alpha < \alpha_R$:} The function $L$ is first convex, then concave and then convex. It is increasing at $x = x_L$ and decreasing at $x = x_R$. Hence, the minimum is either at the $x_L$ or $x_R$, depending whether $\alpha > \alpha_C$ or  $\alpha < \alpha_C$.
This case is an example where it is clear that allowing $\alpha = \alpha_C$ leads to non-uniqueness in the optimal $x$. See Remark \ref{th:optimal_parameters_for_special_case_1}.

{\bf Case 1.7, $\psi_1 < \psi_2 \land \beta_2 <\psi_2 < \beta_1 \land \alpha > \alpha_L \land \alpha > \alpha_R$:} The function $L$ is first convex, then concave, and then convex. It is increasing at both $x = x_R$ and $x = x_L$.  Hence $L$ either has no critical point in the interior of the domain, or it has two critical point in the interior of the domain, at $x=x_1$ (local maximum) and $x=x_2$ (local minimum) if they exists, and always has a local minimum at $x_L$. Therefore, the minimum can be expressed as $x_L\sqcup x_2$. Note that $x_L$ always exists, but $x_2$ not always exists. 

{\bf Case 1.8, $\psi_1 < \psi_2 \land \beta_3 <\psi_2 \leq \beta_2 \land \alpha < \alpha_L \land \alpha < \alpha_R$:} The function $L$ is first concave and convex. and is decreasing at both $x = x_L$ and $x = x_R$. Hence its minimum is at $x=x_R$.

{\bf Case 1.9, $\psi_1 < \psi_2 \land \beta_3 <\psi_2 \leq \beta_2 \land \alpha_R < \alpha < \alpha_L$:} The function $L$ is first concave and convex. It is decreasing at $x=x_L$ and increasing at $x = x_R $. Hence, $L$ has exactly one local minimum in the interior of the domain, which is at $x = x_1$ and always exists, and which is also a global minimum.

{\bf Case 1.10, $\psi_1 < \psi_2 \land \beta_3 <\psi_2 \leq \beta_2 \land \alpha_L < \alpha < \alpha_R$:} The function $L$ is first concave and convex. It is increasing at $x=x_L$ and decreasing at $x = x_R$. Hence, the minimum is either at the $x_L$ or $x_R$, depending whether $\alpha > \alpha_C$ or  $\alpha < \alpha_C$.

{\bf Case 1.11, $\psi_1 < \psi_2 \land \beta_3 <\psi_2 \leq \beta_2 \land \alpha > \alpha_L \land \alpha > \alpha_R$:} The function $L$ is first concave and convex. It is increasing at both $x= x_L$ and $x=x_R$. Hence $L$ either has no critical point in the interior of the domain or it has two critical point in the interior of the domain, at $x=x_1$ (local maximum) and $x=x_2$ (local minimum) if they exists, and always has a local minimum at $x_L$. Therefore, the minimum can be expressed as $x_L\sqcup x_2$. Note that $x_L$ always exists, but $x_2$ not always exists. 

{\bf Case 1.12, $\psi_1 < \psi_2 \land \psi_2 \leq \beta_3 \land \alpha < \alpha_L \land \alpha < \alpha_R$:} The function $L$ is concave. It is decreasing at both $x= x_L$ and $x=x_R$. Hence, its minimum is at $x = x_R$

When $\psi_1 < \psi_2 \land \psi_2 \leq \beta_3$, Lemma \ref{th:lemma_signs_and_alpha_for_theorem_special_case_1} and a tedious calculation (omitted) shows that that $\alpha_L < \alpha_R$, which is why the 2nd and 3rd rows of the last column of Table \ref{theorem_1_table} are empty. 
A simpler way to see that the 2nd and 3rd rows of the last column of Table \ref{theorem_1_table} must be empty is as follows. Since $\psi_1 < \psi_2 \land \psi_2 \leq \beta_3$ we know (from the argument following Figure \ref{fig:proof_theorem_special_case_1}) that $L$ is concave. If $\alpha < \alpha_L$ and $\alpha > \alpha_R$ (i.e. $E_2$ holds) then by Lemma \ref{th:lemma_signs_and_alpha_for_theorem_special_case_1} we known that $L$ is decreasing at $x=x_L$ and increasing at $x = x_R$, but this is impossible for a concave function.

{\bf Case 1.13, $\psi_1 < \psi_2 \land \psi_2 \leq \beta_3 \land  \alpha_L < \alpha < \alpha_R$:} The function $L$ is concave. It is increasing at $x = x_L$ and decreasing at $x = x_R$. Hence, the minimum is either at the $x_L$ or $x_R$, depending whether $\alpha > \alpha_C$ or  $\alpha < \alpha_C$.

{\bf Case 1.14, $\psi_1 < \psi_2 \land \psi_2 \leq \beta_3 \land  \alpha > \alpha_L \land \alpha > \alpha_R$:} The function $L$ is concave. It is increasing at $x = x_L$ and at $x = x_R$. Hence, the minimum is at $x_L$.

{\bf Case 2, $\psi_1 > \psi_2$:} 

The case when $\psi_2 = 1$ can be proved by taking appropriate limits of the case when 
$\psi_2 \neq 1$. For now, we assume that $\psi_2 \neq 1$.

We first prove that the second derivative of $L$ is convex, just like in the case when $\psi_1 < \psi_2$. A direct computation yields 
\begin{align}\label{eq:proof_R1_def_C_tilde_O_psi1_large}
\frac{{\rm d}^2 L}{ {\rm d} x^2} = \tilde{L}(x) + C, \text{ where } C = -\frac{2 F_1^2 \alpha}{\psi_1 - \psi_2}
\end{align}
and $\tilde{L}(x) = -\frac{2 \psi_2 \left(3 x^2 \left( (F_\star^2+\tau^2) - F_1^2 (\psi_2-1)\right)-2 x^3 F_1^2+6 x (F_\star^2+\tau^2) \psi_2+\psi_2 \left(F_1^2 (\psi_2-1)^2+(F_\star^2+\tau^2) (3 \psi_2+1)\right)\right)}{\left((x+\psi_2)^2-\psi_2\right)^3}$.

A tedious calculation (omitted) shows that $\frac{{\rm d}^2 \tilde{L}}{ {\rm d} x^2} \geq 0$ for $x \in [x_L,x_R]$ (i.e. it is convex), that $\tilde{L}(x_L)< \tilde{L}(x_R)$, and that $\frac{{\rm d} \tilde{L}}{ {\rm d} x}(x=x_L) < 0$.
To do this calculation, we recommend the following. First break $\tilde{L}$ into two components. One component proportional to $F^2_1$, called $\tilde{L}_1$, and one component proportional to $(F_\star^2+\tau^2)$, called $\tilde{L}_\star$.
Then, show that both components are convex, that $\tilde{L}_\star(x_L)\leq \tilde{L}_\star(x_R)$, that $\tilde{L}_1(x_L)< \tilde{L}_1(x_R)$, that $\frac{{\rm d} \tilde{L}_\star}{ {\rm d} x}(x=x_L) = 0$, and
that $\frac{{\rm d} \tilde{L}_1}{ {\rm d} x}(x=x_L) < 0$.

From this it follows that, depending on the value of $\psi_1$, the concavity $\frac{{\rm d}^2 L}{ {\rm d} x^2}$ is positive or negative.
The situation is exactly the same as in the Figure \ref{fig:proof_theorem_special_case_1} but now the $x$ axis is $\psi_1$, the points $A_1$, $A_2$ and $A_3$ are different, and so are the definitions of  $\beta_1,\beta_2$ and $\beta_3$. We do however have that, by definition, $0 \leq \beta_3 < \beta_2 < \beta_1$.

With a slight abuse of notation we refer to the $y$-coordinate value of this points by $A_1,A_2$, and $A_3$.
We now have $A_1 = \frac{2\left(F_1^2 (\psi_2+1)+(F_\star^2 + \tau^2)\right)}{\psi_2}$ and $A_3 = -\frac{2 \max\left\{1,\psi_2\right\} \left(F_1^2 (\psi_2-1)^2+(F_\star^2 + \tau^2) (3 \min\left\{1,\psi_2\right\}+\max\left\{1,\psi_2\right\})\right)}{(\psi_2-1)^3 \min\left\{1,\psi_2\right\}}$.
Notice that when $\psi_2 = 1$ we have that $A_3 = +\infty$, and in fact we also have $L(x_R) = +\infty$.  
Getting $A_2$ is a bit more complicated. From the
first order condition
$ \frac{{\rm d} \tilde{L}}{ {\rm d} x} = 0$ and the convexity of $\tilde{L}$ -- recall that $\tilde{L}$ a rational function -- we can extract that the $x$-coordinate of $A_2$ is the unique root of a 4th degree polynomial $h(x) = h_0 + h_1 x +h_2 x^2+ h_3 x^3+h_4x^4$ in the range $x\in[x_L,x_R]$, where $h_0= \psi_2^2 \left(\rho (\psi_2-1)^2+2 (\psi_2+1)\right)$, 
$h_1= 2 \psi_2 \left(\rho (\psi_2-1)^2+ (3 \psi_2+1)\right)$, 
$h_2= 6  \psi_2$, 
$h_3=2  \left(1- \rho (\psi_2-1)\right)$, 
$h_4=- \rho$, 
Call this root $x_{A_2}$. We then have $A_2 =  \tilde{L}(x_{A_2})$. It turns out that we can write $A_2$ directly as the solution of  $g(A_2/((\tau^2 + F^2_\star)\psi_2)) = 0$, where 
$g(x) = g_0 + g_1 x +g_2 x^2+ g_3 x^3+g_4x^4$ with 
$g_0=\rho ^4$, $g_1 = -4 \left(\rho  \psi _2+\rho +1\right) \left(\rho  \psi _2 \left(2 \rho  \psi _2-3 \rho +4\right)+2 (\rho +1)^2\right)$, $g_2 = -4 \psi
   _2^2 \left(\rho  \psi _2 \left(7 \rho  \psi _2-20 \rho +14\right)+7 (\rho +1)^2\right)$, $g_3 = -16 \psi _2^4 \left(\rho  \psi _2+\rho +1\right)$, and  $g_4 = 16
   \psi _2^6$.

Using the expressions for $A_1$ and $A_3$, and the expression for $C$, see \eqref{eq:proof_R1_def_C_tilde_O_psi1_large}, $\beta_2$ and $\beta_3$ are defined as 
$\beta_2=\psi_1:A_1(\psi_1)+C(\psi_1)=0$, and $\beta_3=\psi_1:A_3(\psi_1)+C(\psi_1)=0$.
By definition of $\beta_2$ and $\beta_3$, these equations have a unique solution when $\psi_1>\psi_2> 0$ and their expressions are given in \eqref{eq:beta_2} and  \eqref{eq:beta_3} respectively. 
We can also define $\beta_1$ as the solution of $\beta_1=\psi_1:A_2(\psi_1)+C(\psi_1)=0$. By definition of $\beta_1$, the solution is unique in the range $\psi_1>\psi_2> 
0$.
We can use the fact that $A_2 =  \tilde{L}(x_{A_2})$ to write that $\beta_1 = \psi_2 + \frac{2 F^2_1 \alpha}{\tilde{L}(x_{A_2})}$.
We can also use the fact that $g(A_2/((\tau^2 + F^2_\star)\psi_2)) = 0$, which implies that $g(-C(\psi_1)\; /\;((\tau^2 + F^2_\star)\psi_2)) = 0$ when $\psi_1 = \beta_1$, to write $\beta_1$ as the root of a 4th degree polynomial $r(x)$ that is specified in Appendix \ref{app:details_theorem_special_case_1_psi_1_bigger_psi_2}, which is the way in which we decide to state Theorem \ref{th:optimal_parameters_for_special_case_1}.

Now that we have characterized the curvature of $L(x)$, we are ready to locate its global minimum. To do so, we will use the curvature of $L(x)$ together with the first-order optimally condition $\frac{{\rm d} L}{{\rm d} x} = 0$, and the following three extra pieces of information: the sign of the derivative of $L(x)$ at $x = x_L$; the sign of the derivative of $L(x)$ at $x = x_R$; and the sign of $L(x_L) - L(x_R)$.

A direct computation yields
\begin{align}
    \frac{{\rm d} L}{{\rm d} x} &= \frac{p_0 + p_1 x +  p_2 x^2  + p_3 x^3 + p_4 x^4 +  p_5 x^5 }{(\psi_2 - (x + \psi_2)^2)^2 (\psi_1 - \psi_2)},\label{eq: s_derivative_x}
\end{align}
where the coefficients $p_0,\dots,p_5$  are, apart from a multiplying constant, given in \eqref{eq:2_p_0}- \eqref{eq:2_p_5}.
 The roots of the denominator, i.e. $-\sqrt{\psi_2} - \psi_2$  and $\sqrt{\psi_2} - \psi_2$, are not a root of the numerator, therefore, the first order optimalit conditions are $p(x) = 0$.
 
Not all solutions of $p(x) = 0$ minimize $L(x)$.
To locate the minimizer, we use the sign of the derivative of $L(x)$ at $x = x_L$; the sign of the derivative of $L(x)$ at $x = x_R$; and the sign of $L(x_L) - L(x_R)$. These signs can be determined using Lemma \ref{th:lemma_signs_and_alpha_for_theorem_special_case_1.2}. Lemma \ref{th:lemma_signs_and_alpha_for_theorem_special_case_1.2} also proves Remark \ref{remark:relationship_between_constants_when_psi1_larger_psi2}.

\begin{lemma}\label{th:lemma_signs_and_alpha_for_theorem_special_case_1.2}
If $\psi_1 > \psi_2$ then the following relationships hold.
\begin{align}
    &\frac{{\rm d} L}{{\rm d} x}\bigg\rvert_{x=x_L} = \frac{\alpha  \left(F_\star^2 \sp \tau ^2\right) \sp F_1^2 \left(\alpha  \left(-\psi _2\right)\sp 2 (\alpha \sm 1) \psi _1 \sp \alpha \sp \psi _2\right)}{\psi _1 \sm \psi _2} . \label{eq:s_x_L}\\
    & \text{If } \psi_2 = 1 \text{ then } \frac{{\rm d} L}{{\rm d} x}\bigg\rvert_{x=x_R} = +\infty.\label{special_psi_2_zero} \\
    &\text{If } \psi_2 < 1 \text{ then } \\
   & \frac{{\rm d} L}{{\rm d} x}\bigg\rvert_{x=x_R} \hspace{-0.4cm}= \frac{\left(\alpha  \psi _2^2 \sm 2 (\alpha \sp 1) \psi _2\sp \alpha \sp 2 \psi _1\right) \left(F_\star^2 \sp \tau ^2\right) \sp F_1^2 \left(\psi _2 \sm 1\right){}^2 \left(2 \alpha  \psi _1 \sm (3 \alpha \sp 1) \psi _2 \sp \alpha \right)}{\left(\psi _1 \sm \psi _2\right) \left(\psi _2 \sm 1\right){}^2}. \hspace{-0.4cm} \label{eq:s_x_R_1}
   \end{align}
    \begin{align}
   &\text{If } \psi_2 >1 \text{ then }\\
    &\frac{{\rm d} L}{{\rm d} x}\bigg\rvert_{x=x_R} \hspace{-0.4cm}= \frac{F_1^2 \left(\psi _2 \sm 1\right){}^2 \left(-2 \alpha  \psi _1 \sp (\alpha \sp 1) \psi _2 \sp \alpha \right)\sm \left(\psi _2 \left((\alpha \sm 2) \psi _2 \sm 2 \alpha \sp 2 \psi _1\right) \sp \alpha \right) \left(F_\star^2 \sp \tau ^2\right)}{\left(\psi _2 \sm 1\right){}^2 \left(\psi _2 \sm \psi _1\right)}.\label{eq:s_x_R_2}\\
    &\text{If } \psi_2 = 1 \text{ then } L(x_L) - L(x_R) = -\infty. \label{special_O_psi_1}\\
    &\text{If } \psi_2 < 1 \text{ then } \\
    &L(x_L) - L(x_R) = \frac{\psi _2 \left(F_1^2 \left(\psi _2 \sm 1\right) \left((1 \sm 2 \alpha ) \psi _1 \sp \alpha  \left(2 \psi _2 \sm 1\right)\right) \sm \left((\alpha \sp 1) \psi _2 \sm \alpha \sm \psi _1\right) \left(F_\star^2 \sp \tau ^2\right)\right)}{\left(\psi _1 \sm \psi _2\right) \left(\psi _2 \sm 1\right)}. \label{eq:s_o_1}\\
    &\text{If } \psi_2 > 1 \text{ then }\\
     &L(x_L) - L(x_R) = \\ &\frac{\psi _2 \left((\alpha \sm 1) \left(F_\star^2 \sp \tau ^2\right) \sp \alpha  F_1^2\right) \sp \psi _1 \left((2 \alpha \sm 1) F_1^2 \left(\psi _2 \sm 1\right) \sp F_\star^2 \sp \tau ^2\right) \sm \alpha  \left(F_\star^2 \sp \tau ^2\right) \sm \alpha  F_1^2 \psi _2^2}{\left(\psi _2 \sm 1\right) \left(\psi _2 \sm \psi _1\right)}. \label{eq:s_o_2}
\end{align}
Also, the following is true.
\begin{align}
    &\text{If } \alpha < \alpha_L \text{ then } \frac{{\rm d} L}{{\rm d} x}\bigg\rvert_{x=x_L} < 0  \text{; and if } \alpha \geq \alpha_L
    \text{ then }
    \frac{{\rm d} L}{{\rm d} x}\bigg\rvert_{x=x_L} \geq 0 .\label{eq:s_d_1}&&\\
    &\text{Let } \psi_2 \neq 1. \text{ If } E_1 \text{ is true then } \frac{{\rm d} L}{{\rm d} x}\bigg\rvert_{x=x_R} < 0 \text{; and if } E_2 \text{ is true then } \frac{{\rm d} L}{{\rm d} x}\bigg\rvert_{x=x_R} > 0.\label{eq:s_d_2} &&
 \end{align}
     Conditions  $E_1$  and  $E_2$ are defined in Theorem \ref{th:optimal_parameters_for_special_case_1}.
 \begin{align}
 &\text{Let } \psi_2\neq 1. \text{ If } \alpha < \alpha_C \text{ then } L(x_L) - L(x_R) > 0  \text{; and if } \alpha \geq \alpha_C \text{ then }L(x_L) - L(x_R) \leq 0\label{eq:s_d_3}
    \end{align}
    Constant  $\alpha_C$  is defined in equation \eqref{eq:define_alpha_C_psi1_bigger_psi2} in Theorem \ref{th:optimal_parameters_for_special_case_1}.

Furthermore, we always have that $\alpha_L \in [0,1]$.
 It also holds that $\alpha_C \in [0,1]$ if and only if $\psi_1 \leq \psi_2/(1 - \rho|1-\psi_2|)$ when $ 0<\rho \leq \frac{1}{|\psi_2-1|}$ or $\psi_1 \geq \psi_2/(1 - \rho|1-\psi_2|)$ when $ \rho \geq \frac{1}{|\psi_2-1|}$. We have that $\alpha_R \in [0,1]$ if and only if 
 \begin{align}
     &\psi_1 \leq \psi_2 + \frac{1}{2}\rho(\psi_2 - 1)^2 \min\{\psi_2,1\} \text{ and }\label{eq:condition_for_alpha_R_in_0_1_part_I}\\
     &\psi_1 \geq \frac{-1+\rho(\psi_2-1)^2(2\psi_2 + \min\{2\psi_2 - 1,1\}) + \psi_2(2+\max\{\psi_2,2-\psi_2\})}{2( \max\{\psi_2,1\} + \rho(\psi_2 - 1)^2)}.\label{eq:condition_for_alpha_R_in_0_1_part_II}
 \end{align}
Finally, $A \geq B$ always, where $A$ and $B$ are given in \eqref{eq:def_A_for_ps1_greater_psi2} and \eqref{eq:def_B_for_ps1_greater_psi2}. Furthermore, 
$B < \psi_2$ if and only if $1/\rho > \gamma \triangleq \min\{1,\max\{0,2\psi_2 -1\}\}$.
\end{lemma}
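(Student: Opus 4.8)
The plan is to prove Lemma \ref{th:lemma_signs_and_alpha_for_theorem_special_case_1.2} by the same route used for its companion Lemma \ref{th:lemma_signs_and_alpha_for_theorem_special_case_1} in the regime $\psi_1 < \psi_2$: everything reduces to evaluating $O$ and its derivative at the two endpoints of the feasible interval $[x_L,x_R]$ and then analyzing the resulting affine-in-$\alpha$ expressions. Recall that for $\psi_1 > \psi_2$ we have $x_L = -\psi_2$ and $x_R = \min\{0,1-\psi_2\}$.

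First I would assemble the closed forms. Using $O = (1-\alpha)\mathcal{E}^\infty_{R_1} + \alpha\mathcal{S}^\infty_{R_1}$ with the expressions from Theorems \ref{th:error_special_case_1} and \ref{th:sensitivity_special_case_1} written in the variable $x=\chi$, together with the derivative \eqref{eq: s_derivative_x}, I would substitute $x=x_L=-\psi_2$ and $x=x_R=\min\{0,1-\psi_2\}$. Because $x_R$ is piecewise, this substitution automatically produces the three branches $\psi_2<1$, $\psi_2=1$, $\psi_2>1$; the factor $(\psi_2-1)^2$ sitting in the denominators collapses at $\psi_2=1$, which is exactly what yields the $+\infty$ and $-\infty$ values in \eqref{special_psi_2_zero} and \eqref{special_O_psi_1}. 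Rewriting each numerator so that $F_1^2,F_\star^2,\tau^2$ enter only through $\rho$ then gives \eqref{eq:s_x_L}--\eqref{eq:s_o_2}. This step is a purely algebraic, if lengthy, verification.

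Next I would extract the sign statements \eqref{eq:s_d_1} and \eqref{eq:s_d_3}. Each of \eqref{eq:s_x_L} and \eqref{eq:s_o_1}--\eqref{eq:s_o_2} is affine in $\alpha$ with a coefficient of $\alpha$ of fixed sign on $\psi_1>\psi_2$, so solving for the unique root gives $\alpha_L$ and $\alpha_C$ respectively, and the monotonicity in $\alpha$ fixes the direction of each inequality; here one must track the sign of the prefactor $(\psi_1-\psi_2)^{\pm 1}(\psi_2-1)^{\pm 2}$, which flips the apparent direction between the $\psi_2<1$ and $\psi_2>1$ branches but leaves the final threshold formulas \eqref{eq:alpha_L_psi1_larger_psi2} and \eqref{eq:define_alpha_C_psi1_bigger_psi2} branch-independent. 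The range claims $\alpha_L\in[0,1]$ and the iff-condition for $\alpha_C\in[0,1]$ then follow by inspecting numerator against denominator in these formulas.

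The main obstacle is statement \eqref{eq:s_d_2}, the sign of $\frac{dO}{dx}\big|_{x=x_R}$, together with the auxiliary constants $A$ and $B$. Unlike at $x_L$, the coefficient of $\alpha$ in \eqref{eq:s_x_R_1}--\eqref{eq:s_x_R_2} changes sign as $\psi_1$ ranges over $(\psi_2,\infty)$, so the single root $\alpha_R$ of \eqref{eq:define_alpha_r_psi1_bigger_psi2} no longer determines the sign over all of $\alpha\in[0,1)$. The clean way to organize this is to view the endpoint derivative as an affine function of $\alpha$ and locate the two values of $\psi_1$ at which its root $\alpha_R$ meets the endpoints $0$ and $1$ of the feasible $\alpha$-range; a direct computation, using the identity $\min\{1,\psi_2\}\max\{1,\psi_2\}=\psi_2$ to simplify, shows these crossings occur precisely at $\psi_1=A$ and $\psi_1=B$ of \eqref{eq:def_A_for_ps1_greater_psi2}--\eqref{eq:def_B_for_ps1_greater_psi2}. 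This both establishes that $\alpha_R\in[0,1]$ exactly on the band between $A$ and $B$, giving \eqref{eq:condition_for_alpha_R_in_0_1_part_I}--\eqref{eq:condition_for_alpha_R_in_0_1_part_II} and the ordering $A\geq B$, and partitions $(\psi_2,\infty)$ into the three regimes encoded by $E_1$ and $E_2$: one side where the derivative is negative for every $\alpha$, the other side where it is positive for every $\alpha$, and the intermediate band where the sign is controlled by whether $\alpha$ is below or above $\alpha_R$. Finally, the claim $B<\psi_2 \iff 1/\rho > \gamma$, with $\gamma \triangleq \min\{1,\max\{0,2\psi_2-1\}\}$, is read off by comparing the closed form of $B$ against $\psi_2$ and splitting on $\psi_2<1$ versus $\psi_2>1$ inside the $\min$/$\max$. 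The bookkeeping of signs across these branches, and confirming that the numerator and denominator of each ratio never vanish simultaneously on $[x_L,x_R]$ (so the first-order conditions remain $p(x)=0$), is where the care is concentrated.
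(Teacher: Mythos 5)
Your proposal follows essentially the same route as the paper's proof: direct substitution of $x_L=-\psi_2$ and $x_R=\min\{0,1-\psi_2\}$ into $O$ and \eqref{eq: s_derivative_x} (with the $\psi_2=1$ cases arising from the vanishing $(\psi_2-1)^2$ factor, equivalently by taking limits), then exploiting affinity in $\alpha$ to extract $\alpha_L$ and $\alpha_C$, and handling the $x_R$ derivative by determining for which $\psi_1$ its root $\alpha_R$ lies inside $[0,1]$, which produces the band $B<\psi_1<A$ and the conditions $E_1,E_2$. One small correction: the reason a single threshold $\alpha_R$ fails to control the sign globally is not that the coefficient of $\alpha$ in \eqref{eq:s_x_R_1}--\eqref{eq:s_x_R_2} changes sign over $\psi_1\in(\psi_2,\infty)$ --- in fact $\frac{{\rm d} O}{{\rm d} x}\big\rvert_{x=x_R}$ is increasing in $\alpha$ throughout this regime, as the paper notes --- but rather that the root $\alpha_R$ exits the interval $[0,1]$ when $\psi_1<B$ or $\psi_1>A$; your subsequent endpoint-crossing argument is exactly the right (and the paper's) mechanism, so this misstatement does not affect the validity of the proof.
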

\begin{proof}

Except
 \eqref{special_O_psi_1}  and \eqref{special_psi_2_zero}, the derivation of \eqref{eq:s_x_L}-\eqref{eq:s_o_2} follows from direct substitution of $x=x_L$ or  $x=x_R$ into $L$  -- for which we have expressions from Theorems \ref{th:error_special_case_1} and \ref{th:sensitivity_special_case_1} -- or its derivative in equation \ref{eq: s_derivative_x}.

Equation
 \eqref{special_O_psi_1}, for when $\psi_2 = 1$, is obtained by taking the limit of \eqref{eq:s_o_1} and 
 \eqref{eq:s_o_2} as $\psi_2 \uparrow 1$ and $\psi_2 \downarrow 1$ respectively.
 Equation 
 \eqref{special_psi_2_zero} 
 for when $\psi_2 = 1$ is obtained by taking the limit of \eqref{eq:s_x_R_1} and \eqref{eq:s_x_R_2} as $\psi_2 \uparrow 1$ and $\psi_2 \downarrow 1$ respectively.

The derivation of condition \eqref{eq:s_d_1} follows from the observation that the equation \eqref{eq:s_x_L} is linear in $\alpha$ and is always negative for $\alpha = 0$ and positive for $\alpha = 1$. Hence, we can easily compute a value $\alpha_L \in [0,1]$ at which the expression changes from a negative value to a positive value. The expression for $\alpha_L$ we obtain is \eqref{eq:alpha_L_psi1_larger_psi2}.

The derivation of condition \eqref{eq:s_d_2} can be obtained through following the observations. First notice that $\frac{{\rm d} L}{{\rm d} x}\big\rvert_{x=x_R}$ is a linear increasing function of $\alpha$.
Now focus on the following three implications. 
\begin{enumerate}
    \item If at $\alpha = 1$ we have $\frac{{\rm d} L}{{\rm d} x}\big\rvert_{x=x_R} < 0$, then $\frac{{\rm d} L}{{\rm d} x}\big\rvert_{x=x_R} < 0$ for all $\alpha\in[0,1]$;
    \item If at $\alpha = 0$ we have $\frac{{\rm d} L}{{\rm d} x}\big\rvert_{x=x_R} > 0$, then $\frac{{\rm d} L}{{\rm d} x}\big\rvert_{x=x_R} > 0$ for all $\alpha\in[0,1]$;
    \item If at $\alpha = 0$ we have $\frac{{\rm d} L}{{\rm d} x}\big\rvert_{x=x_R} < 0$ and at $\alpha = 1$ we have $\frac{{\rm d} L}{{\rm d} x}\big\rvert_{x=x_R} > 0$, then there exists $
    \alpha_R \in [0,1]$ such that  $(\alpha < \alpha_R \Rightarrow \frac{{\rm d} L}{{\rm d} x}\big\rvert_{x=x_R} < 0)$  and $(\alpha > \alpha_R \Rightarrow \frac{{\rm d} L}{{\rm d} x}\big\rvert_{x=x_R} > 0)$;
\end{enumerate} 
A direct computation shows that the sufficient condition in the first implication holds if and only if $\psi_1 < B$, where $B$ is given in \eqref{eq:def_B_for_ps1_greater_psi2}; the sufficient condition in the second implication holds if and only if $\psi_1 > A$, where $A$ is given in \eqref{eq:def_A_for_ps1_greater_psi2}; the sufficient condition in the the third implication holds if and only if $B < \psi_1 < A$.
Therefore, if $E_1 = (\psi_1<B) \lor ((\alpha<\alpha_R) \land (B<\psi_1 < A))$ is true, we can use the first or third implication to conclude that $\frac{{\rm d} L}{{\rm d} x}\big\rvert_{x=x_R} < 0$.
Also, if $E_2 = (\psi_1>A) \lor ((\alpha>\alpha_R) \land (B<\psi_1 < A))$ is true, we can use the second or third implication to conclude that $\frac{{\rm d} L}{{\rm d} x}\big\rvert_{x=x_R} > 0$.
 
A direct calculation shows that $A\geq B$ and that 
$B < \psi_2$ if and only if $1/\rho > \gamma \triangleq \min\{1,\max\{0,2\psi_2 -1\}\}$. A direct calculation also shows that $\alpha_R \in [0,1]$ if and only if the conditions in \eqref{eq:condition_for_alpha_R_in_0_1_part_I} and \eqref{eq:condition_for_alpha_R_in_0_1_part_II} hold.

Condition \eqref{eq:s_d_3} can be obtained through following the observations.
First, notice that both
\eqref{eq:s_o_1} and \eqref{eq:s_o_2} are linear decreasing functions of $\alpha$. Second, when $\alpha = 1$ we always have $L(x_L)-L(x_R) < 0$.
Therefore, there exists $\alpha_C \leq 1$ such that  $(\alpha < \alpha_C \Rightarrow L(x_L)-L(x_R) > 0)$  and $(\alpha > \alpha_C \Rightarrow L(x_L)-L(x_R) < 0))$. The expression for $\alpha_C$ is given by \eqref{eq:define_alpha_C_psi1_bigger_psi2}. From this expression, a direct calculation shows that $\alpha_C \geq 0$ if and only if  $\psi_1 \leq \psi_2/(1 - \rho|1-\psi_2|)$ when $ 0<\rho \leq \frac{1}{|\psi_2-1|}$ or $\psi_1 \geq \psi_2/(1 - \rho|1-\psi_2|)$ when $ \rho \geq \frac{1}{|\psi_2-1|}$.
\end{proof}

To finish the proof of Theorem \ref{th:optimal_parameters_for_special_case_1} for  $\psi_1>\psi_2$ we consider the different scenarios in Table \ref{theorem_1_table}. These
are studied via cases that are exactly the same to the 
Case 1.1 to Case 1.14 for $\psi_1 < \psi_2$ but with $\alpha < \alpha_R$ replaced by $E_1$ and $\alpha > \alpha_R$ replaced by $E_2$ because when $\psi_1 > \psi_2$ it is $E_1$ and $E_2$ that determine the sign of $\frac{{\rm d} L}{{\rm d} x}\big\rvert_{x=x_R}$.
For example, for the top left-most cell in Table \ref{theorem_1_table}, both when $\psi_1<\psi_2$ or $\psi_1 > \psi_2,$ we have that $L$ is convex and its decreasing at $x_L$ and $x_R$, so its minimum is at $x=x_R$.
As another example, the 4th and 5th rows of the first column are not possible for exactly the same reasons as when $\psi_1 < \psi_2$.
Namely, when $\beta_1 < \overline{\psi} =\psi_1$ then $L$ is convex. If $\alpha > \alpha_L$ and $E_1$ holds, then by Lemma \ref{th:lemma_signs_and_alpha_for_theorem_special_case_1.2} we know that $L$ is increasing at $x = x_L$ and decreasing at $x = x_R$, which is impossible for a convex function. 
Similarly, the 2nd and 3rd rows of the last column are not possible because when $\overline{\psi} =\psi_1 \leq \beta_3$ then $L$ is concave and if $\alpha < \alpha_L \land E_2$ then Lemma \ref{th:lemma_signs_and_alpha_for_theorem_special_case_1.2} tells us that $L$ is decreasing at $x=x_L$ and increasing at $x = x_R$ which is not possible.
We omit repeating the arguments for Case 1.2 to Case 1.14.

Above, for both $\psi_1 < \psi_2$ and $\psi_1 > \psi_2 \neq 1$, Table \ref{theorem_1_table} was derived using the fact that $E_1$ holding implies that the derivative of $L$ at $x=x_R$ is negative and $E_2$ holding implies that the derivative of $L$ at $x=x_R$ is positive. It was also derived using the fact that $\alpha < \alpha_C$ implies that $L(x_L) > L(x_R)$ and that $\alpha > \alpha_C$ implies that $L(x_L) < L(x_R)$.
When $\psi_2 = 1$, from Lemma \ref{th:lemma_signs_and_alpha_for_theorem_special_case_1.2}, we know that the derivative of $L$ at $x=x_R$ is $+\infty$, and that $L(x_L) < L(x_R)$. Hence we can keep Table \ref{theorem_1_table} for $\psi_1 > \psi_2 = 1$ unchanged if in this case we set $E_1$ to be false, $E_2$ to be true, and $\alpha_C = -\infty$.
\end{proof}

\subsection{Proof of Theorem \ref{th:special_case_2_optimal_mu_values} }

This proof involves heavy algebraic computations. To aid the reader, this paper is accompanied by a Mathemetica file that symbolically checks the equations both in the theorem statement as well as in the proof below. This file is in the supplementary zip file, as well as in the following Github link \url{https://github.com/Jeffwang87/RFR_AF}. It is called  \texttt{RunMeToCheckProofOfTheorem10.nb}.

\begin{proof}[Proof of Theorem \ref{th:special_case_2_optimal_mu_values}]

The proof amounts to a long calculus exercise, which we shorten by some careful observations. 

We first notice that $ \mathcal{E}^\infty_{R_2}$ and $ \mathcal{S}^\infty_{R_2}$ can both be written as function of $\omegaRR=\omegaRR(\psi_2,\lambda,\mu_0,\mu_1,\mu_2)$, and we can use this to reduce the optimization problem \eqref{eq:problem_to_solve_when_trying_to_find_optimal_mus} to an optimization problem over just one variable.

The variable $\omegaRR$ is a function of $\mu_0,\mu_1,\mu_2 \geq 0$, which we want to optimize, and has range $[-\infty,0]$, the value $-\infty$ being achieved when $\mu^2_\star=\mu_2 - \mu^2_1 - \mu^2_2 = 0$.

To avoid having to deal with infinities, we make use of the Mobius transformation $x = \frac{1+\omegaRR}{\omegaRR-1}$, and instead work with $x$. 

If we substitute $x = \frac{1+\omegaRR}{-1+\omegaRR}$ into the left hand side \eqref{eq:theor_spe_cas_2}, and substitute \eqref{th:th_eps_spe_case_2_w_2} in the resulting expression we confirm that \eqref{eq:theor_spe_cas_2} is satisfied. 

Furthermore, if we use $x = \frac{1+\omegaRR}{-1+\omegaRR}$ and \eqref{th:th_eps_spe_case_2_w_2} to write $x$ as a function of $\mu_0,\mu_1,\mu_2$, we can use the fact that 
$\mu_0,\mu_1,\mu_2 \geq 0$, to conclude that $x \in [-1 , \min\{1, -1 + 2 \psi_2\}]$.

Our problem is thus equivalent to solving $\min_x (1-\alpha) \mathcal{E}^\infty_{R_2} + \alpha \mathcal{S}^\infty_{R_2}$ subject to $x \in [-1 , \min\{1, -1 + 2 \psi_2\}]$. Once we know $x$, any $\mu_0,\mu_1,\mu_2$ that satisfies \eqref{eq:theor_spe_cas_2} will be a minimizer.

At this point we compute $\frac{{\rm d} }{{\rm d} x} [(1-\alpha) \mathcal{E}^\infty_{R_2} + \alpha\mathcal{S}^\infty_{R_2}]$ and observe that this is a rational function of $x$.
The numerator is, apart from a multiplying constant, $p(x)$, and the denominator is zero if and only if $x = -1-2\sqrt{\psi_2}$ or $x = -1+2\sqrt{\psi_2}$. Both are outside of the range of $x$ unless $\psi_2 = x = 1$. When $\psi_2=1$ the $x=1$ zeros of the denominator only cancel  zeros of the numerator if $\tau=F_\star=0$. 
In the remainder of the proof we will assume that  $\tau^2+F^2_\star>0$. The optimal AFs' parameters  when $\tau=F_\star=0$ can be obtained as a limit when $\tau,F_\star \to 0$.
Since the zeros of the numerator and of the denominator never cancel out (assuming  $\tau^2+F^2_\star>0$), all of the critical points are given by $p(x) = 0$.

Now we compute the value of the derivative at the extremes of the range of $x$, namely, $x = -1, 1$, or $-1 + 2 \psi_2$. The value of the derivative at $x=-1$ is $F_1^2 (-1 + \alpha) < 0$, which implies that $x=-1$ is not a minimizer. The value of the derivative at $x=1$ is $\alpha  F_1^2+\frac{\psi_2 \left(F_\star^2+\tau ^2\right)}{(\psi_2-1)^2} > 0$, and converges to $+\infty$ when $\psi_2 \to 1$, which implies that  $x=1$ is not a minimizer.
The value of the derivative at $x=-1 + 2 \psi_2$ is $\alpha  F_1^2+\frac{F_\star^2+\tau ^2}{(\psi_2-1)^2} > 0$, and converges to $+\infty$ when $\psi_2 \to 1$, which implies that $x=-1 + 2 \psi_2$ is not a minimizer. 
Another way to see that neither $x=1$ nor $x=-1 + 2\psi_2$ will be a solution is to see that these choices will not satisfy the equation in \eqref{eq:theor_spe_cas_2} unless $\lambda = 0$, which never happens in this regime.
This calculation implies that we can assume that $x \in (-1,\min\{1,-1 + 2\psi_2\})$, which we will assume from now on.

Finally, we show that the objective is convex in the domain of $x$, which implies that there is only one critical point -- that is $p(x) = 0$ has only one solution in the domain $(-1,\min\{1,-1 + 2\psi_2\})$ -- and that this critical point is a global minimum. To show that the objective is convex, we compute its second derivative, which is 
\begin{align}
\frac{8 \psi_2 \left(F_1^2 \left(4 \psi_2^2+\psi_2 (3 (x-2) x-5)-x^3+3 x+2\right)+\left(F_\star^2+\tau ^2\right) \left(4 \psi_2+3
   (x+1)^2\right)\right)}{\left(4 \psi_2-(x+1)^2\right)^3}.
\end{align}

The minimum of denominator for $x \in [-1,\min\{1,-1 + 2\psi_2\}]$ is 
$
\begin{cases}
 4 (\psi_2-1) & ,\psi_2>1 \\
 -4 (\psi_2-1) \psi_2 &, 0\leq \psi_2\leq 1 
\end{cases}
$
, which is always non-negative, and is zero only if $x = 1$, or $x = -1 + 2 \psi_2$, which have already been excluded because they are not minimizers. Hence, for $x \in (-1,\min\{1,-1 + 2\psi_2\})$, the denominator is strictly positive.

To show that the numerator is non-negative, we only need to show that $4 \psi_2^2+\psi_2 (3 (x-2) x-5)-x^3+3 x+2 \geq 0$ in the range of $x$.
The minimum of $4 \psi_2^2+\psi_2 (3 (x-2) x-5)-x^3+3 x+2$ for  $x \in [-1,\min\{1,-1 + 2\psi_2\}]$ is 
$
    \begin{cases}
 4 (\psi_2-1)^2 & ,\psi_2>1 \\
 4 (\psi_2-1)^2 \psi_2 & ,0\leq \psi_2\leq 1,
\end{cases}
$
which is always non-negative.

\end{proof}

\subsection{Proof of Theorem \ref{th:special_case_3_optimal_mu_values}}

This proof involves heavy algebraic computations. To aid the reader, this paper is accompanied by a Mathemetica file that symbolically checks the equations both in the theorem statement as well as in the proof below. This file is in the supplementary zip file, as well as in the following Github link \url{https://github.com/Jeffwang87/RFR_AF}. It is called  \texttt{RunMeToCheckProofOfTheorem11.nb}.

\begin{proof}[Proof of Theorem \ref{th:special_case_3_optimal_mu_values}]

Similar to proof of Theorem \ref{th:special_case_2_optimal_mu_values}, the majority of the proof is a long calculus exercise.

We first notice that $ \mathcal{E}^\infty_{R_3}$ and $ \mathcal{S}^\infty_{R_3}$ can both be written as function of $\omegaRRR$ and $\zeta^2$. 

The variable $\omegaRRR$ is a function of $\mu_0,\mu_1,\mu_2 \geq 0$, and has range $[-\infty,0]$, the value $-\infty$ being achieved when $\mu^2_\star=\mu_2 - \mu^2_1 - \mu^2_2 = 0$.

To avoid having to deal with infinities, we make use of the Mobius transformation $x = \frac{1+\omegaRRR}{\omegaRRR-1}$, and instead work with $x$. 
If we use $x = \frac{1+\omegaRRR}{\omegaRRR-1}$ and \eqref{eq:th_E_spec_case_2_w_1} to write $x$ as a function of $\mu_0,\mu_1,\mu_2$, we can use the fact that 
$\mu_0,\mu_1,\mu_2 \geq 0$, to conclude that $x \in [-1 , \min\{1, -1 + 2 \psi_1\}]$.

Our problem is thus equivalent to solving $\min_x (1-\alpha) \mathcal{E}^\infty_{R_3} + \alpha \mathcal{S}^\infty_{R_3}$ subject to $x \in [-1 , \min\{1, -1 + 2 \psi_1\}]$ and $\zeta^2 \geq 0$.

Let $L = (1-\alpha) \mathcal{E}^\infty_{R_3} + \alpha \mathcal{S}^\infty_{R_3}$. With a direct calculation we can check the following. The derivative $\frac{{\rm d} L}{{\rm d} x}|_{x = -1}$ is always negative (recall we are assuming $\alpha<1$) which implies that there is no local minimum at $x = -1$.
The derivatives $\frac{{\rm d} L}{{\rm d} x}|_{x = 1}$ and $\frac{{\rm d} L}{{\rm d} x}|_{x = -1 + 2\psi_1}$ are always positive (recall that we are assuming $\psi_1 > 0$ in addition to $\alpha < 1$), which implies that there is no local minimum at either $x = 1$ or $x = -1 + 2\psi_1$. Therefore, we know that the minimizer has $x \in (-1, \min\{1,-1+2\psi_1\})$. For $x \in (-1, \min\{1,-1+2\psi_1\})$, the derivative  $\frac{{\rm d} L}{{\rm d} (\zeta^2)}|_{\zeta^2 = 0}$ is always negative (when $\alpha <1)$, which implies that there is no local minimum at $\zeta^2 = 0$.
We thus know that the minimizer of $L$ is in the interior of the domain for $x$ and $\zeta^2$ and hence it can be found via $\nabla L = 0$, where the gradient is with respect to $x$ and $\zeta^2$.

The remainder of the proof considers a few different cases depending on the value of $\alpha$ and $\psi_1$.

{\bf Case when $\alpha = 0$:}

In this case $L = \mathcal{E}^\infty_{R_3}$
and $\frac{{\rm d} L}{{\rm d} (\zeta^2)} = \frac{{F_1}^2 (x+1)^2}{\zeta^4 \left((x+1)^2-4 {\psi_1}\right)}$. The only way that $\frac{{\rm d} L}{{\rm d} (\zeta^2)} = 0$ is if $\zeta^2 \to \infty$ (we already know that $x \neq 1$), which corresponds to $\mu_{\star} \to 0$. 

As we noted in the beginning,  $L$ is a function of $\omegaRRR$ and $\zeta^2$, and since $\omegaRRR$ is a function $\zeta^2$ and $\mu^2_1$ and $\zeta^2$ is a function of $\mu^2_1$ and $\mu^2_\star$, we know that $L$ is a function of $\mu^2_1$ and $\mu^2_\star$.
We express $L$ in these variables and compute  $\frac{{\rm d} L}{{\rm d} (\mu_1^2)}$  when
$\mu_{\star} \to 0$.
We get that 
\begin{equation}
\frac{{\rm d} L}{{\rm d} (\mu_1^2)} = -\frac{2 F_1^2 \lambda ^2 \psi _1^3}{\left(2 \mu _1^2 \psi _1 \left(\lambda -\mu _1^2\right)+\psi _1^2 \left(\lambda +\mu _1^2\right){}^2+\mu _1^4\right){}^{3/2}}.    
\end{equation}
By minimizing the denominator with respect to $\mu_1 \geq 0$, we conclude that its minimum is $\lambda^2\psi^2_1 >0$, which implies that 
$\frac{{\rm d} L}{{\rm d} (\mu_1^2)} < 0$, which implies that  to achieve the minimum $L$ one must have $\mu_1 \to \infty$.
In this case, if we express $L$ as a function of $\mu_1$ and $\mu_{\star}$ and take $\mu_1 \to \infty$ and $\mu_{\star} \to 0$, we get $L \to F^2_s$.

{\bf Case when $\psi_1 = 1 \land 0<\alpha \leq \frac{1}{4}$:}

In this case $\frac{{\rm d} L}{{\rm d} (\zeta^2)} = \frac{(\alpha -1) {F_1}^2 (x+1)^2}{(x-1) (x+3) {\zeta^4}}$. The only way that $\frac{{\rm d} L}{{\rm d} (\zeta^2)} = 0$ is if $\zeta^2 = \infty$ (recall that $x \neq 1$, $\alpha<1$ and $F_1 >0$) which corresponds to $\mu_{\star} = 0$. 

As we noted in the beginning,  $L$ is a function of $\omegaRRR$ and $\zeta^2$, and since $\omegaRRR$ is a function $\zeta^2$ and $\mu^2_1$ and $\zeta^2$ is a function of $\mu^2_1$ and $\mu^2_\star$, we know that $L$ is a function of $\mu^2_1$ and $\mu^2_\star$.
We express $L$ in these variables and compute  $\frac{{\rm d} L}{{\rm d} (\mu_1^2)}$  when
$\mu_{\star} \to 0$.
We get that 
\begin{equation}
\frac{{\rm d} L}{{\rm d} (\mu_1^2)} = \frac{2 F_1^2 \lambda ^2 \left(\frac{4 \alpha  \lambda  \left(\lambda +4 \mu _1^2\right)}{\left(\sqrt{\lambda  \left(\lambda +4 \mu _1^2\right)}+\lambda \right){}^2}-1\right)}{\left(\lambda  \left(\lambda +4 \mu _1^2\right)\right){}^{3/2}}.    
\end{equation}
By maximizing the numerator with respect to $\mu^2_1 \geq 0 $, we conclude that its value is always strictly smaller than $2 F^2_1 \lambda^2 (-1 + 4 \alpha) \leq 0$ for any finite $\mu_1$. Hence, $\frac{{\rm d} L}{{\rm d} (\mu_1^2)} < 0$, which implies that the minimum is achieved only when $\mu_1 \to \infty$.
In this case, if we express $L$ as a function of $\mu_1$ and $\mu_{\star}$ and take $\mu_1 \to \infty$ and $\mu_{\star} \to 0$, we get $L \to \alpha F^2_1 + (1-\alpha)F^2_s$.

{\bf Case when $\psi_1 = 1 \land \frac{1}{4} <\alpha < 1 $:}

This case is very similar to the previous case. The only difference being that, because  $\frac{1}{4} <\alpha < 1$, when we solve
\begin{equation}
\frac{{\rm d} L}{{\rm d} (\mu_1^2)} = \frac{2 F_1^2 \lambda ^2 \left(\frac{4 \alpha  \lambda  \left(\lambda +4 \mu _1^2\right)}{\left(\sqrt{\lambda  \left(\lambda +4 \mu _1^2\right)}+\lambda \right){}^2}-1\right)}{\left(\lambda  \left(\lambda +4 \mu _1^2\right)\right){}^{3/2}} = 0,    
\end{equation}
we now get two possible solutions, namely,
\begin{align}
    \mu^2_1 \in \left\{\frac{-4 \alpha ^2 \lambda +3 \alpha  \lambda +\left(-\sqrt{\alpha }\right) \lambda }{16 \alpha ^2-8 \alpha +1},\frac{-4 \alpha ^2 \lambda +3 \alpha  \lambda +\sqrt{\alpha } \lambda }{16 \alpha ^2-8 \alpha +1}\right\}.
\end{align}

First notice that if $\lambda = 0$, both expressions give $\mu_1 = 0$.  Let us assume now that $\lambda > 0$. If we maximize the first expression with respect to $\frac{1}{4}< \alpha < 1$, we conclude that its value is always smaller than $-((3 \lambda)/16) < 0$, which implies that it is not a valid solution in the range $\mu^2_1 \geq 0$.
If we minimize the second expression with respect to $\frac{1}{4}< \alpha < 1$, we conclude that its value is always non-negative, which implies it is a valid solution in the range $\mu^2_1 \geq 0$. We therefore conclude that the  second expression is the only stationary point of $L$ in the range $\mu^2_1 \geq 0$ whether $\lambda = 0$ or not. 

Given that this is the only stationary point of $L$ in the range $\mu^2_1 \geq 0$, and given that the derivative $\frac{{\rm d} L}{{\rm d} (\mu_1^2)} \leq 0$ at $\mu_1 = 0$ (which can be checked via substitution), we conclude that the critical point must be a global minimum. 

If we substitute the optimal values for $\mu^2_1$ and $\mu^2_{\star} = 0$ in $L$, we get $L =\left(4 \sqrt{\alpha }-1 - 3 \alpha \right) {F_1}^2+(1-\alpha) {F_s}^2$.

{\bf Case when $\psi_1 \neq 1$:}

In this case $\frac{{\rm d} L}{{\rm d} (\zeta^2)} = 
\frac{(\alpha -1) {F_1}^2 (x+1)^2}{{\zeta}^4 \left(4 {\psi_1}-(x+1)^2\right)}$. 
As before, we start from knowing that the minimizer cannot be at the boundary of the domain. Therefore, since $x\neq1$, the only way that $\frac{{\rm d} L}{{\rm d} (\zeta^2)} = 0$ is if $\zeta^2 = \infty$, which corresponds to $\mu_{\star} = 0$. 

If we substitute $x = \frac{1+\omegaRRR}{\omegaRRR-1}$ into the left hand side of the last equality in \eqref{eq:theor_spe_cas_3_gen_problem}, namely, $\mu_1^2 (-1 + 2 \psi_1 - x) (-1 + x) +2  \lambda \psi_1 (1 + x)$, and let $\mu_{\star} \to 0$, we get $0$, which confirms the condition on $x$ when $\psi_1 \neq 1$.

We take $\zeta \to \infty$ in $L$ to obtain,
\begin{align}
    &L = {F_1}^2 \left(\frac{{\psi_1} (x-1)^2}{4 {\psi_1}-(x+1)^2}+\alpha  x\right)-(\alpha -1) {F_\star}^2, \\
    & \frac{{\rm d} L}{{\rm d} x} = {F_1}^2 \left(\alpha +\frac{4 {\psi_1} (x-1) (2 {\psi_1}-x-1)}{\left((x+1)^2-4 {\psi_1}\right)^2}\right),\\
    & \frac{{\rm d}^2 L}{{\rm d} x^2} = \frac{8 {F_1}^2 {\psi_1} \left(4 {\psi_1}^2+{\psi_1} (3 (x-2) x-5)-x^3+3 x+2\right)}{\left(4 {\psi_1}-(x+1)^2\right)^3}
    \label{eq:th_special_case_3_der_O_der_x}.
\end{align}
If we minimize $\frac{{\rm d}^2 L}{{\rm d} x^2}$ over $x \in [-1 , \min(1,-1+2\psi_1)]$ and $\psi_1 \geq 0$, we obtain $F^2_1/8 > 0$. This shows that our objective $L$ is strictly convex in the range $x \in [-1 , \min(1,-1+2\psi_1)]$ and hence there is only one solution to $\frac{{\rm d} L}{{\rm d} x} = 0$. 

The rational function $\frac{{\rm d} L}{{\rm d} x}$ has a denominator which is zero only if $x = -1 - 2\sqrt{\psi_1}$ or $x = -1 + 2\sqrt{\psi_1}$, both of which are outside the valid range for $x$. Hence the unique solution to $\frac{{\rm d} L}{{\rm d} x} = 0$ comes from the zeros of the numerator of $\frac{{\rm d} L}{{\rm d} x}$. This numerator is (a constant times) a polynomial in $x$ whose coefficients are described in Theorem \ref{th:special_case_3_optimal_mu_values}. 
\end{proof}

\section{Experiment on a real dataset} \label{app:real_exp}

It is tempting to extrapolate our theory to practice. The scope of validity of our claims is rigorously stated in our theorems' assumptions and one should be cautions not to  claim  their applicability beyond this scope. 
In particular, we are not attempting to improve on existing empirical techniques to design AFs but rather seek a better understanding of an already popular model, the RFR model. Namely, we want to understand the effect that using optimal AFs has on the RFR model.
Within the context of designing good, or optimal, AFs for practical settings with empirical/heuristic methods we refer the reader to Section \ref{sec:related_work}.
Nonetheless, here we test some of our more general conclusions on real data. This appendix is referenced in the main text in Section \ref{sec:important_observations}.
In this section, the data, and the fact that we do not work with infinite dimensions, are the only deviations from our theoretical setup. In particular, we work with an RFR model.

We use the MNIST data \cite{mnistdataset} to train an RFR model that approximates a function $f$, our ground truth object, defined as follows. For a given digit image $x$ with class $c \in \{0,1,\dots,9\}$, we define $f(x) = -5 + c/9$. Note that in the RFR model we are working with regressions, not classification. 
The MNIST data set has input dimensions $d = 28\times 28 = 784$. For the test set we use $10000$ random samples.

In Figure \ref{fig:num_exp_1} we plot the test error $\mathcal{E}$ has a function of $\psi_1/\psi_2 = N/n$ when we have $n = 4000$ train samples and when the number of features $N$ ranges from $1$ to $14250$. 
Training is done with $\lambda = 10^{-7}$.
We do so in three different settings: (1) the AF is a fixed linear function; (2) the AF is a fixed ReLU; (3) the AF is a numerically optimized linear function. This AF is optimized as follows. For each value of $\psi_1/\psi_2$, we run a Bayesian optimization subroutine that minimizes the test error across all possible linear AFs. We note that, despite the fact that with a linear AF our model is linear, optimizing the test error via linear AFs is different from optimizing the weights in the second layer during training. In this third setting, for each value of $\psi_1/\psi_2$, we are working with a different AF, which is why we use the set notation $\{\cdot\}$ around $\sigma$ in Figure \ref{fig:num_exp_1}.
\begin{figure}[h!]
  \centering
  \includegraphics[width=0.6\linewidth]{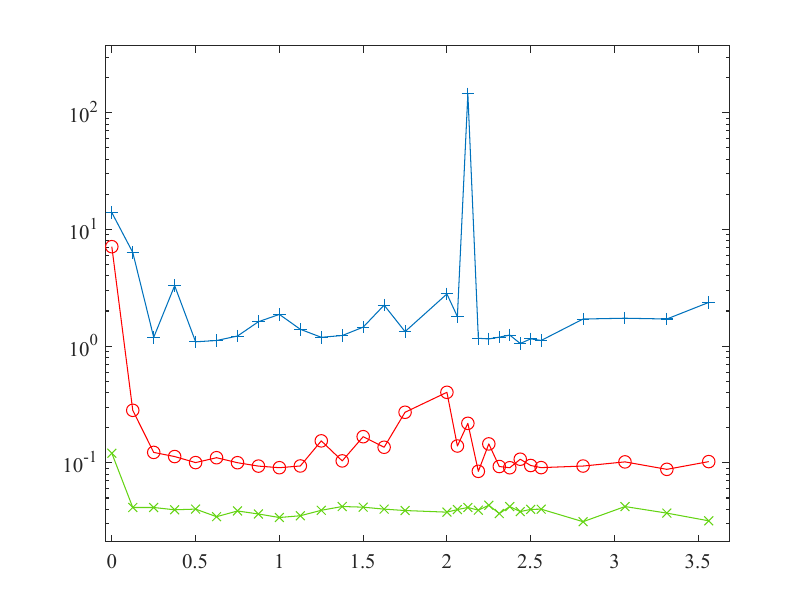}
    \put(-155, 95){\small{$\sigma_{\text{fixed linear}}$}}
    \put(-155, 55){\small{$\sigma_{\text{fixed ReLU}}$}}
    \put(-155, 25){\small{$\{\sigma_{\text{optimized}}\}$}}
    \put(-125, 1){\small{$\psi_1 / \psi_2$}}
    \put(-230, 70){\small{\rotatebox{90}{Test error, $\mathcal{E}$}}}
  \caption{\small Learning a function from the MNIST data set also produces a double descent curve, i.e. the test error decreases as the model's complexity increases, then it increases until the \emph{interpolation threshold}, which is around $\psi_1/\psi_2 = 2$, and then it decreases against past the \emph{interpolation threshold}.
  By optimizing the AF this phenomenon disappears. The meaning of this figure is related to the meaning of Figure \ref{fig:important_observations}-(A) in the main text.
  }
  \label{fig:num_exp_1}
\end{figure}

We observe that we see a double descent curve phenomenon also for MNIST. This was previously known \cite{Mikhail19}, as it was also previously known that double descent curves appear for more complex data sets and neural architectures, e.g. \cite{nakkiran2021deep}.
Unlike for the RFR theory, the interpolation threshold is not at $\psi_1/\psi_2 = 1$, but it around $\psi_1/\psi_2 = 2$. 
In this practical setting, and consistently with what we stated in our main observations for our theoretical setting, using different AFs affects the double descent curve phenomenon. In particular, using linear optimal AFs (one for each $\psi_1/\psi_2$) can beat using a single ReLU function and seems to destroy the double descent curve phenomenon. 

The code to produce Figure \ref{fig:num_exp_1} is
in the following Github link: \url{https://github.com/Jeffwang87/RFR_AF}. This code is also available in the supplementary zip file provided.
To generate the plot run the file named \texttt{RunMeToGenerateFigure_4.m}.
It runs using Matlab 2020b. We ran it using a MacBook Pro with 2.6 GHz 6-Core Intel Core i7 and 32 GB 2667 MHz DDR4. In this machine it takes about 10 hours to run.

In Figure \ref{fig:num_exp_2} we plot the test error $\mathcal{E}$ has a function of $\lambda$ when $\psi_2 = 10$, when we have $n = \psi_2  d$ train samples, and when the number of features is very large, namely, $N = 10000$.
We do so in two different settings: (1) the AF is a fixed ReLU; (2) the AF is a numerically optimized quadratic function. This AF is optimized as follows. For each value of $\lambda$, we run a Bayesian optimization subroutine that minimizes the test error across all possible quadratic AFs. 

\begin{figure}[h!]
  \centering
  \includegraphics[width=0.6\linewidth]{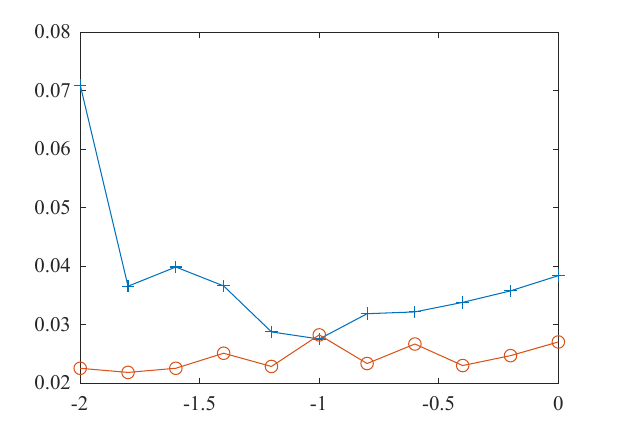}
    \put(-155, 55){\small{$\sigma_{\text{fixed ReLU}}$}}
    \put(-155, 25){\small{$\{\sigma_{\text{optimized}}\}$}}
    \put(-125, 1){\small{$\log_{10}(\lambda)$}}
    \put(-240, 65){\small{\rotatebox{90}{Test error, $\mathcal{E}$}}}
  \caption{\small Learning a function from the MNIST data set using the RFR model can be improved by selecting the appropriate ridge regularization parameter $\lambda$, around $10^{-1}$ in the plot. If we are not careful about this choice, but instead we use an optimized AF, we can achieve similarly good performance. The meaning of this figure is related to the meaning of Figure \ref{fig:important_observations}-(C) in the main text.}
  \label{fig:num_exp_2}
\end{figure}
We observe that choosing an optimized AF and being ``careless'' about the choice of regularization leads to as good results as using a ReLU and optimizing $\lambda$, which is the common practice. 

The code to produce Figure \ref{fig:num_exp_2} is
in the following Github link: \url{https://github.com/Jeffwang87/RFR_AF}. This code is also available in the supplementary zip file provided.
To generate the plot run the file named \texttt{RunMeToGenerateFigure_5.m}.
It runs using Matlab 2020b. We ran it using a MacBook Pro with 2.6 GHz 6-Core Intel Core i7 and 32 GB 2667 MHz DDR4. In this machine it takes about 3 hours to run.

\section{Experimental results for different random features initialization}
\label{app:diff_init}
Our results assume that the features in the RFR model are sampled i.i.d. uniform on the $(d-1)$-dimensional sphere of radius $\sqrt{d}$.

In this section, we numerically examine if two other initializations of $\bTheta$ lead to similar, or different, asymptotic mean squared test error. Specifically, we initialize $\bTheta$ with either Xavier initialization \citep{pmlr-v9-glorot10a} or Kaiming initialization \citep{He_2015}, and compare the resulting error curve ($L$ when $\alpha = 0$) with the curve for the original initialization for the three different regimes in our paper.

If the new error curves agree with the ones for the original initialization for some regime, we take that as  evidence that our conclusion might hold for these initializations and that regime as well. 

In Figure \ref{fig:different_initialization}-(A) and (D), we see that in regime $1$ and when $\alpha=0$, there is a agreement between the three initializations. However, this is not the case for regimes $2$ (Plot (B) and (E)) or $3$ (Plot (C) and (F)). In regime $3$, it is unclear if Xavier and Kaiming initialization agree for large values of $\lambda$.

\begin{figure}[h!]
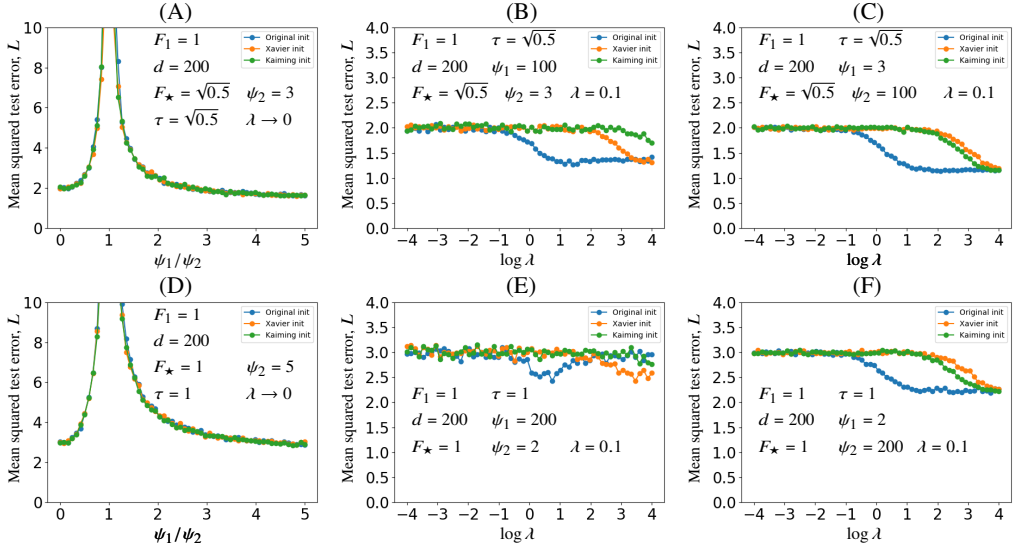

    \centering
\includegraphics[width=0.33\textwidth]{figures/RFR_plot_case_1.pdf}
    \put(-75,90){\footnotesize{ 
     (A)}}
    \put(-75,80){\tiny{$F_1 = 1$}}
    \put(-75,70){\tiny{$d = 200$}}
    \put(-75,60){\tiny{$F_\star = \sqrt{0.5}$}}
    \put(-75,50){\tiny{$\tau= \sqrt{0.5}$}}
    \put(-40,60){\tiny{$\psi_2= 3 $}}
    \put(-40,50){\tiny{$\lambda \to 0$}}
    \put(-75,-3){\tiny{$\psi_1 / \psi_2$}}
    \put(-130, 18){\rotatebox{90}{\tiny{Mean squared test error, $L$}}}
\includegraphics[width=0.33\textwidth]{figures/RFR_plot_case_3.pdf}
    \put(-75,90){\footnotesize{ 
     (B)}}
    \put(-110,80){\tiny{ 
     $F_1 = 1$}}
    \put(-110,70){\tiny{ 
     $d = 200$}}
    \put(-110,60){\tiny{ 
     $F_\star = \sqrt{0.5}$}}
    \put(-80,80){\tiny{ 
     $\tau = \sqrt{0.5}$}}
    \put(-80,70){\tiny{ 
     $\psi_1 = 100$}}
     \put(-75,60){\tiny{ 
     $\psi_2 = 3$}}
    \put(-50,60){\tiny{ 
     $\lambda = 0.1$}}
    \put(-75,-3){\tiny{$\log \lambda$}}
    \put(-135, 18){\rotatebox{90}{\tiny{Mean squared test error, $L$}}}
\includegraphics[width=0.33\textwidth]{figures/RFR_plot_case_2.pdf}
    \put(-75,90){\footnotesize{ 
     (C)}}
    \put(-110,80){\tiny{ 
     $F_1 = 1$}}
    \put(-110,70){\tiny{ 
     $d = 200$}}
    \put(-110,60){\tiny{ 
     $F_\star = \sqrt{0.5}$}}
    \put(-80,80){\tiny{ 
     $\tau = \sqrt{0.5}$}}
    \put(-80,70){\tiny{ 
     $\psi_1 = 3$}}
     \put(-75,60){\tiny{ 
     $\psi_2 = 100$}}
    \put(-40,60){\tiny{ 
     $\lambda = 0.1$}}
    \put(-75,-3){\tiny{$\log \lambda$}}
    \put(-75,-3)
    {\tiny{$\log \lambda$}}
    \put(-135, 18){\rotatebox{90}{\tiny{Mean squared test error, $L$}}} \\
\includegraphics[width=0.33\textwidth]{figures/RFR_plot_case_1_2.pdf}
    \put(-75,90){\footnotesize{ 
     (D)}}
    \put(-75,-3){\tiny{$\psi_1 / \psi_2$}}
     \put(-75,80){\tiny{$F_1 = 1$}}
    \put(-75,70){\tiny{$d = 200$}}
    \put(-75,60){\tiny{$F_\star = 1$}}
    \put(-75,50){\tiny{$\tau= 1$}}
    \put(-40,60){\tiny{$\psi_2= 5 $}}
    \put(-40,50){\tiny{$\lambda \to 0$}}
    \put(-75,-3){\tiny{$\psi_1 / \psi_2$}}
    \put(-130, 18){\rotatebox{90}{\tiny{Mean squared test error, $L$}}}
\includegraphics[width=0.33\textwidth]{figures/RFR_plot_case_3_2.pdf}
    \put(-75,90){\footnotesize{ 
     (E)}}
    \put(-75,-3){\tiny{$\log \lambda$}}
    \put(-110,50){\tiny{ 
     $F_1 = 1$}}
    \put(-110,40){\tiny{ 
     $d = 200$}}
    \put(-110,30){\tiny{ 
     $F_\star = 1$}}
    \put(-80,50){\tiny{ 
     $\tau = 1$}}
    \put(-80,40){\tiny{ 
     $\psi_1 = 200$}}
     \put(-80,30){\tiny{ 
     $\psi_2 = 2$}}
    \put(-50,30){\tiny{ 
     $\lambda = 0.1$}}
    \put(-135, 18){\rotatebox{90}{\tiny{Mean squared test error, $L$}}}
\includegraphics[width=0.33\textwidth]{figures/RFR_plot_case_2_2.pdf}
    \put(-75,90){\footnotesize{ 
     (F)}}
    \put(-110,50){\tiny{ 
     $F_1 = 1$}}
    \put(-110,40){\tiny{ 
     $d = 200$}}
    \put(-110,30){\tiny{ 
     $F_\star = 1$}}
    \put(-80,50){\tiny{ 
     $\tau = 1$}}
    \put(-80,40){\tiny{ 
     $\psi_1 = 2$}}
     \put(-80,30){\tiny{ 
     $\psi_2 = 200$}}
    \put(-50,30){\tiny{ 
     $\lambda = 0.1$}}
    \put(-75,-3)
    {\tiny{$\log \lambda$}}
    \put(-135, 18){\rotatebox{90}{\tiny{Mean squared test error, $L$}}}  
\caption{\footnotesize{Plot (A), (B), and (C) are generated under the condition when $F_1 = 1$, $d= 200$, $F_\star=\sqrt{0.5}$, and $\tau = \sqrt{0.5}$. Plot (D), (E), and (F) are generated under the condition when $F_1 =1$, $d= 200$, $F_\star=1$, and $\tau = 1$. (A) shows the error curve for regime 1 (ridgeless-limit regime) when $\lambda \to 0$ and $\psi_2 = 3$. (B) shows the error curve for regime 2 (over-parameterized regime) when $\psi_1 = 100$, $\psi_2 = 3$, and $\lambda = 0.1$. (C) shows the error curve for regime 3 (large-sample regime) when $\psi_1 = 3$, $\psi_2 = 100$, and $\lambda = 0.1$. (D) shows the error curve for regime 1 (ridgeless-limit regime) when $\lambda \to 0$ and $\psi_2 = 5$. (E) shows the error curve for regime 2 (over-parameterized regime) when $\psi_1 = 200$, $\psi_2 = 2$, and $\lambda = 0.1$. (F) shows the error curve for regime 3 (large-sample regime) when $\psi_1 = 2$, $\psi_2 = 200$, and $\lambda = 0.1$.} 
    }
\label{fig:different_initialization}
\end{figure}

\end{document}